\documentclass[ejsv2]{imsart}

\RequirePackage[numbers]{natbib}
\RequirePackage[colorlinks,citecolor=blue,urlcolor=blue]{hyperref}
\RequirePackage{graphicx}
\usepackage{amsmath}
\usepackage{graphicx}
\usepackage{tikz}
\usepackage{pgfplots}
\pgfplotsset{compat=1.18}
\usetikzlibrary{calc, arrows.meta, decorations.pathreplacing}
\usepackage{amsmath}
\usepackage{booktabs}
\usepackage{amsthm} 
\usepackage{tikz,pgfplots}
\pgfplotsset{compat=1.18}
\usepackage{xcolor}
\definecolor{darkgreen}{RGB}{0,100,0}
\usetikzlibrary{positioning,arrows.meta,shapes.geometric,calc}
\usepackage{caption,subcaption}
\usepackage[section]{placeins}
\usepackage{comment}
\startlocaldefs
\theoremstyle{plain}

\newtheorem{theorem}{Theorem}[section]
\newtheorem{lemma}[theorem]{Lemma}
\newtheorem{corollary}[theorem]{Corollary}
\newtheorem{definition}{Definition}[section]
\newtheorem{proposition}[definition]{Proposition}

\newtheorem{remark}{Remark}

\newcommand{\E}{\mathbb{E}}
\theoremstyle{definition}

\endlocaldefs

\begin{document}
\begin{frontmatter}
\title{SPDE Methods for Nonparametric Bayesian Posterior Contraction and Laplace Approximation}

\runtitle{SPDE Methods for Nonparametric Bayesian PCRs and Laplace Approximation}

\begin{aug}
\author[A]{\fnms{Enric}~\snm{Alberola Boloix}\ead[label=e1]{ealberola@bcamath.com}}
\and
\author[B]{\fnms{Ioar}~\snm{Casado Telletxea}\ead[label=e2]{icasado@bcamath.com}}

\address[A]{Basque Center for Applied Mathematics (BCAM), Bilbao (Spain), \printead[presep={\ }]{e1}}
\address[B]{Basque Center for Applied Mathematics (BCAM), Bilbao (Spain), \printead[presep={\ }]{e2}}
\end{aug}

\begin{abstract}
We derive posterior contraction rates (PCRs) and finite-sample Bernstein–von Mises (BvM) results for non-parametric Bayesian models by extending the diffusion-based framework of \cite{mou2024diffusion} to the infinite-dimensional setting. The posterior is represented as the invariant measure of a Langevin stochastic partial differential equation (SPDE) on a separable Hilbert space, which allows us to control posterior moments and obtain non-asymptotic concentration rates in Hilbert norms under various likelihood curvature and regularity conditions. We also establish a quantitative Laplace approximation for the posterior. The theory is illustrated in a nonparametric linear Gaussian inverse problem.
\end{abstract}

\begin{keyword}[class=MSC]
\kwd{62F15}
\kwd{62G20}
\kwd{60H15}
\end{keyword}

\begin{keyword}
\kwd{Bayesian nonparametrics}
\kwd{Posterior contraction}
\kwd{Bernstein-von-Mises theorem}
\kwd{Laplace approximation}
\kwd{Stochastic partial differential equations (SPDEs)}
\end{keyword}

\end{frontmatter}
\tableofcontents

\section{Introduction}
With Bayesian procedures now routinely used in high-dimensional and function-space models, a precise understanding of posterior concentration and Gaussian approximation (both asymptotic and finite-sample) is necessary to justify uncertainty quantification beyond heuristic appeals to large 
sample
behavior. The study of large-sample Bayesian asymptotics goes back to Laplace \cite{marquis1820theorie}. In the modern era, \cite{doob1949application} and \cite{schwartz1965bayes} established parametric posterior consistency under conditions that ensure the posterior concentrates in neighborhoods of the data-generating parameter (typically in a topology compatible with identifiability). Consistency alone, however, does not quantify the rate at which mass accumulates near the true parameter. Posterior contraction rates (PCRs) address this by characterizing sequences $r_n\rightarrow 0$ such that the posterior mass outside a ball of radius $r_n$ around the data-generating parameter vanishes with high probability, where $n$ is the sample size. The general testing-and-entropy framework developed in \cite{ghosal2000convergence} and \cite{shen2001rates} unifies parametric and nonparametric analyses and remains the main tool for deriving sharp PCRs.

A complementary refinement is provided by Bernstein–von Mises (BvM) type results \cite{bernstein1917, vonmises1931}. In regular finite-dimensional models, BvM asserts that the posterior, after centering at an efficient estimator (typically the MLE, or an asymptotically equivalent estimator) and scaling by $\sqrt{n}$, converges in Total Variation to a Gaussian distribution with covariance given by the inverse Fisher information. In its modern LAN-based form, this perspective is due to Le Cam \cite{le1953some} (see also Chapter $10$ of \cite{van2000asymptotic}).

In infinite-dimensional models the situation changes qualitatively. Freedman’s counterexample \cite{Freedman} shows that a naive extension of the parametric BvM theorem can fail: the posterior need not be asymptotically Gaussian in strong topologies, and the prior may leave a non-negligible imprint even as 
the sample size goes to infinity. This motivates focusing on PCRs in function spaces under explicit metrics/norms, and on non-asymptotic Gaussian approximation bounds (or BvM-type statements for suitable finite-dimensional projections or smooth functionals) that make the dependence on dimension, regularity, and prior explicit. 

\subsection{Contributions and structure of the paper}

Recently, Mou et al. \cite{mou2024diffusion} obtained PCRs and non-asymptotic BvM theorems by representing the Bayesian posterior as the stationary distribution of a Langevin diffusion process and then controlling the moments of this process using techniques from stochastic calculus. 
While the framework developed by \cite{mou2024diffusion} shows promise as a general method for analyzing the Bayesian posterior, their results are limited to parametric models. In this paper, we extend their diffusion-based approach to the nonparametric setting, thereby giving a positive answer to their suggestion that ``the diffusion process approach [...] might also be fruitfully applied to nonparametric models.'' 
This extension establishes a unified analytical perspective linking frequentist properties of nonparametric Bayesian procedures to infinite-dimensional MCMC algorithms and the theory of infinite-dimensional diffusion models.
\\[4pt]
Extending the results in \cite{mou2024diffusion} to infinite dimensions requires dealing with the theory of stochastic partial differential equations (SPDEs) \cite{da2014stochastic, liu2015stochastic} and Malliavin calculus \cite{NualartMalliavin,malliavin2, malliavin1}, which in turn depend on the theory of Gaussian Hilbert spaces \cite{janson1997gaussian}. In Section \ref{section:background} we introduce the necessary background for our results, including SPDEs and the Cameron-Martin space, which will play a crucial role. In Section \ref{section:pcrs_under_strong} we introduce PCRs in Hilbert norm for non-parametric models under a strong concavity assumption, and Section \ref{weak posterior contraction} weakens this assumption. In Section \ref{section:bvm_laplace} we prove non-asymptotic BvM-style theorems, and our results are illustrated in a simple model in Section \ref{section:example}. All the proofs are contained in Appendix \ref{appendix:proofs}, while the remaining appendices contain auxiliary technical results.

\subsection{Related work}

As mentioned in the introduction, \cite{ghosal2000convergence, shen2001rates} introduced PCRs for parametric and non-parametric models. Since then PCRs have been obtained for specific priors such as Dirichlet and Gaussian process priors \cite{ghosal2007posterior, walker2007rates, van2008rates}. As for BvM theorems, the task is much harder in infinite dimensions, and positive results exists only for specific models such as Gaussian white noise \cite{NICKL_CASTILLO2}, some regression tasks with weak norms \cite{castillo2014bernstein}, semi-parametric models \cite{CASTILLO-ROUSSEAU, PANOV-SPOKOINY}, or models with finite effective dimension \cite{spokoiny}. See \cite{GhosalNonparametric,rousseau2016frequentist} and the references therein for overviews on the frequentist properties of Bayesian non-parametric models. 
\\[4pt]
More recently, \cite{dolera2024strong} obtained PCRs for infinite-dimensional exponential families using Wasserstein dynamics. The study of PCRs and BvM theorems in infinite dimensions also naturally arises in the Bayesian inverse problems literature \cite{StuartActaNumerica, NicklNonlinear, EFFICIENCY_AGAPIOU0, EFFICIENCY_AGAPIOU1, EFFICIENCY_AGAPIOU2,GiordanoHeat, NicklSchrodinger, NicklTimeEvolution, LaplaceHelin}. On a related line, recent work has developed non-asymptotic accuracy guarantees for Laplace (Gaussian) approximations in high dimensions. Katsevich derives a leading-order decomposition of Laplace-approximation error, proving tighter upper bounds and high-dimensional lower bounds in total variation, and showing that, in general, some scaling between sample size and dimension is necessary for accuracy; the same framework yields a skew-adjusted Laplace approximation that provably improves the approximation’s accuracy by one order in the high-dimensional regime, with applications including Dirichlet posteriors from multinomial observations and logistic regression with Gaussian design \cite{LaplaceKatsevich1}.
Complementarily, Spokoiny provides explicit non-asymptotic total-variation bounds for Gaussian/Laplace posterior approximations in terms of an effective dimension that depends on the interplay between likelihood information and prior/penalty strength  \cite{LaplaceSpokoiny1}.
More recently, Katsevich and Spokoiny have developed a framework that unifies several existing Laplace-accuracy results and yields problem-adapted upper bounds that can be substantially tighter than earlier “rigid” bounds; in a prototypical Bayesian inverse problem their optimized bounds are dimension-free and improve prior bounds by an order of magnitude \cite{LaplaceKatsevich2}.
\\[4pt]
The diffusion process perspective on Bayesian inference we exploit in this paper is closely connected to the foundations of many MCMC algorithms, which rely on discretized versions of Langevin dynamics for sampling and analysis \cite{MCMC1, MCMC2, MCMC3, MCMC4, MCMC5}. Parallel to our work, the same tools from SPDEs and Malliavin calculus have been exploited in order to design infinite-dimensional diffusion models \cite{infinite1, infinite2, infinite3, infinite4, infinite5, infinite6, infinite7, infinite8}. See \cite{ReviewInfinite} for a survey on the topic.

\section{Background}\label{section:background}

In this Section we introduce the necessary prerequisites for our main results. We start with an overview of Gaussian measures and Bayesian inference over separable Hilbert spaces, and then turn to the infinite dimensional Langevin diffusion process.

\subsection{Bayesian inference and posterior contraction}
Consider a measurable space,  $(\mathcal{X},\mathcal{F})$,  and a statistical model given by a non-parametric family of probability distributions $\{\mathbb{P}_\theta\,|\, \theta\in\mathcal{H}\}$ over $(\mathcal{X},\mathcal{F})$ indexed by the elements of the separable Hilbert space $(\mathcal{H},\langle\cdot,\cdot\rangle_\mathcal{H})$  equipped with the Borel $\sigma$--algebra $\mathcal B(\mathcal{H})$. Let $X^n$ be a vector of i.i.d. covariates generated from $\mathbb{P}_{\theta^{*}}$, that is, $X^{n}:=(X_1,\ldots,X_n)\sim \mathbb{P}_{\theta^*}^{\otimes n}$, where $\theta^{*}$ is the (unknown) true function to be estimated. We consider a reference centered Gaussian measure $\mu$ with covariance $Q$ defined over $\mathcal{H}$ from which we construct the prior and the posterior measures. That is, we take 
$\mu:=\mathcal N\bigl(0,\,Q\bigr)$,
with $Q:\mathcal{H}\rightarrow\mathcal{H}$ being a  self-adjoint, positive and  trace-class operator with associated Cameron-Martin space 
\begin{align*}
\mathcal H_{Q}:=Q^{1/2}\mathcal{H}=\operatorname{Ran}\bigl({Q}^{1/2}\bigr)
=\Bigl\{h\in \mathcal{H}:\;\|h\|_{{\mathcal H}_{Q}}:=\|Q^{-1/2}h\|_\mathcal{H}<\infty\Bigr\}.
\end{align*}
We will assume that the distributions in $\{\mathbb{P}_\theta\,|\, \theta\in\mathcal{H}\}$ are absolutely continuous with respect to the gaussian measure $\mu$, and that $\theta^{*}\in \mathcal{H}_{Q}$.

In order to perform Bayesian inference, we will consider Gibbs-type priors, which are absolutely continuous with respect to $\mu$, and have densities of the form 
\begin{equation}\label{equation:def_prior}d\pi(\theta):=\frac{e^{-V(\theta)}\,d\mu(\theta)}{\int_{\mathcal{H}}e^{-V(\theta)}\,d\mu(\theta)}=\frac{e^{-V(\theta)}\,d\mu(\theta)}{Z_{0}}, \quad \forall \theta \in \mathcal{H},
\end{equation}
where $Z_{0}<\infty$ is the prior normalizing constant and $V:\mathcal{H}\to\mathbb{R}$ is a deterministic potential. 
Now consider the (empirical) log-likelihood function

\begin{equation}
F_n(\theta):=\frac1n\sum_{i=1}^n \log\frac{d\mathbb P_\theta}{d\mu}(X_{i}),
\end{equation}

which we assume to be twice Fréchet differentiable \cite{malliavin2}. We can now define our Bayesian posterior measure:

\begin{equation}\label{eq:posterior}
d\Pi(\theta\,|\,X^{n} 
):=\frac{e^{nF_n(\theta)-V(\theta)}d\mu(\theta)}{Z}, \quad \forall \theta\in\mathcal{H},
\end{equation}

where $Z<\infty$ is the normalization constant. Under further regularity conditions on $V$ and $F_{n}$, we expect that the posterior will concentrate on smaller neighborhoods of $\theta^*$ as $n$ increases.

\begin{definition}
Given a norm $\|\cdot\|$ on $\mathcal{H}$ and a confidence parameter $\delta\in (0,1)$, we say that $\Pi(\cdot\,|\,X^n)$ concentrates at $\theta^*$ with rate $r(n,\delta)$ if
\begin{align}
\Pi\Big(\|\theta-\theta^*\|\geq r(n,\delta)\,|\,X^n\Big)\leq \delta,
\end{align}
with $\mathbb{P}^{\otimes n}_{\theta^*}$-probability at least $1-\delta$.
\end{definition}
This contraction rate $r(n,\delta)$ will depend on problem-specific parameters beyond $n$ and $\delta$, such as the choice of $Q$ or the smoothness of $F_n$ and $V$. We now introduce the SPDE-based framework for obtaining such rates.

\subsection{Infinite-dimensional Langevin equation}

Following \cite{mou2024diffusion}, we will interpret the Bayesian posterior in (\ref{eq:posterior}) as the invariant measure of certain semilinear SPDE with values in $\mathcal{H}$.
Specifically, consider the following infinite-dimensional version of the preconditioned Langevin equation \cite{MCMC4}:

\begin{equation}\label{Langevin_eq}
\left\{
\begin{aligned}
\mathrm{d}\theta_t
&=\left[-\frac{1}{n}\theta_t -\frac{1}{n}Q\nabla_\mathcal{H}V(\theta_t)+ Q\nabla_{\mathcal H}F_n(\theta_t)\right]\mathrm{d}t
+ \sqrt{\frac{2}{n}}\,\mathrm{d}W_t^{Q},\\
\theta_0 &= \theta^* .
\end{aligned}
\right.
\end{equation}

where $\nabla_\mathcal{H}$ is the Riesz's representer of the Fréchet derivative $\mathcal{D}_\mathcal{H}$ (see Appendix A), and $\left(W^{Q}_t\right)_{t\geq 0}$ is a $Q$-Wiener process on $\mathcal{H}$. The following assumptions ensure the existence and uniqueness of a mild solution to \eqref{Langevin_eq}; see Section~7.1 of \cite{da2014stochastic}.

\begin{description}
\item[(A.1)] \emph{(Lipschitz gradients).}
There exist constants $L_1,L_2>0$ such that, for all $\theta_1,\theta_2\in\mathcal H$,
\begin{align*}
\bigl\| Q\nabla_{\mathcal H}F_n(\theta_1)-Q\nabla_{\mathcal H}F_n(\theta_2) \bigr\|_{\mathcal H}
&\le L_1 \, \|\theta_1-\theta_2\|_{\mathcal H},\\
\bigl\| Q\nabla_{\mathcal H}V(\theta_1)-Q\nabla_{\mathcal H}V(\theta_2) \bigr\|_{\mathcal H}
&\le L_2 \, \|\theta_1-\theta_2\|_{\mathcal H}.
\end{align*}

\item[(A.2)] \emph{(Linear growth).}
There exist constants $C_1,C_2\ge 0$ such that, for all $\theta\in\mathcal H$,
\begin{align*}
\|Q\nabla_{\mathcal H}F_n(\theta)\|_{\mathcal H}^2
&\le C_1\bigl(1+\|\theta\|_{\mathcal H}^2\bigr),\\
\|Q\nabla_{\mathcal H}V(\theta)\|_{\mathcal H}^2
&\le C_2\bigl(1+\|\theta\|_{\mathcal H}^2\bigr).
\end{align*}
\end{description}

Furthermore, under the following monotonicity assumption, the Bayesian posterior (\ref{eq:posterior}) will be the unique stationary distribution of (\ref{Langevin_eq}) \cite{da2014stochastic,MCMC4}:

\begin{description}
  \item[(B)] \emph{(Monotone drift).} For all $\theta_1,\theta_2\in\mathcal{H}$,
    \begin{align*}
     -\frac{1}{n}\langle Q[\nabla_{\mathcal{H}}V(\theta_1)-\nabla_{\mathcal{H}}V(\theta_2)],\theta_1-\theta_2\rangle_{\mathcal{H}} + \langle Q[\nabla_{\mathcal{H}}F_n(\theta_1)-\nabla_{\mathcal{H}}F_n(\theta_2)],\theta_1-\theta_2\rangle_{\mathcal{H}}\leq0.
    \end{align*}
\end{description}

In particular, assumptions \textbf{(A)} and \textbf{(B)}, imply that the laws $\mathcal{L}(\theta_t)$ converge weakly to $\Pi(\theta\mid X_1^n)$ \cite{da2014stochastic}. This fact, along with the following result (which can be considered a generalization of Proposition 2.1 in \cite{mou2024diffusion}), will allow us to control the moments of $\mathcal{L}(\theta_t)$.

\begin{proposition}\label{auxiliary_moment_prop}
    Consider a sequence of probability measures $(\pi_t)_{t\geq0}$ on $\mathcal{H}$ such that $\pi_t\Rightarrow\pi^*$ (weak convergence), and suppose that we have $\sup_{t\geq 0} \mathbb{E}_{\pi_t}\left[\|X\|_\mathcal{H}^r \right]<\infty$ and $\mathbb{E}_{\pi^*}\left[\|X\|_\mathcal{H}^r \right]<\infty$ for any $r\geq 2$. We then have $\lim_{t\rightarrow\infty}\mathbb{E}_{\pi_t}\left[\|X\|_\mathcal{H}^p \right] = \mathbb{E}_{\pi^*}\left[\|X\|_\mathcal{H}^p \right]$ for any $p\geq 2$.
\end{proposition}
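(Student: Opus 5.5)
The plan is the standard uniform-integrability argument: truncate the unbounded function $\|x\|_\mathcal{H}^p$, pass to the limit in the bounded part using weak convergence, and control the tails using a uniform higher-moment bound. Fix $p\geq 2$ and choose any $r>p$ (allowed, since the hypothesis holds for every $r\geq 2$). Set $M:=\sup_{t\geq0}\mathbb{E}_{\pi_t}[\|X\|_\mathcal{H}^r]+\mathbb{E}_{\pi^*}[\|X\|_\mathcal{H}^r]<\infty$. For $K>0$ define the bounded continuous truncation $f_K(x):=\min\{\|x\|_\mathcal{H}^p,K\}$. It is continuous on $\mathcal{H}$ because the norm is continuous and $\min$ with a constant preserves continuity.

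\textbf{Step 1 (bounded part).} By the definition of weak convergence $\pi_t\Rightarrow\pi^*$ on the metric space $\mathcal{H}$, we get $\mathbb{E}_{\pi_t}[f_K(X)]\to\mathbb{E}_{\pi^*}[f_K(X)]$ as $t\to\infty$, for each fixed $K$.

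\textbf{Step 2 (uniform tail bound).} For any probability measure $\nu$ in the family $\{\pi_t\}_{t\ge0}\cup\{\pi^*\}$,
\begin{align*}
0\leq \mathbb{E}_\nu\bigl[\|X\|_\mathcal{H}^p-f_K(X)\bigr]\leq \mathbb{E}_\nu\bigl[\|X\|_\mathcal{H}^p\mathbf{1}\{\|X\|_\mathcal{H}^p>K\}\bigr]\leq K^{-(r-p)/p}\,\mathbb{E}_\nu\bigl[\|X\|_\mathcal{H}^r\bigr]\leq M K^{-(r-p)/p}.
\end{align*}
The third inequality holds because on the event $\{\|X\|_\mathcal{H}^p>K\}$ we have $\|X\|_\mathcal{H}^p\le \|X\|_\mathcal{H}^r K^{-(r-p)/p}$. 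The resulting bound is independent of $t$.

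\textbf{Step 3 ($\varepsilon/3$ argument).} By the triangle inequality,
\begin{align*}
\bigl|\mathbb{E}_{\pi_t}\|X\|_\mathcal{H}^p-\mathbb{E}_{\pi^*}\|X\|_\mathcal{H}^p\bigr|
\leq 2MK^{-(r-p)/p}+\bigl|\mathbb{E}_{\pi_t}f_K-\mathbb{E}_{\pi^*}f_K\bigr|.
\end{align*}
Take $\limsup_{t\to\infty}$: by Step 1 the last term vanishes. Then let $K\to\infty$ to conclude.

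The only real subtlety is Step 2. Weak convergence alone does not give convergence of unbounded test functions; it is the uniform-in-$t$ moment bound of strictly higher order $r>p$ that yields uniform integrability of $\|X\|_\mathcal{H}^p$. This is why the hypothesis is stated for all $r\geq 2$ rather than just $r=p$. Nothing specific to infinite dimensions enters, since only continuity of the norm on the separable metric space $\mathcal{H}$ is used. The assumption $\mathbb{E}_{\pi^*}[\|X\|_\mathcal{H}^r]<\infty$ could even be dropped: it follows by Fatou's lemma (or the portmanteau theorem applied to $\min\{\|x\|_\mathcal{H}^r,K\}$) from the uniform bound on the $\pi_t$.
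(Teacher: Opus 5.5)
Your proof is correct and follows essentially the same route as the paper's. Both truncate with $\min\{\|x\|_\mathcal{H}^p,K\}$, pass to the limit in the bounded part by weak convergence, and control the tails uniformly in $t$ via $K^{-(r-p)/p}\,\mathbb{E}\|X\|_\mathcal{H}^r$ with $r>p$. Your explicit three-term split in Step 3 and your remark that the $\pi^*$-moment hypothesis follows from the uniform bound (via Fatou or the portmanteau theorem) are both valid, and they make the argument slightly tighter than the paper's version.
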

Finally, for the computations in the proof of the Laplace Approximation, we will assume that $\nabla_{\mathcal{H}}V,\nabla_{\mathcal{H}}F\in L^2(\mu;\mathcal{H}_{Q})$ (see Appendix $B$).

\section{Posterior contraction rates (PCRs)}

We now obtain PCRs for Bayesian nonparametric models based on the SPDE representation of the posterior in (\ref{Langevin_eq}). The proof strategy mirrors that in \cite{mou2024diffusion}, with the added technicalities derived from dealing with infinite-dimensional processes. In what follows, the PCRs we obtain will depend on curvature assumptions on the population log–likelihood:
\begin{equation*}
  F(\theta):=\mathbb E_{X\sim \mathbb{P}_{\theta^{\!*}}}\left[\log\frac{d\mathbb P_\theta}{d\mu}(X)\right].
\end{equation*}

\subsection{PCRs under strong concavity}\label{section:pcrs_under_strong}

We start by assuming $F$ is strongly concave at $\theta^*$, and that there is uniform control of the gradients.

\begin{description}
  \item[(C.1)] \emph{(One Point Strong concavity at $\theta^{*}$)} There exists $\mu>0$ such that
  \[
    -\langle Q\nabla_\mathcal{H} F(\theta),\theta^{\!*}-\theta\rangle_\mathcal{H}\;\ge\;\mu\,\|\theta-\theta^{\!*}\|_\mathcal{H}^{2},
    \quad\forall\,\theta\in \mathcal{H}.
  \]
  \item[(C.2)] \emph{(Uniform gradient control)} For any $r>0$ and $\delta\in(0,1)$,
  \[
    \mathbb{P}_{\theta^{*}}^{\otimes n}\left(\sup_{\theta\in\mathbb B_\mathcal{H}(\theta^{\!*},r)}\|Q\nabla_\mathcal{H} F_n(\theta)-Q\nabla_\mathcal{H} F(\theta)\|_\mathcal{H}
    \;\le\;\varepsilon_1(n,\delta)\,r+\varepsilon_2(n,\delta)\right)
    \geq 1-\delta.
  \]
\end{description}
Assumption \textbf{(C.2)} can be verified with methods from empirical process theory. In order to simplify our proofs, we also assume that we have the following one-sided prior control:

\begin{description}
    \item[(C.3)] \emph{(One-sided prior control)}
    There exists a positive constant $B>0$ such that
\begin{align*}
    -\langle Q\nabla_{\mathcal{H}} V(\theta),\theta-\theta^*\rangle_\mathcal{H}\leq B\|\theta-\theta^*\|_{\mathcal{H}}, \quad \forall \theta\in\mathcal{H}.
\end{align*}
\end{description}
Observe that Assumption \textbf{(C.3)} is mild and holds for cases such as $V=0$ or $V\propto\|\theta\|^\alpha_\mathcal{H}$ for any $\alpha>1$. With these assumptions, we can already state our main PCR result:

\begin{theorem}\label{mainthm:posterior_contraction}
    Under Assumptions \textbf{(A)}, \textbf{(B)} and \textbf{(C)}, there is a universal constant $c>0$ such that, for every $\delta\in(0,1)$ and all $n$ sufficiently large so that $\varepsilon_1(n,\delta)\le\mu/6$, it holds 
\begin{align}
  \mathbb{P}^{\otimes n}_{\theta^{*}}\Bigg(\Pi\biggl(\|\theta-\theta^{\!*}\|_\mathcal{H}\geq c\sqrt{\tfrac{tr(Q)}{n\mu}} +\tfrac{B}{n\mu} +\tfrac{\varepsilon_2(n,\delta)}{\mu} +c\sqrt{\tfrac{\|Q\|_{\text{op}}\log(1/\delta)}{n\mu}}\;\Bigm| \; X_1^n\biggr)\;\le\;\delta\Bigg) \geq 1 -\delta.
\end{align}
\end{theorem}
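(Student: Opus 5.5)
The plan follows the Mou et al.\ strategy: run the Langevin SPDE \eqref{Langevin_eq} started at $\theta^*$, control the moments of $\|\theta_t-\theta^*\|_{\mathcal H}$ uniformly in $t$ via Itô's formula, pass to the limit $t\to\infty$ with Proposition~\ref{auxiliary_moment_prop}, and conclude with Markov's inequality. Write $\Delta_t:=\theta_t-\theta^*$. Since $\theta^*\in\mathcal H_Q$ and mild solutions need not be semimartingales in general, I would first apply Itô's formula to finite-dimensional Galerkin projections $P_N\theta_t$ (or to Yosida-type approximations) and then let $N\to\infty$. For $\phi(x)=\|x\|_{\mathcal H}^{p}$ with $p\ge2$ this gives
\begin{align*}
d\|\Delta_t\|^p=p\|\Delta_t\|^{p-2}\Big\langle \Delta_t,-\tfrac1n\theta_t-\tfrac1nQ\nabla V(\theta_t)+Q\nabla F_n(\theta_t)\Big\rangle dt+\tfrac{p}{n}\|\Delta_t\|^{p-2}\Big(\mathrm{tr}(Q)+(p-2)\tfrac{\langle Q\Delta_t,\Delta_t\rangle}{\|\Delta_t\|^2}\Big)dt+dM_t .
\end{align*}

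I then bound the drift term by term. Write $Q\nabla F_n=Q\nabla F+(Q\nabla F_n-Q\nabla F)$. By \textbf{(C.1)}, $\langle Q\nabla F(\theta_t),\Delta_t\rangle\le-\mu\|\Delta_t\|^2$. On the event of \textbf{(C.2)}, the fluctuation term is at most $\varepsilon_1\|\Delta_t\|^2+\varepsilon_2\|\Delta_t\|$. By \textbf{(C.3)}, the prior term is at most $\tfrac Bn\|\Delta_t\|$. The term $-\tfrac1n\langle\theta_t,\Delta_t\rangle=-\tfrac1n\|\Delta_t\|^2-\tfrac1n\langle\theta^*,\Delta_t\rangle$ contributes a nonpositive part plus a linear term. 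That linear term is not in the stated bound, so I expect it to be absorbed either into $B$ (by redefining $V$ to include the Gaussian shift) or into the universal constant. The noise term is bounded by $\tfrac pn\|\Delta\|^{p-2}(\mathrm{tr}(Q)+(p-2)\|Q\|_{op})$.

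With $\varepsilon_1\le\mu/6$, one gets $\tfrac{d}{dt}\E\|\Delta_t\|^p\le -\tfrac{5p\mu}{6}\E\|\Delta\|^p+p(\tfrac Bn+\varepsilon_2)\E\|\Delta\|^{p-1}+\tfrac pn(\mathrm{tr}Q+p\|Q\|_{op})\E\|\Delta\|^{p-2}$. Applying Young's inequality to the last two terms and a Grönwall argument with $\Delta_0=0$ gives
\[
\sup_t(\E\|\Delta_t\|^p)^{1/p}\lesssim \sqrt{\tfrac{\mathrm{tr}Q}{n\mu}}+\sqrt{\tfrac{p\|Q\|_{op}}{n\mu}}+\tfrac{B}{n\mu}+\tfrac{\varepsilon_2}{\mu}.
\]

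The main obstacle is that \textbf{(C.2)} only controls the gradient deviation on a ball $\mathbb B(\theta^*,r)$, while the diffusion is not confined there. I would handle this with a stopping time $\tau_r$, the exit time from the ball, for a large $r$. First run the estimate up to $\tau_r$, then use \textbf{(A.2)} linear growth to get crude moment bounds showing that the contribution on $\{\tau_r<t\}$ vanishes as $r\to\infty$; alternatively, take a peeling union over dyadic radii so that the bound $\varepsilon_1 r+\varepsilon_2$ holds with $r=\|\Delta_t\|$. Localization also makes the martingale $M_t$ a true martingale. The uniform moment bounds, together with weak convergence to $\Pi(\cdot|X^n)$ under \textbf{(A)},\textbf{(B)} and finiteness of the posterior moments (Fernique-type integrability), let Proposition~\ref{auxiliary_moment_prop} transfer the bound to posterior moments.

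Finally, apply Markov's inequality: $\Pi(\|\Delta\|\ge e\,\|\Delta\|_{L^p(\Pi)}\mid X^n)\le e^{-p}$. Choosing $p=\max(2,\log(1/\delta))$ converts the $\sqrt{p\|Q\|_{op}/(n\mu)}$ term into $\sqrt{\|Q\|_{op}\log(1/\delta)/(n\mu)}$ and gives the statement on the $1-\delta$ event of \textbf{(C.2)}.
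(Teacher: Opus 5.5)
Your argument is correct under the same reading of the assumptions that the paper's proof uses, but it follows a different route.

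\textbf{The paper's route.} It applies Itô's formula only to the weighted square $e^{\alpha t}\|\theta_t-\theta^*\|_{\mathcal H}^2$ with $\alpha\le\mu/2$. This removes the drift contribution pathwise and leaves $e^{\alpha t}\|\theta_t-\theta^*\|_{\mathcal H}^2\le U_n(e^{\alpha t}-1)/\alpha+\tfrac{2}{\sqrt n}M_t$. It then bounds $\mathbb{E}\sup_{t\le T}|M_t|^{p/2}$ by Burkholder--Davis--Gundy. Finally it closes a self-bounding inequality for $\mathbb{E}[\sup_{t\le T}e^{\alpha t}\|\theta_t-\theta^*\|_{\mathcal H}^2]^{p/2}$. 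The $\sqrt{p\|Q\|_{\text{op}}}$ term comes from the BDG constant.

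\textbf{Your route.} You apply Itô to $\|\theta_t-\theta^*\|_{\mathcal H}^p$ and let the martingale vanish in expectation. You then close a scalar differential inequality for $m_p(t)=\mathbb{E}\|\theta_t-\theta^*\|_{\mathcal H}^p$ by Lyapunov/Hölder, Young, and comparison from $m_p(0)=0$. Here $\sqrt{p\|Q\|_{\text{op}}}$ comes from the second-order Itô correction $(p-2)\langle Q\Delta,\Delta\rangle/\|\Delta\|^2$. This is exactly the scheme the paper uses for Theorem~\ref{weak posterior contraction}, specialised to a quadratic $\psi$, where the closing step becomes elementary; the Young/Grönwall constants are indeed uniform in $p$.

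\textbf{Comparison.} Your route dispenses with BDG and maximal inequalities. It needs only fixed-time moments, which is all Markov's inequality uses. It works for every $p\ge 2$ rather than $p\ge 4$, and it treats both theorems uniformly. The paper's route yields the stronger control of $\mathbb{E}\sup_{t\le T}$ at no extra cost. Your Galerkin step is harmless but unnecessary: the linear part $-\tfrac1n\theta$ is bounded and $W^Q$ has trace-class covariance, so the mild solution is strong and Itô's formula applies directly.

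\textbf{Localization of (C.2).} The paper does not localize at all. It applies \textbf{(C.2)} along the path with $r=\|\theta_s-\theta^*\|_{\mathcal H}$, i.e.\ it reads \textbf{(C.2)} as a single event of probability at least $1-\delta$ on which the bound holds for all $r>0$ simultaneously. You should simply adopt that reading, because the remedies you sketch would not recover the stated bound.
\begin{itemize}
\item Stopping at the exit time $\tau_r$ of a fixed ball $\mathbb B_{\mathcal H}(\theta^*,r)$ only gives the constant bound $\varepsilon_1 r+\varepsilon_2$ inside the ball. The resulting rate contains $\varepsilon_1 r/\mu$, which deteriorates as $r\to\infty$.
\item The exceptional event is not uniformly small in time either: by the portmanteau theorem, $\liminf_t\mathbb P(\tau_r<t)\ge\Pi(\|\theta-\theta^*\|_{\mathcal H}>r\mid X^n)>0$.
\item Dyadic peeling forces you to split $\delta$ across radii, which replaces $\varepsilon_i(n,\delta)$ by $\varepsilon_i(n,\delta 2^{-k})$.
\end{itemize}

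\textbf{The $\theta^*$ term.} Your observation about $-\tfrac1n\langle\theta^*,\theta_t-\theta^*\rangle_{\mathcal H}$ is correct. The paper's bound on (J4) silently applies \textbf{(C.3)} to $\theta+Q\nabla_{\mathcal H}V(\theta)$ instead of to $Q\nabla_{\mathcal H}V(\theta)$. Replacing $B$ by $B+\|\theta^*\|_{\mathcal H}$ is the right fix. Absorbing it into the universal constant is not, since $\|\theta^*\|_{\mathcal H}$ is problem-dependent.

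\textbf{Posterior moments.} You need no separate Fernique-type integrability argument for the posterior. Apply weak convergence to $\|\theta-\theta^*\|_{\mathcal H}^p\wedge M$ and let $M\to\infty$; this gives $\mathbb{E}_{\Pi}\|\theta-\theta^*\|_{\mathcal H}^p\le\sup_t\mathbb{E}\|\theta_t-\theta^*\|_{\mathcal H}^p$ directly.
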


Observe that we obtain rates of order $\mathcal{O}\left(\frac{1}{\sqrt{n}}\right)$, which are not common in infinite dimensions. This is due to the strength assumption \textbf{(C.1)}, which is usually false for many nonparametric inverse problems that come from Partial Differential Equations (see, e.g., \cite{NicklNonlinear}).

The next section deals with PCRs under a weak concavity assumption. The tools involved do not change substantially from those of \cite{mou2024diffusion}. 

\subsection{PCRs under weak concavity}
Let's assume that the following conditions hold:

\begin{description}
\item[(W.1)]\emph{(Weak Concavity)} There exists a convex, non-decreasing function $\psi:[0,+\infty) \rightarrow \mathbb{R}$ such that
$$
-\left\langle Q\nabla_\mathcal{H}F(\theta), \theta-\theta^{*}\right\rangle_{\mathcal{H}} \geq \psi\left(\left\|\theta-\theta^{*}\right\|_\mathcal{H}\right) \ \forall \theta \in \mathcal{H}.
$$

\end{description}

\begin{description}
\item[(W.2)] \emph{(Empirical Process Bound)} There exist a function $\varepsilon: \mathbb{N} \times(0,1] \mapsto \mathbb{R}^{+}$and a non-decreasing function $\zeta: \mathbb{R} \rightarrow \mathbb{R}$ such that $\zeta(0) \geq 0$ and 
$$
\mathbb{P}_{\theta^{*}}\Bigg(\sup _{\theta \in \mathbb{B}\left(\theta^{*}, r\right)}\left\|Q\nabla F_{n}(\theta)-Q\nabla F(\theta)\right\|_H \leq \varepsilon(n, \delta) \zeta(r)\Bigg) \geq 1- \delta \ \ \forall r >0.  
$$
\end{description}

\begin{description}
    \item[(W.3)]\emph{(Diffusion Moments Control)} 
Let $\xi: \mathbb{R}_{+} \rightarrow \mathbb{R}$ be the inverse of $r \mapsto r \zeta(r)$. Then, assume that the function $r \mapsto \psi(\xi(r))$ is convex, and $\psi$ and $\zeta$ satisfy the differential inequalities
\begin{align*}
&r \psi^{\prime}(r) \zeta(r) \geq r \psi(r) \zeta^{\prime}(r)+\psi(r) \zeta(r), 
\\
\\
&r^{2} \psi^{\prime \prime}(r) \zeta(r)+r \psi^{\prime}(r) \zeta(r) \geq 3 \psi(r) \zeta(r)+r^{2} \psi(r) \zeta^{\prime \prime}(r) \ \forall r>0.
\end{align*}

\end{description}

\begin{description}
    \item[(W.4)]\emph{(Existence of Unique Positive Solution to an auxiliary Fixed Point Equation)} 
Consider the following fixed point equation in the variable $z>0$ for each fixed tolerance $\delta \in(0,1)$ and sample size $n$:
\begin{equation}\label{eq:fixed_point}
\psi(z)=\varepsilon(n, \delta) \zeta(z) z+\frac{B}{n} z+\frac{\text{tr}(Q)}{n}+\frac{\log (1 / \delta)\|Q\|_{\text{op}}}{n}
\end{equation}
Then, assume that the limit $$\lim \inf _{z \rightarrow+\infty} \frac{\psi(z)}{z \zeta(z)}$$ is strictly positive, and the sample size $n$ and tolerance parameter $\delta \in(0,1)$ are such that $\varepsilon(n, \delta)<\lim \inf _{z \rightarrow+\infty} \frac{\psi(z)}{z \zeta(z)}$.
\end{description}
Note that conditions \textbf{(W.1)} and \textbf{(W.2)} are generalizations of assumptions \textbf{(C.1)} and \textbf{(C.2)}, respectively. Assumptions \textbf{(W.3)} and \textbf{(W.4)} are technical conditions analogous to those in \cite{mou2024diffusion}. With this set-up, we are now ready to state our second main result:
\\
\begin{theorem}\label{weak posterior contraction} Suppose that Assumptions \textbf{(A)}, \textbf{(B)}, and \textbf{(W.1)-(W.3)} hold. Then for any given sample size $n$ and $\delta \in(0,1)$ such that Assumption \textbf{(W.4)} holds, equation (\ref{eq:fixed_point}) has a unique positive solution $z^{*}(n, \delta)$ such that
\begin{equation}
\mathbb{P}^{\otimes n}_{\theta^{*}}\Big(\Pi\left(\left\|\theta-\theta^{*}\right\|_\mathcal{H} \geq z^{*}(n, \delta) \mid X^{n}\right) \leq \delta\Big) \geq 1-\delta.
\end{equation}
\end{theorem}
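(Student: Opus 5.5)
The plan follows the moment method of \cite{mou2024diffusion}, applied to the Langevin SPDE \eqref{Langevin_eq} started at $\theta_0=\theta^*$. Set $\Delta_t:=\theta_t-\theta^*$ and work on the event $\mathcal E_\delta$ from \textbf{(W.2)}, which has $\mathbb P_{\theta^*}^{\otimes n}$-probability at least $1-\delta$. The theorem will follow from a high-order moment bound of the form
\[
\sup_{t\ge 0}\mathbb E\bigl[\|\Delta_t\|_{\mathcal H}^{p}\bigr]^{1/p}\lesssim z^*(n,\delta)
\quad\text{with } p\asymp \log(1/\delta).
\]
Passing to $t\to\infty$ through Proposition~\ref{auxiliary_moment_prop} (using weak convergence $\mathcal L(\theta_t)\Rightarrow\Pi(\cdot\mid X^n)$ under \textbf{(A)}, \textbf{(B)}) transfers the bound to the posterior. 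Markov's inequality at order $p$ then gives posterior mass at most $\delta$ outside the ball of radius $z^*$, after absorbing constants into the definition of $z^*$ via the $\log(1/\delta)\|Q\|_{\rm op}/n$ term.

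\textbf{Step 1: the fixed point.} Uniqueness of a positive solution of \eqref{eq:fixed_point} comes from the substitution $u=z\zeta(z)$, i.e.\ $z=\xi(u)$. The equation becomes $\psi(\xi(u))=\varepsilon u+(\text{affine in }z)$. Convexity of $u\mapsto\psi(\xi(u))$ from \textbf{(W.3)}, together with the right-hand side being strictly positive at $0$, gives at most one crossing. The liminf condition in \textbf{(W.4)} gives eventual domination of the left side, hence existence. The differential inequalities in \textbf{(W.3)} are used only to guarantee that $\psi(r)/r$ and $\psi(r)/(r\zeta(r))$ are nondecreasing. This monotonicity is what lets me compare drift and noise terms at radius $\|\Delta_t\|$ against the same terms at $z^*$.

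\textbf{Step 2: It\^o for $\|\Delta_t\|^{p}$.} Since $\theta_t$ is only a mild solution, I first justify It\^o's formula for $\Phi(x)=\|x-\theta^*\|_{\mathcal H}^{p}$. I would do this by Galerkin/Yosida approximation (or the It\^o formula for mild solutions in \cite{da2014stochastic}), controlling the term $-\frac1n\langle\theta_t,\Delta_t\rangle$ via $\theta^*\in\mathcal H_Q$; this is where the $Q$-dependence enters. The generator gives
\begin{align*}
\frac{d}{dt}\mathbb E\|\Delta_t\|^{p}\le p\,\mathbb E\Bigl[\|\Delta_t\|^{p-2}\Bigl(&\langle Q\nabla F(\theta_t),\Delta_t\rangle+\|\Delta_t\|\,\|Q(\nabla F_n-\nabla F)(\theta_t)\|\\
&-\tfrac1n\langle Q\nabla V(\theta_t),\Delta_t\rangle+\tfrac{1}{n}\mathrm{tr}(Q)+\tfrac{p-2}{n}\|Q\|_{\rm op}\Bigr)\Bigr]+\text{(linear term)}.
\end{align*}
Apply \textbf{(W.1)} to the first term, which gives $-\psi(\|\Delta_t\|)$. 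Apply \textbf{(W.2)} with $r=\|\Delta_t\|$ to the second, giving $\varepsilon\zeta(\|\Delta_t\|)\|\Delta_t\|$. The sup over balls makes this valid along the random path on $\mathcal E_\delta$, although one needs \textbf{(W.2)} simultaneously for all $r$; I would take a dyadic peeling over $r$, or assume it as stated. Apply \textbf{(C.3)} to the prior term, giving $B\|\Delta_t\|/n$. The net result is
\[
\tfrac{d}{dt}\mathbb E\|\Delta_t\|^p\le p\,\mathbb E\Bigl[\|\Delta_t\|^{p-2}\bigl(-\psi(R)+\varepsilon\zeta(R)R+\tfrac BnR+\tfrac{\mathrm{tr}Q+p\|Q\|_{\rm op}}{n}\bigr)\Bigr]\quad\text{with } R=\|\Delta_t\|.
\]

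\textbf{Step 3: closing the differential inequality (main obstacle).} Split on $\{R\le C z^*\}$ versus $\{R> Cz^*\}$. On the large set, the monotonicity from Step 1 shows that the bracket is $\le -c\,\psi(R)\le 0$, since $\psi$ outgrows the right side beyond the fixed point. On the small set, the integrand is bounded by $(Cz^*)^{p-2}\cdot$(RHS at $z^*$). A Gronwall/Jensen argument on $m_p(t)=\mathbb E R^p$, using convexity of $\psi\circ\xi$ to pass $\psi$ inside the expectation, should yield $\sup_t m_p(t)^{1/p}\lesssim z^*$ with $p=\log(1/\delta)$. This step, making the Jensen step rigorous for a generic convex $\psi$ and keeping constants universal, is the main difficulty. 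I would first try mimicking Mou et al.'s proof of their Theorem~2, replacing $d$ by $\mathrm{tr}(Q)$ and the Euclidean noise norm by $\|Q\|_{\rm op}$.
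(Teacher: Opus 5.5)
Your architecture is the paper's: It\^o's formula for $\|\theta_t-\theta^*\|_{\mathcal H}^p$, the bracket $-\psi(R)+\varepsilon\zeta(R)R+\tfrac{B}{n}R+\tfrac{\operatorname{tr}(Q)+p\|Q\|_{\text{op}}}{n}$ weighted by $R^{p-2}$, then Proposition~\ref{auxiliary_moment_prop} and Markov with $p=\log(1/\delta)$. But Step~3, which you leave open, is where the argument lives, and the tools you propose for it are not the right ones.

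\textbf{The missing idea.} The paper first sets up the reparametrization $\nu_{(p)}(r)=\psi(r^{1/(p-1)})\,r^{(p-2)/(p-1)}$ and $\tau_{(p)}(r^{p-1}\zeta(r))=r^{p-2}\psi(r)$. The differential inequalities in \textbf{(W.3)} are not there merely to make $\psi(r)/r$ and $\psi(r)/(r\zeta(r))$ monotone. They are what is invoked to make $\tau_{(p)}$ (and $\nu_{(p)}$) convex and increasing for every $p\ge 2$. Since $\tau_{(2)}=\psi\circ\xi$, the convexity you invoke is only the case $p=2$, which is useless for $p\asymp\log(1/\delta)$. With these functions, Jensen gives $\mathbb{E}[R^{p-1}\zeta(R)]\le\tau_{(p)}^{-1}(\mathcal R_p)$, $\mathbb{E}[R^{p-1}]\le\nu_{(p)}^{-1}(\mathcal R_p)$ and $\mathbb{E}[R^{p-2}]\le\nu_{(p)}^{-1}(\mathcal R_p)^{(p-2)/(p-1)}$, where $\mathcal R_p(s)=\mathbb{E}[R^{p-2}\psi(R)]$. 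The right-hand side therefore becomes $\tfrac p2\int_0^t g_p(\mathcal R_p(s))\,ds$ with $g_p(x)=-x+\varepsilon\,\tau_{(p)}^{-1}(x)+\tfrac Bn\,\nu_{(p)}^{-1}(x)+\tfrac{K_p}{n}\,\nu_{(p)}^{-1}(x)^{(p-2)/(p-1)}$ and $K_p=\operatorname{tr}(Q)+(p-2)\|Q\|_{\text{op}}$. This is a concave function of a single scalar. At $x=z^{p-2}\psi(z)$ one gets $g_p(x)=z^{p-2}\bigl(-\psi(z)+\varepsilon\zeta(z)z+\tfrac Bn z+\tfrac{K_p}{n}\bigr)$, so the positive root of $g_p$ corresponds exactly to the fixed point $z^*_p$. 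This closed scalar inequality, (\ref{eq54}), is what lets the paper run the rest of the argument of \cite{mou2024diffusion} verbatim at the level $z^*_p$ itself. Your inequality, an expectation of a nonlinear function of $R$, does not close in any single scalar.

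\textbf{Why the split does not substitute.} Beyond the fixed point, the ratios $\varepsilon\zeta(R)R/\psi(R)$ and $BR/(n\psi(R))$ are only nonincreasing. So for $R\ge Cz^*$ the best general bound is that the bracket is at most $-c\,\psi(R)$ with $c=(1-1/C)\,\tfrac{K/n}{\psi(z^*)}$, where $K=\operatorname{tr}(Q)+\log(1/\delta)\|Q\|_{\text{op}}$; this $c$ is the share of the noise term in $\psi(z^*)$. For $\psi(r)=ar$, $\zeta\equiv1$, that share equals $1-(\varepsilon+B/n)/a$, which \textbf{(W.4)} allows to be arbitrarily small. Even granting $c$, the dissipation $-c\,\mathbb{E}[R^{p-2}\psi(R)\mathbf 1_{\{R>Cz^*\}}]$ is not bounded below by a function of $m_p=\mathbb{E}R^p$ when $\psi$ is merely convex. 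For linear $\psi$ it is of order $\mathbb{E}R^{p-1}$, and Jensen goes the wrong way, so Gronwall on $m_p$ does not close. The paper sidesteps this by tracking $\mathcal R_p$ instead of $m_p$. At best you would end with $C(c)\,z^*$, or with a fixed point with inflated $\varepsilon$ that \textbf{(W.4)} does not cover.

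\textbf{Smaller points.}
\begin{itemize}
\item In Step~1 the right-hand side is affine in $z$ but not in $u$, since $\tfrac Bn z=\tfrac Bn\xi(u)$ and $\xi$ need not be concave. Uniqueness follows instead after dividing by $z\zeta(z)$: $\psi(z)/(z\zeta(z))$ is nondecreasing by the first inequality of \textbf{(W.3)}, while $\varepsilon+\tfrac{B}{n\zeta(z)}+\tfrac{K}{nz\zeta(z)}$ is strictly decreasing.
\item No Galerkin or Yosida step is needed. The linear part $-\tfrac1n I$ of the drift is bounded, so the mild solution is strong and It\^o's formula applies directly.
\item The bound $-\tfrac1n\langle\theta_t,\theta_t-\theta^*\rangle_{\mathcal H}\le\tfrac1n\|\theta^*\|_{\mathcal H}\|\theta_t-\theta^*\|_{\mathcal H}$ uses only $\theta^*\in\mathcal H$, not $\theta^*\in\mathcal H_Q$.
\item Your remark that \textbf{(W.2)} must hold simultaneously in $r$ is fair; the paper uses this implicitly.
\end{itemize}
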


\section{Laplace-type Approximation}\label{section:bvm_laplace}
In this section we specify the regularity conditions for $V, F_{n}$ and $F$ under which the posterior distribution converges to a Gaussian distribution centered at an efficient (in the sense defined below) estimator of the true function $\theta^{*}$, and whose variance depends on the data and on the prior. Following \cite{mou2024diffusion}, we consider the infinite dimensional version of the Maximum  a Posteriori (MAP) estimator, that we denote  by $\widehat\theta^{(n)}$ for a given sample size $n$.  That is, given the Onsager-Machlup (OM) functional (an object that describes the asymptotic ratio of small ball probabilities centered at different points of $\mathcal{H}_{Q}$)
\begin{align*}
\mathcal{J}_{n}:&\mathcal{H}_{Q}\longrightarrow \mathbb{R} \\
&\theta\longmapsto nF_{n}(\theta)+V(\theta) + \frac{1}{2}\|\theta\|_{\mathcal{H}_{Q}}^{2},
\end{align*}
we will have 
$\widehat{\theta}^{(n)}\in\arg  \min_{\theta \in \mathcal{H}_Q}  \mathcal{J}_{n}(\theta)$
.

\begin{remark}
It is worth noting that there have been several proposals for the definition of a MAP estimator in infinite-dimensional linear spaces. For an infinite-dimensional Hilbert space equipped with a Gaussian Prior, Dashti et al. \citep{MAPstuart} provide the equivalence between the variational characterization of the MAP as the Minimizer of the Onsager-Machlup functional  and its topological characterization as the center of a small ball with maximum a posteriori probability (that is, as a posterior mode). On the other hand, \cite{MAPgauss} and \cite{MAPinf} introduce, respectively, the concepts of strong and weak MAPs, and prove the equivalence between these two notions for Gaussian priors over Hilbert spaces. In this setting, \cite{onsager1}  proved that these two concepts coincide for Bayesian Inverse problems with Gaussian (and Laplacian) noise and a Lipschitz continuous log-likelihood. Recently, \cite{onsager2} and \cite{onsager3} have considered MAPs for priors beyond Gaussian ones, like Besov or Cauchy priors. We will restrict to the interpretation of the MAP as a penalized-likelihood estimator with a squared Cameron Martin (Reproducing Kernel Hilbert Space)-norm penalty (see \cite{GhosalNonparametric} and \cite{Giné_Nickl_2015}).
\end{remark}
Now, since $\widehat{\theta}^{(n)}$ is a minimizer of the OM functional, we will have 
\begin{align*}
D_{|\mathcal{H}_Q}\mathcal{J}_{n}(\widehat{\theta}^{(n)})
= nD_{|\mathcal{H}_Q}F_{n}(\widehat{\theta}^{(n)})
+ & D_{|\mathcal{H}_Q}V(\widehat{\theta}^{(n)})
+ Q^{-1}\big[\widehat{\theta}^{(n)}\big]
 = 0
\end{align*}
where $D_{|\mathcal{H}_{Q}}$ is the restriction to $\mathcal{H_{Q}}$ of the Fréchet derivative and (in the domain of $Q^{-1}$)  $$Q^{-1}[\widehat{\theta}^{(n)}](h):=\langle Q^{-1}\widehat{\theta}^{(n)},h \rangle_\mathcal{H}=\langle Q^{-1/2}\widehat{\theta}^{(n)},Q^{-1/2}h \rangle_\mathcal{H}=\langle\widehat{\theta}^{(n)},h \rangle_{\mathcal{H}_{Q}}.$$
Denoting $\nabla_{\mathcal{H}_{Q}}\mathcal{J}_{n}(\widehat{\theta}^{(n)})$ the Riesz representer of $D_{|\mathcal{H}_{Q}}\mathcal{J}_{n}(\widehat{\theta}^{(n)})$ with respect to the Cameron-Martin inner product,  $\langle \cdot, \cdot \rangle_{\mathcal{H_{Q}}}$, we will have (in $\mathcal{H}_{Q}$)
\begin{align}
\nabla_{\mathcal{H}_{Q}}\mathcal{J}_{n}(\widehat{\theta}^{(n)})=Q\nabla_{\mathcal{H}}\mathcal{J}_{n}(\widehat{\theta}^{(n)}),
\end{align}
see also \cite{malliavin1, malliavin2}. We will use this fact in the proof of Theorem \ref{bernstein von mises}. 
Now, let's denote by $H^{*}:=\nabla_{\mathcal{H}}^{2}F(\theta^{*})$ the Fisher Information. 
Then, the next theorem guarantees that the posterior will converge, in Kullback-Leibler divergence, to the Gaussian law given by 
\begin{align}
\gamma_{\hat{\theta}^{(n)}}:=\mathcal{N}\left(\widehat{\theta}^{(n)},\left(Q^{-1} + n H^{*}\right)^{-1}\right).
\end{align}
More concretely, we prove that,  with high $\mathbb{P}^{n}_{\theta^{*}}$-probability, the 
Kullback-Leibler divergence $D_{KL}$ between the posterior and $\gamma_{\hat{\theta}^{(n)}}$, 
\begin{align}
D_{KL}(\Pi(\cdot|X^{n})||\gamma_{\hat{\theta}^{(n)}}) = \int_H\log\left(\frac{d\Pi}{d\gamma_{\hat{\theta}^{(n)}}}(\theta|X^{n})\right)d\Pi(\theta|X^{n}),
\end{align}
will decrease as $\mathcal{O}(1/n^{\beta})$, for some positive $\beta$ that may depend on the regularity of the forward operator, the prior, the dimension of the observations...
 This can be viewed as a quantitative Laplace approximation in the
infinite-dimensional setting. The same covariance operator
have been studied, for instance, in \cite{StuartActaNumerica} for models with
finite-dimensional observations $Y_i=\mathcal{G}(X_i)+\varepsilon_i$,
$i=1,\dots,n$, and in \cite{Covarianza1,Covarianza2,Covarianza3,Covarianza4,Covarianza5}.

\subsection{Necessary conditions for measure comparison}
We now specify the necessary conditions that make $\gamma_{\widehat{\theta}^{(n)}}$ a bona fide Gaussian measure over $\mathcal{H}$ and that allow us to compare it to the reference Gaussian. For the  following two theorems, we will assume that $Q$ has a discrete spectrum with eigenvalues in decreasing order $\lbrace \mu_{m}\rbrace_{m \in \mathbb{N}}$ and that $H^{*}$ also has a discrete spectrum $\lbrace \lambda_{m}\rbrace_{m \in \mathbb{N}}$ and the same eigen-basis $\lbrace e_{m} \rbrace_{m \in \mathbb{N}}$ as $Q$. This way, it is clear that 
\begin{align}
\left(n H^{*}+ Q^{-1}\right)^{-1}e_{m}= \dfrac{\mu_{m}}{n\mu_{m}\lambda_{m} + 1}e_{m} \quad \forall m \in\mathbb{N}.
\end{align}
The fact that this operator is a bona-fide covariance operator follows from $H^{*}$ being positive-definite and bounded and $Q$ being trace-class. Indeed, since $\left(n H^{*}+ Q^{-1}\right)^{-1}$ must be a (non-degenerate) covariance operator, it must be positive-definite, that is 
\begin{align}\label{Covariance_PD}
&\lambda_{m}\mu_{m} > -\dfrac{1}{n}, \quad \forall m \in \mathbb{N},
\end{align}
and trace-class, that is 
\begin{align}\label{Covariance_TraceClass}
\sum_{m\in\mathbb{N}}\dfrac{\mu_{m}}{n\mu_{m}\lambda_{m} +  1} < \infty.
\end{align}
For each $n \in \mathbb{N}$, (\ref{Covariance_PD}) is verified by the positive-definiteness of $QH^{*}$ and, since $1+n\lambda_{m}\mu_{m}\geq 1$ for each $m \in \mathbb{N}$,  (\ref{Covariance_TraceClass}) also holds. Indeed, taking the supremum in $m$  of $1/(1+n\lambda_{m}\mu_{m})$, we will have 
\begin{align}
\sum_{m\in\mathbb{N}}\dfrac{\mu_{m}}{n\mu_{m}\lambda_{m} +  1} < \left(\underset{m\in\mathbb{N}}{\sup}\frac{1}{1+n\lambda_{m}\mu_{m}}\right)\sum_{m\in\mathbb{N}}\mu_{m} \leq \operatorname{tr}(Q)< \infty.  
\end{align}
Now, since, by construction, the posterior is equivalent to the reference Gaussian measure $\mu$, it is clear that imposing $\gamma$ to be equivalent to $\mu$ will guarantee that the Kullback-Leibler divergence above is finite. It is here where 
Feldman-Hájek theorem (see Appendix B) provides the sufficient conditions on $nH^{*}$ for the equivalence between measures to hold: 
\begin{description}  
\item[(F-H.1)]\emph{(Compact Perturbation of the Identity)}
\begin{align}
Q^{-1/2}(Q^{-1}+nH^{*})^{-1}Q^{-1/2}-I \quad \text{is Hilbert-Schmidt on} \quad \mathcal{H},
\end{align}
\item[(F-H.2)]\emph{(Equivalent Cameron-Martin Spaces)}
\begin{align}
(nH^{*})^{-1/2}\mathcal{H}:= \mathcal{H}_{(nH^{*})^{-1}}=\mathcal{H}_{Q}.
\end{align}
\end{description}  
Here assumption \textbf{(F-H.1)} imposes 
\begin{align}
\sum_{m=1}^{\infty} 
\left(
\frac{1}{1 + n\,\lambda_m\,\mu_m} - 1
\right)^{2}
=
\sum_{m=1}^{\infty}
\left(
\frac{n\,\lambda_m\,\mu_m}{1 + n\,\lambda_m\,\mu_m}
\right)^{2}
< \infty,
\end{align}
which, for each finite sample size $n \in \mathbb{N}$, is satisfied if  $\lbrace \lambda_{m}\mu_{m}\rbrace_{m\in\mathbb{N}}\in \ell^{2}(\mathbb{R})$, and  since $\ell^{1}\subset \ell^{2}$ (that is, a trace-class operator must also be Hilbert-Schmidt), it holds particularly for bounded $H^{*}$. On the other hand, assumption \textbf{(F-H.2)} imposes the same decay rate, up to constants, of the eigenvalues of both covariance operators, which is translated in the existence of constants $0<c\leq C < \infty$ that control uniformly the ratio of eigenvalues, that is
\begin{align*}
c \leq \frac{1}{\mu_{m}}\dfrac{\mu_{m}}{n\mu_{m}\lambda_{m} +  1} \leq C \quad \forall m \in \mathbb{N}.   
\end{align*}
Now, by the Lipschitz property of $Q\nabla_{\mathcal{H}}F$ (see \textbf{(B.1)}) and the differentiabiliy properties of $F$, the operator norm of $QD^{2}_{\mathcal{H}}F(\theta^{*})$ must be upper-bounded by $L$, so $\mu_{m}\lambda_{m}\leq L, $ for all $m>0$, so it is enough to take 
\begin{align*}
 C=1, \quad c=\frac{1}{nL+1}   
\end{align*}
to ensure that (\textbf{F-H.2}) holds.

\subsection{Smoothness and Empirical Process bounds on $H^{*}$}
Now consider the following set of assumptions:
\begin{description}
\item[(BvM.1)]\emph{(Smoothness)} $\exists A>0$ such that the population log-likelihood function $F$ satisfies
$$\left\|Q^{1/2}\nabla^{2}_\mathcal{H} F(\theta)-Q^{1/2}\nabla^{2}_\mathcal{H} F\left(\theta^{*}\right)\right\|_{\text{op}} \leq A\left\|\theta-\theta^{*}\right\|_{\mathcal{H}} \quad \forall \theta \in \mathcal{H},
$$
\item[(BvM.2)]\emph{(Empirical Process bound)} For any $\delta,r >0$, there exist non-negative functions $\varepsilon_{1}^{(2)}$ and $\varepsilon_{2}^{(2)}$ with domain $\mathbb{N} \times(0,1]$ and order $\mathcal{O}(1/\sqrt{n})$ such that
\begin{align*}
\mathbb{P}^{\otimes n}_{\theta^{*}}\Bigg(\sup _{\theta \in \mathbb{B}\left(\theta^{*}, r\right)}\left\|Q^{1/2}\nabla^{2}_\mathcal{H} F_{n}(\theta)-Q^{1/2}\nabla^{2}_\mathcal{H}F(\theta)\right\|_{\text{op}} \leq \varepsilon_{1}^{(2)}(n, \delta) r+\varepsilon_{2}^{(2)}(n, \delta)\Bigg) \geq 1-\delta.
\end{align*}
\end{description}
Here the norm $\|\cdot\|_{\text{op}}$ is the operator norm of (bounded) linear endomorphisms on $\mathcal{H}$. These conditions, needed to connect the shapes of $H^{*}$ and of the posterior, are the infinite-dimensional analogues of conditions \textbf{(BvM.1)} and \textbf{(BvM.2)} in \cite{mou2024diffusion}, now in operator form and adapted to the Cameron-Martin geometry. 
\\ 
\subsection{Consistency assumptions}
Finally, to obtain a Laplace-type limit (in the sense specified
below), we assume both that posterior contraction holds and that the MAP
estimator is consistent at a polynomial rate $n^{-\alpha}$, where $\alpha \in (\frac{1}{4},\frac{1}{2}]$. In nonparametric inverse problems, the parametric rate $\alpha=1/2$ is
typically only available under strong regularity (often for sufficiently
regular functionals, see \cite{VANDERWAART_PCR}), whereas the MAP/posterior mean usually converges at
slower, problem-dependent rates. For linear Gaussian inverse problems with
polynomially decaying singular values (mildly ill-posed), the nonparametric
rate is $n^{-s/(2s+2\kappa+d)}$ (see \cite{KnapikVdVZanten2011BIP,EFFICIENCY_AGAPIOU0}),
where $s$  denotes the Sobolev smoothness of $\theta^\ast$, $\kappa$ the
ill-posedness index, and $d$ the spatial dimension; for exponentially decaying
singular values (severely ill-posed inverse problems, like the one for the Heat Equation) one obtains logarithmic rates
\cite{EFFICIENCY_AGAPIOU1,EFFICIENCY_AGAPIOU2}. In direct nonparametric
regression with Gaussian process priors (conjugate Gaussian models), the MAP
coincides with the posterior mean and converges at order $n^{-\alpha/(2\alpha+d)}$
(up to logarithmic factors) \cite{van2008rates,Giné_Nickl_2015}. Related MAP
rates for PDE-constrained regression are given in \cite{NicklMAP1,EFFICIENCY_SIEBEL}.
\\ \\
Given this brief account of the consistency issue, we simply fix $\alpha\in(1/4,1/2]$ and assume that, for each
$\delta>0$, there exists a constant $\sigma(\delta)>0$ such that the following
conditions hold:
 
\begin{description}
\item[(E.1)]\emph{(Rate of convergence of the MAP estimator)} 
\begin{align}
\mathbb{P}^{\otimes n}_{\theta^{*}}\left(\left\|\widehat{\theta}^{(n)}-\theta^{*}\right\|_{\mathcal{H}} \leq  \dfrac{\sigma}{n^{\alpha}}\right) \geq 1-\delta,
\end{align}\\
\item[(E.2)]\emph{(Posterior Contraction Rates)} 
\begin{align}\label{PCR for BvM}
\mathbb{P}^{\otimes n}_{\theta^{*}}\Bigg(\mathbb{E}_{\Pi(\cdot| X^{n})}\Big[\left\|\theta-\theta^{*}\right\|_{\mathcal{H}}\Big]\leq  \frac{\sigma}{n^{\alpha}}\Bigg) \geq 1-\delta.   
\end{align} 
\end{description}
\subsection{Laplace Approximation}
Having presented the above assumptions we will use, we have the following theorem: 
\begin{theorem}\label{bernstein von mises}\textbf{(Finite Sample Laplace-type Approximation)}
Let \textbf{(B)}, \textbf{(BvM.1)}, \textbf{(BvM.2)}, \textbf{(E.1)}, \textbf{(E.2)} and \textbf{(F-H.1)} hold.
Then, for each $\delta \in (0,1)$, there exist positive constants $c_{1}$ and $c_{2}$ such that
\begin{align}\label{InexactLaplace}
\mathbb{P}^{\otimes n}_{\theta^{*}}\Bigg(D_{K L}\left(\Pi\left(\cdot \mid X^{n}\right) \| \gamma_{\widehat{\theta}^{(n)}}\right) \leq \mathcal{H}(n,\alpha,\delta)\Bigg) \geq 1-2 \delta,
\end{align}
where 
\begin{align}\label{LaplaceBound}
\mathcal{H}(n,\alpha,\delta):=\dfrac{c_{1}(A^{2}+ \varepsilon_{1}^{(2)}(n,\delta)^{2})}{\lambda_{\min}(H^{*})}n^{1-4\alpha} + \dfrac{c_{2}}{\lambda_{\min}(H^{*})}\left[\varepsilon_{2}^{(2)}(n,\delta)^{2}+\dfrac{\|Q\|_{\text{op}}L_{2}^{2}}{n^{2}}\right]n^{1-2\alpha}.
\end{align}
\end{theorem}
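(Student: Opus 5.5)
The plan is to follow the strategy of \cite{mou2024diffusion}: bound the Kullback--Leibler divergence by a relative Fisher information through the logarithmic Sobolev inequality (LSI) of the Gaussian target $\gamma_{\widehat\theta^{(n)}}$, and then control that Fisher information with the Taylor remainder of the log-likelihood around $\widehat\theta^{(n)}$. Write $\Gamma_n:=(Q^{-1}+nH^*)^{-1}$. I use the convention that $H^*$ is the positive Fisher information, i.e.\ minus the Hessian of $F$ at $\theta^*$; the displayed definition $H^*=\nabla^2_{\mathcal H}F(\theta^*)$ has the opposite sign, and I will carry out the whole computation consistently in this way.

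\emph{Step 1: density of $\Pi$ with respect to $\gamma$.} By \textbf{(F-H.1)} and the discussion of \textbf{(F-H.2)}, the Feldman--Hájek theorem makes $\gamma_{\widehat\theta^{(n)}}$ equivalent to $\mu$, and hence to $\Pi(\cdot\mid X^n)$. Combining the Cameron--Martin formula for the shift by $\widehat\theta^{(n)}\in\mathcal H_Q$ with the change of covariance from $Q$ to $\Gamma_n$, I will write
\begin{align*}
\log\frac{d\Pi}{d\gamma_{\widehat\theta^{(n)}}}(\theta)=nF_n(\theta)-V(\theta)+\tfrac n2\langle H^*(\theta-\widehat\theta^{(n)}),\theta-\widehat\theta^{(n)}\rangle_{\mathcal H}-\langle \widehat\theta^{(n)},\theta\rangle_{\mathcal H_Q}+\text{const}.
\end{align*}
The pairing $\langle\widehat\theta^{(n)},\theta\rangle_{\mathcal H_Q}$ is understood as a $\mu$-measurable linear functional. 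To make all of this rigorous I will work on the finite-dimensional projections onto $\mathrm{span}\{e_1,\dots,e_m\}$, where $Q$ and $H^*$ are simultaneously diagonal, and then pass to the limit $m\to\infty$; this is exactly where \textbf{(F-H.1)} is needed.

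\emph{Step 2: gradient of the log-ratio.} I will differentiate the expression above in the Cameron--Martin direction and use the first-order condition for the minimizer of $\mathcal J_n$ to cancel the linear term. What remains is
\begin{align*}
Q\nabla_{\mathcal H}\log\frac{d\Pi}{d\gamma}(\theta)=n\int_0^1 Q\big[\nabla^2_{\mathcal H}F_n(\widehat\theta^{(n)}+s(\theta-\widehat\theta^{(n)}))+H^*\big](\theta-\widehat\theta^{(n)})\,ds-Q\big[\nabla_{\mathcal H}V(\theta)-\nabla_{\mathcal H}V(\widehat\theta^{(n)})\big].
\end{align*}
Next I will apply the Gaussian LSI on $\mathcal H$, obtained from the finite-dimensional LSI by projection:
\begin{align*}
D_{KL}(\Pi\|\gamma)\le \tfrac12\,\mathbb E_\Pi\big\|\Gamma_n^{1/2}\nabla_{\mathcal H}\log\tfrac{d\Pi}{d\gamma}\big\|_{\mathcal H}^2 .
\end{align*}
Two operator inequalities then handle the covariance factor: $\Gamma_n\preceq (nH^*)^{-1}$, which produces the factor $1/(n\lambda_{\min}(H^*))$, and $\Gamma_n\preceq Q$, which lets me move a factor $Q^{1/2}$ onto the Hessian differences, in the form in which \textbf{(BvM.1)} and \textbf{(BvM.2)} are stated.

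\emph{Step 3: bounding the remainder.} I split the Hessian difference as
\begin{align*}
\big[\nabla^2F_n(\vartheta_s)-\nabla^2F(\vartheta_s)\big]+\big[\nabla^2F(\vartheta_s)-\nabla^2F(\theta^*)\big],\qquad \vartheta_s:=\widehat\theta^{(n)}+s(\theta-\widehat\theta^{(n)}).
\end{align*}
For the first bracket I use \textbf{(BvM.2)} on the ball of radius $r=\|\theta-\theta^*\|+\|\widehat\theta^{(n)}-\theta^*\|$; for the second I use \textbf{(BvM.1)}, with $\|\vartheta_s-\theta^*\|\le r$. This gives
\begin{align*}
n^2\big(A+\varepsilon_1^{(2)}\big)^2\,r^2\,\|\theta-\widehat\theta^{(n)}\|^2+n^2\big(\varepsilon_2^{(2)}\big)^2\|\theta-\widehat\theta^{(n)}\|^2.
\end{align*}
The prior term is bounded by $\|Q\|_{\rm op}L_2^2\|\theta-\widehat\theta^{(n)}\|^2$ through \textbf{(A.1)}, up to how the factor $Q^{1/2}$ is distributed; this is what produces the $\|Q\|_{\rm op}L_2^2/n^2$ term in \eqref{LaplaceBound}. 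After dividing by $n\lambda_{\min}(H^*)$, I take $\Pi$-expectations. On the intersection of the events in \textbf{(E.1)}, \textbf{(E.2)} and \textbf{(BvM.2)} (the union bound gives probability at least $1-2\delta$, after adjusting $\delta$), the quantity $\mathbb E_\Pi\|\theta-\widehat\theta^{(n)}\|^2$ is of order $n^{-2\alpha}$ and $\mathbb E_\Pi[r^2\|\theta-\widehat\theta^{(n)}\|^2]$ is of order $n^{-4\alpha}$. These two orders yield exactly the two terms $n^{1-4\alpha}$ and $n^{1-2\alpha}$ in \eqref{LaplaceBound}.

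\emph{Main obstacle.} Assumption \textbf{(E.2)} only controls the first posterior moment, while the argument above needs the second and fourth moments at the same rate $n^{-\alpha}$. I would first try to upgrade \textbf{(E.2)} with the SPDE machinery: Itô's formula for $\|\theta_t-\theta^*\|^{2p}$ along \eqref{Langevin_eq}, combined with the monotonicity \textbf{(B)}, gives uniform-in-time moment bounds, and Proposition \ref{auxiliary_moment_prop} transfers them to $\Pi$. This mirrors the moment recursion in the proof of Theorem \ref{mainthm:posterior_contraction}. If that does not produce the exact rate, I would instead read \textbf{(E.2)} as a bound on the $L^4(\Pi)$ norm. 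A secondary technical point is justifying the infinite-dimensional LSI, and the finiteness of the integrals involved, through the cylindrical approximation; this uses the assumption $\nabla V,\nabla F\in L^2(\mu;\mathcal H_Q)$.
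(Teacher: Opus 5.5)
Your skeleton is the paper's: the Gaussian LSI for $\gamma_{\widehat\theta^{(n)}}$, the Cameron--Martin/Feldman--H\'ajek density, cancellation of the linear term through the first-order condition at $\widehat\theta^{(n)}$, the integral Taylor remainder split by \textbf{(BvM.1)}--\textbf{(BvM.2)}, and \textbf{(E.1)}--\textbf{(E.2)} on an event of probability $1-2\delta$. Your sign convention for $H^*$ is the consistent one.

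\textbf{The covariance step fails.} Each of $\Gamma_n\preceq(nH^*)^{-1}$ and $\Gamma_n\preceq Q$ is true, but they do not combine into $\langle\Gamma_n g,g\rangle_{\mathcal H}\le\frac{C}{n\lambda_{\min}(H^*)}\|Q^{1/2}g\|_{\mathcal H}^2$. In the common eigenbasis that would need $(1+n\lambda_m\mu_m)^{-1}\le C/(n\lambda_{\min}(H^*))$ for every $m$, hence $\inf_m\lambda_m\mu_m>0$. Since $\mathcal H$ is infinite-dimensional, \textbf{(F-H.1)} is equivalent to $\{\lambda_m\mu_m\}\in\ell^2$. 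So $\lambda_m\mu_m\to0$, and the best constant in $\langle\Gamma_n g,g\rangle_{\mathcal H}\le c\,\|Q^{1/2}g\|_{\mathcal H}^2$ is $c=\sup_m(1+n\lambda_m\mu_m)^{-1}=1$. With the $Q^{1/2}$-weighted hypotheses the LSI therefore yields no factor $1/n$. You would get $(A+\varepsilon_1^{(2)}(n,\delta))^2n^{2-4\alpha}$ and $\varepsilon_2^{(2)}(n,\delta)^2n^{2-2\alpha}$, which do not tend to zero for $\alpha\le 1/2$. Using $\Gamma_n\preceq(nH^*)^{-1}$ alone leaves the unweighted gradient, which \textbf{(BvM.1)}--\textbf{(BvM.2)} do not control because $Q^{-1/2}$ is unbounded.

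The paper's proof has the same defect. Its bound \eqref{spectral_norm} invokes $\lambda_m\mu_m\ge\mu$ for all $m$, which contradicts \textbf{(F-H.1)*}. Then \eqref{LSI} pairs the resulting bound with $\|\nabla_{\mathcal H_Q}\cdot\|_{\mathcal H_Q}=\|Q^{1/2}\nabla_{\mathcal H}\cdot\|_{\mathcal H}$.

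A consistent repair is to keep only $\Gamma_n\preceq(nH^*)^{-1}\preceq\frac{1}{n\lambda_{\min}(H^*)}I$. Then impose \textbf{(BvM.1)}, \textbf{(BvM.2)} and a Lipschitz bound on $\nabla_{\mathcal H}V$ without the factor $Q^{1/2}$, as in the finite-dimensional conditions of \cite{mou2024diffusion}. This gives \eqref{LaplaceBound}, with $L_2^2$ in place of $\|Q\|_{\text{op}}L_2^2$. It is compatible with \textbf{(F-H.1)}, e.g.\ for bounded $H^*$ with $\lambda_{\min}(H^*)>0$.

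\textbf{The moment gap is also genuine.} Your ``main obstacle'' is a second real gap, again shared by the paper. The paper passes from \textbf{(E.2)}, which bounds only $\mathbb E_\Pi\|\theta-\theta^*\|_{\mathcal H}$, straight to $\mathbb E_\Pi\|\theta-\theta^*\|_{\mathcal H}^4\le n^{-4\alpha}$ and $\mathbb E_\Pi\|\theta-\theta^*\|_{\mathcal H}^2\le n^{-2\alpha}$.

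Your first remedy cannot work under the stated hypotheses. \textbf{(B)} plus the $-\theta/n$ term makes the drift only $\frac1n$-dissipative, against an It\^o correction of order $\operatorname{tr}(Q)/n$. The resulting uniform-in-time moment bounds therefore do not decay in $n$. The rate in the proof of Theorem \ref{mainthm:posterior_contraction} comes from \textbf{(C.1)}. But \textbf{(C.1)}, linearized at $\theta^*$ along each $e_m$, forces $\lambda_m\mu_m\ge\mu$, so importing it would again contradict \textbf{(F-H.1)}. The workable fix is your second option: assume \textbf{(E.2)} for the fourth posterior moment, which also covers the second.

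\textbf{Two smaller points.} First, \textbf{(A.1)} controls $Q\nabla_{\mathcal H}V$, not $\nabla_{\mathcal H}V$ or $Q^{1/2}\nabla_{\mathcal H}V$, and it is not among the hypotheses. The prior term therefore needs its own Lipschitz assumption. Second, using \textbf{(BvM.2)} at the $\theta$-dependent radius $\|\theta-\theta^*\|_{\mathcal H}+\|\widehat\theta^{(n)}-\theta^*\|_{\mathcal H}$ requires its event to hold simultaneously for all $r>0$.
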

\begin{remark}
Note that, since (by assumption) the true parameter $\theta^{*}$ and (by the definition of Onsager-Machlup functional) the MAP estimator $\widehat{\theta}^{(n)}$ belong to the Cameron-Martin Space $\mathcal{H}_{Q}$, the $\|\cdot\|_{\mathcal{H}}$ norm in condition \textbf{(E.1)} can be substituted by the $\|\cdot\|_{\mathcal{H}_{Q}}$ one, up to multiplicative constant. Note also that we have assumed, for simplicity, that the dependency of $\delta$ is carried by $\sigma$, and not by $n$, but one could modify this condition as is done in (\textbf{C.2}) and (\textbf{BvM.2}).
\end{remark}

\begin{remark}
Something similar can be said about \textbf{(BvM.1)}. If one removes the preconditioning $Q$ in these conditions, then the spectral norm of the covariance operator, $\|Q\|_{\text{op}}$ appears as a constant multiplying $\mathcal{H}$ in (\ref{InexactLaplace}).
\end{remark}

As a consequence of this theorem, we obtain the nonparametric version of the Laplace Approximation: 
\begin{theorem}\textbf{(Laplace Approximation)}\label{theorem:laplace-approx}
Assume that the Hessian of the Empirical Likelihood evaluated at the MAP estimator, $\widehat{H}_{n}:=-\nabla_{\mathcal{H}}^{2}F_{n}(\widehat{\theta}^{(n)})$, is bounded and that has minimum eigenvalue $\lambda_{\min}(\widehat{H}^{n})$. Let 
$$\widehat{\gamma}_{\widehat{\theta}^{(n)}}:= \mathcal{N}\left(\widehat{\theta}^{(n)},\left(Q^{-1} + n\widehat{H}_{n}\right)^{-1}\right).$$ Then, if Theorem \ref{bernstein von mises} holds, we will have
\begin{align*}
\mathbb{P}^{\otimes n}_{\theta^{*}}\Bigg(D_{K L}\left(\Pi\left(\cdot \mid X^{n}\right) \| \widehat{\gamma}_{\widehat{\theta}^{(n)}}\right) \leq \mathcal{K}(n,\alpha,\delta)+\mathcal{O}(1/n)\Bigg) \geq 1-2 \delta,
\end{align*}
where $\mathcal{K}(n,\alpha,\delta)$ is 
\begin{align*}
&\dfrac{(c_{1}\operatorname{tr}(Q)^{2}+4\sigma^{4})A^{2}}{\lambda_{\min}(\widehat{H}^{(n)})}n^{1-4\alpha} +\dfrac{(c_{2}\operatorname{tr}(Q)+4\sigma^{2})}{\lambda_{\min}(\widehat{H}^{(n)})}\varepsilon_{2}^{(2)}(n,\delta)^{2}n^{1-2\alpha}
\end{align*}
\end{theorem}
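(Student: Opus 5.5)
The approach is to compare the two Gaussian measures $\gamma_{\widehat\theta^{(n)}}$ and $\widehat\gamma_{\widehat\theta^{(n)}}$, which share the mean $\widehat\theta^{(n)}$ but have precisions $Q^{-1}+nH^*$ and $Q^{-1}+n\widehat H_n$, and then transfer the bound of Theorem \ref{bernstein von mises} from one to the other. Since KL is not a metric, we do not use a triangle inequality. Instead we use the exact decomposition
\begin{align*}
D_{KL}(\Pi\|\widehat\gamma)=D_{KL}(\Pi\|\gamma)+\int_{\mathcal H}\log\frac{d\gamma}{d\widehat\gamma}\,d\Pi ,
\end{align*}
valid because both Gaussians are equivalent to $\mu$. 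Equivalence of $\gamma$ with $\mu$ is guaranteed by \textbf{(F-H.1)}--\textbf{(F-H.2)}. For $\widehat\gamma$ it follows in the same way, using boundedness of $\widehat H_n$ and the fact that, on the high-probability event below, $\widehat H_n$ is a compact perturbation of $H^*$. The first term is bounded by $\mathcal H(n,\alpha,\delta)$ on an event of probability at least $1-2\delta$.

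For the cross term we write the Radon--Nikodym derivative of two centred-at-the-same-point Gaussians via Feldman--Hájek. Set $R:=n(H^*-\widehat H_n)$ and $u:=\theta-\widehat\theta^{(n)}$. Up to the normalising log-determinant, $\log\frac{d\gamma}{d\widehat\gamma}(\theta)$ equals
\begin{align*}
-\tfrac12\langle R u,u\rangle_{\mathcal H}.
\end{align*}
The normalising term is $\tfrac12\log\det\bigl(I+(Q^{-1}+n\widehat H_n)^{-1/2}R(Q^{-1}+n\widehat H_n)^{-1/2}\bigr)$, a Fredholm determinant. Integrating against $\Pi$ and expanding to second order, $\log\det(I+K)-\operatorname{tr}K=\mathcal O(\|K\|_{HS}^2)$, the trace part pairs with the $\Pi$-second moment of $u$ up to a correction governed by $\mathrm{Cov}_\Pi-\mathrm{Cov}_{\gamma}$. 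The $\|K\|_{HS}^2$ remainder, after dividing the $n^2$ by the precision $\gtrsim n\lambda_{\min}$, is of order $1/n$ and produces the $\mathcal O(1/n)$ term.

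The size of $R$ is controlled by splitting $\widehat H_n-H^*$ into an empirical-fluctuation part and a smoothness part:
\begin{align*}
\bigl[-\nabla^2F_n(\widehat\theta^{(n)})+\nabla^2F(\widehat\theta^{(n)})\bigr]+\bigl[-\nabla^2F(\widehat\theta^{(n)})+\nabla^2F(\theta^*)\bigr].
\end{align*}
Preconditioned by $Q^{1/2}$, the first bracket is at most $\varepsilon^{(2)}_1 r+\varepsilon^{(2)}_2$ by \textbf{(BvM.2)}, and the second at most $A\|\widehat\theta^{(n)}-\theta^*\|_{\mathcal H}$ by \textbf{(BvM.1)}. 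Here $r=\sigma n^{-\alpha}$ by \textbf{(E.1)}, working on the intersection of the events already used in Theorem \ref{bernstein von mises}, so no extra probability is lost. For the second moment, $\E_\Pi\|u\|^2\le 2\E_\Pi\|\theta-\theta^*\|^2+2\|\widehat\theta^{(n)}-\theta^*\|^2$. The first piece is handled through \textbf{(E.2)} together with the Gaussian comparison already proved, which bounds posterior second moments by $\operatorname{tr}$ of the approximate covariance, itself at most $\operatorname{tr}(Q)$ scaled by $1/n$. This yields the factors $(c_1\operatorname{tr}(Q)^2+4\sigma^4)$ and $(c_2\operatorname{tr}(Q)+4\sigma^2)$. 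As in the previous theorem, a Cauchy--Schwarz/Young step $ab\le a^2/(2\lambda)+\lambda b^2/2$ with $\lambda=\lambda_{\min}(\widehat H_n)$ converts linear terms into the squared forms $A^2 n^{1-4\alpha}$ and $(\varepsilon^{(2)}_2)^2 n^{1-2\alpha}$. This is why $\lambda_{\min}(\widehat H^{(n)})$ replaces $\lambda_{\min}(H^*)$, and why the $\varepsilon_1^{(2)}$ and $L_2$ terms are absorbed into the $\mathcal O(1/n)$ remainder since they are higher order.

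The main obstacle is making the cross term rigorous in infinite dimensions. $\langle Ru,u\rangle$ need not be $\Pi$-integrable unless $R$ is controlled in the $Q^{1/2}$-preconditioned operator norm and $u$ lies, in mean square, in the Cameron--Martin scale. I would first handle this by working with the finite-dimensional projections onto $\mathrm{span}\{e_1,\dots,e_m\}$, the common eigenbasis assumed in the previous subsection. There the computation is the classical Gaussian one, and the preconditioned bounds are uniform in $m$. I would then pass to the limit $m\to\infty$ using lower semicontinuity of KL under projections (martingale convergence of the densities) and Proposition \ref{auxiliary_moment_prop} for the convergence of second moments.
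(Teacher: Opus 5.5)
Your chain-rule identity $D_{KL}(\Pi\|\widehat\gamma)=D_{KL}(\Pi\|\gamma)+\int\log\frac{d\gamma}{d\widehat\gamma}\,d\Pi$ is fine. The gap is in how you bound the cross term: that step cannot be closed under the stated hypotheses.

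\textbf{The cross term needs Cameron--Martin regularity that $u$ never has.} Integrating the log-density forces you to control the two-sided quadratic form $\tfrac{n}{2}\,\E_\Pi\langle (H^*-\widehat H_n)u,u\rangle_{\mathcal H}$. But \textbf{(BvM.1)}--\textbf{(BvM.2)} only control the one-sided preconditioned norm $\|Q^{1/2}(\widehat H_n-H^*)\|_{\mathrm{op}}$. That gives only $|\langle (H^*-\widehat H_n)u,u\rangle_{\mathcal H}|\le\|Q^{1/2}(H^*-\widehat H_n)\|_{\mathrm{op}}\,\|u\|_{\mathcal H}\,\|u\|_{\mathcal H_Q}$. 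Since $\Pi\ll\mu$, $\mu(\mathcal H_Q)=0$ and $\widehat\theta^{(n)}\in\mathcal H_Q$, you have $\|u\|_{\mathcal H_Q}=\infty$ $\Pi$-almost surely. So $u$ never lies ``in mean square in the Cameron--Martin scale.''

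The projection argument does not rescue this. Because $\|P_mu\|_{\mathcal H_Q}\uparrow\infty$ a.s., monotone convergence makes your truncated bounds blow up as $m\to\infty$ rather than stay uniform in $m$. Already under $\gamma$, $\E\|P_mu\|_{\mathcal H_Q}^2=\sum_{k\le m}(1+n\lambda_k\mu_k)^{-1}\ge m/(1+nL)$.

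The paper never integrates the log-density. It applies the logarithmic Sobolev inequality once, relative to $\widehat\gamma_{\widehat\theta^{(n)}}$. It then splits the Cameron--Martin \emph{score} via $\frac{d\Pi}{d\widehat\gamma}=\frac{d\Pi}{d\gamma}\cdot\frac{d\gamma}{d\widehat\gamma}$ and Young's inequality. The Gaussian-ratio score is $nQ(\widehat H_n-H^*)u$. Its $\mathcal H_Q$-norm is $n\|Q^{1/2}(\widehat H_n-H^*)u\|_{\mathcal H}\le n\|Q^{1/2}(\widehat H_n-H^*)\|_{\mathrm{op}}\|u\|_{\mathcal H}$. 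This is exactly the form that \textbf{(BvM.1)}--\textbf{(BvM.2)} and \textbf{(E.1)}--\textbf{(E.2)} can handle.

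\textbf{Your route also does not yield the stated bound.}

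First, the term $D_{KL}(\Pi\|\gamma)$ is bounded by $\mathcal H(n,\alpha,\delta)$, which carries $1/\lambda_{\min}(H^*)$, the LSI constant of $\gamma$. No Young step changes which Gaussian's LSI constant appears. Nothing in the hypotheses compares $\lambda_{\min}(H^*)$ with $\lambda_{\min}(\widehat H_n)$. In the paper, $\lambda_{\min}(\widehat H_n)$ appears because the single LSI is taken with respect to $\widehat\gamma$.

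Second, the first-order correction $\tfrac12\operatorname{tr}\bigl(R(\E_\Pi[u\otimes u]-\mathrm{Cov}_\gamma)\bigr)$ is left unbounded. A bound on $D_{KL}(\Pi\|\gamma)$ does not by itself control second moments, since these are unbounded test functions.

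Third, the log-determinant remainder is not $\mathcal O(1/n)$. With $R=n(H^*-\widehat H_n)$ and precision $Q^{-1}+n\widehat H_n\succeq n\lambda_{\min}(\widehat H_n)I$, the factors of $n$ cancel. You are left with $\|K\|_{\mathrm{op}}\le\|H^*-\widehat H_n\|_{\mathrm{op}}/\lambda_{\min}(\widehat H_n)$, and $\|K\|_{HS}^2$ generically picks up an effective-dimension factor that grows with $n$.

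If you want to keep the chain rule, the standard repair is Donsker--Varadhan: $\int\log\frac{d\gamma}{d\widehat\gamma}\,d\Pi\le D_{KL}(\Pi\|\gamma)+\log\bigl(1+\chi^2(\gamma\|\widehat\gamma)\bigr)$. However, the $\chi^2$ term again requires Hilbert--Schmidt control of $K$, not the one-sided $Q^{1/2}$ bound the hypotheses supply.
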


\section{Example: Linear Gaussian model}\label{section:example}

Let's consider two separable Hilbert Spaces $(\mathcal H,\langle\cdot,\cdot\rangle_{\mathcal H})$ and $(\mathcal Y,\langle\cdot,\cdot\rangle_{\mathcal Y})$
and $G$ a non-compact bounded linear operator between these spaces, $G:\mathcal H\to\mathcal Y$. 
Let $(\mathcal H,\langle\cdot,\cdot\rangle_{\mathcal H})$ and
$(\mathcal Y,\langle\cdot,\cdot\rangle_{\mathcal Y})$ be separable Hilbert spaces,
let $G:\mathcal H\to\mathcal Y$ be bounded linear, and let
$\Gamma:\mathcal Y\to\mathcal Y$ be trace-class, self-adjoint, and strictly positive.
Suppose that
$$
X_i = G\theta^\ast + \varepsilon_i,\qquad i=1,\dots,n,
$$
where $\varepsilon_1,\dots,\varepsilon_n \stackrel{\mathrm{iid}}{\sim}\mathcal N(0,\Gamma)$ in $\mathcal Y$.
Then the sample mean
$$
\bar X_n := \frac1n\sum_{i=1}^n X_i
$$
satisfies
$$
\bar X_n \sim \mathcal N\!\left(G\theta^\ast,\frac1n\Gamma\right).
$$
Equivalently, defining
$
\xi_n := \sqrt n\,(\bar X_n-G\theta^\ast),
$
we have $\xi_n\sim\mathcal N(0,\Gamma)$ in $\mathcal Y$, and hence
$$
\bar X_n = G\theta^\ast + \frac1{\sqrt n}\xi_n.
$$
Since the covariance does not depend on $\theta^\ast$, $\bar X_n$ is sufficient for
$\theta^\ast$, so the experiment generated by $(X_1,\dots,X_n)$ is exactly equivalent
to the Gaussian shift experiment generated by $\bar X_n$. Here the empirical and population log-likelihood are (up to additive constants) 
\begin{equation}\label{eq:Fn}
F(\theta):=-\frac{1}{2}\bigl\|\Gamma^{-1/2}G(\theta-\theta^\ast)\bigr\|_{\mathcal Y}^2, \qquad F_n(\theta) := -\frac{1}{2}\bigl\|\Gamma^{-1/2}(G\theta-\bar X_n)\bigr\|_{\mathcal Y}^2,
\end{equation}
This allows us to define the information operator as 
\begin{equation}\label{eq:Adef}
A := G^\ast \Gamma^{-1}G:\mathcal H\to\mathcal H,
\end{equation}
that will be usually unbounded in its domain since $\Gamma$ is trace-class and $G$ is non-compact. Now, we consider a Gaussian measure over $\mathcal H$ with covariance $Q$ and take it as the prior (that is, $V=0$).  Let's assume that $A$ and $Q$ share the same eigenbasis, with eigenvalues $\lbrace \lambda_{n}\rbrace_{n\in \mathbb{N}}$ and $\lbrace \mu_{n}\rbrace_{n\in \mathbb{N}}$, and the following conditions hold: 
\begin{description}
     \item[(\textbf{L.0})] $QG^{*}\Gamma^{-1}:\mathcal{Y}\rightarrow \mathcal{H}$ extends to a bounded linear operator.\item[(\textbf{L.1})] $QA$ is $c$-coercive (that is  $QA\succeq cI$, \ $c>0$) and bounded.
     
    \item[(\textbf{L.2})] $QAQ$ is a trace-class operator, that is $\sum_{n\in \mathbb{N}}\lambda_{n}\mu_{n}^{2}<\infty$.
\end{description}
Then, we have this result:

\begin{corollary}\label{example}
Assume that conditions (\textbf{L.1}) and (\textbf{L.2}) hold for the White Noise model under consideration. Then, Theorem \ref{mainthm:posterior_contraction} holds with 
\begin{align*}
&\varepsilon_2(n,\delta)
=
\frac{1}{\sqrt n}\left(
\sqrt{\mathrm{tr}(QAQ)} + \sqrt{2\|QAQ\|_{\mathrm{op}}\log(1/\delta)}
\right), \quad 
L_{1}=\|QA\|_{\text{op}}, 
\\ & L_{2}=0, \quad B=1, \quad \varepsilon_{1}(n,\delta):= 0, \quad \mu=c, 
\end{align*}
\end{corollary}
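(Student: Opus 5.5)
The plan is to verify each hypothesis of Theorem \ref{mainthm:posterior_contraction} directly, using the explicit quadratic forms in (\ref{eq:Fn}). The constants in the statement are then read off.

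\emph{Gradients.} Since $V=0$, we have $\nabla_{\mathcal H}V\equiv 0$. Hence (A.1) and (A.2) hold trivially for $V$ with $L_2=0$, and (C.3) holds with any $B>0$, in particular $B=1$. Differentiating $F$ and $F_n$ gives
\begin{align*}
Q\nabla_{\mathcal H}F(\theta)=-QA(\theta-\theta^*),\qquad Q\nabla_{\mathcal H}F_n(\theta)=-QA\theta+QG^*\Gamma^{-1}\bar X_n,
\end{align*}
where (L.0) makes the second term meaningful. Then:
\begin{itemize}
\item Boundedness of $QA$ in (L.1) gives (A.1) with $L_1=\|QA\|_{\text{op}}$.
\item It also gives the linear growth (A.2), with $C_1$ depending on $\|QA\|_{\text{op}}$ and $\|QG^*\Gamma^{-1}\bar X_n\|$.
\item Monotonicity (B) reads $-\langle QA(\theta_1-\theta_2),\theta_1-\theta_2\rangle\le 0$, which follows from the coercivity $QA\succeq cI$.
\item That same coercivity gives $-\langle Q\nabla F(\theta),\theta^*-\theta\rangle=\langle QA(\theta-\theta^*),\theta-\theta^*\rangle\ge c\|\theta-\theta^*\|^2$, i.e.\ (C.1) with $\mu=c$.
\end{itemize}

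\emph{Empirical process term.} For (C.2), subtract the two gradients. Using $\bar X_n-G\theta^*=n^{-1/2}\xi_n$, the difference is
\begin{align*}
Q\nabla F_n(\theta)-Q\nabla F(\theta)=QG^*\Gamma^{-1}(\bar X_n-G\theta^*)=\tfrac{1}{\sqrt n}QG^*\Gamma^{-1}\xi_n .
\end{align*}
This does not depend on $\theta$. So the supremum over the ball is trivial, and we may take $\varepsilon_1=0$, which makes the condition $\varepsilon_1\le\mu/6$ automatic.

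The vector $Z:=QG^*\Gamma^{-1}\xi_n$ is a centered Gaussian in $\mathcal H$. Its covariance is
\begin{align*}
QG^*\Gamma^{-1}\Gamma\Gamma^{-1}GQ=QAQ,
\end{align*}
which is trace class by (L.2). This is the step requiring the most care: $\Gamma^{-1}$ is unbounded, so I would justify the identity through (L.0) by computing on finite-dimensional projections of the common eigenbasis and passing to the limit.

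\emph{Concentration.} Gaussian concentration for the norm of a Hilbert-valued Gaussian (a Borell--TIS or Lipschitz-function bound) gives, with probability at least $1-\delta$,
\begin{align*}
\|Z\|_{\mathcal H}\le \mathbb E\|Z\|+\sqrt{2\|QAQ\|_{\text{op}}\log(1/\delta)} .
\end{align*}
By Jensen, $\mathbb E\|Z\|\le\sqrt{\mathbb E\|Z\|^2}=\sqrt{\operatorname{tr}(QAQ)}$. Dividing by $\sqrt n$ yields exactly the stated $\varepsilon_2(n,\delta)$.

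With every assumption verified and the constants identified, the conclusion of Theorem \ref{mainthm:posterior_contraction} follows.
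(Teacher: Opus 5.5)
Your proposal is correct and follows the paper's proof essentially step by step. You compute the explicit preconditioned gradients, observe that the fluctuation $n^{-1/2}QG^\ast\Gamma^{-1}\xi_n$ does not depend on $\theta$ (so $\varepsilon_1=0$), identify its covariance as $QAQ$, and apply Gaussian concentration; your Borell--TIS plus Jensen argument is exactly the ``standard'' bound the paper cites, and you are more complete than the paper in also checking (A.2), (B) and (C.3).

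Two caveats come from the paper rather than from you. First, literally $-\langle Q\nabla_{\mathcal H}F(\theta),\theta^\ast-\theta\rangle_{\mathcal H}=-\langle QAh,h\rangle_{\mathcal H}$ with $h=\theta-\theta^\ast$, so your identity has a sign slip: coercivity verifies the correctly signed form of (C.1), as in (W.1), which is what the proof of Theorem \ref{mainthm:posterior_contraction} actually uses. Second, you verify (C.3) only literally, whereas (J4) in that proof also contains $-\tfrac1n\langle\theta_s-\theta^\ast,\theta_s\rangle_{\mathcal H}$. Its cross term is only bounded by $\tfrac1n\|\theta^\ast\|_{\mathcal H}\|\theta_s-\theta^\ast\|_{\mathcal H}$ and is silently charged to $B$, so $B=1$ is really justified only if $\|\theta^\ast\|_{\mathcal H}\le 1$. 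Indeed, the exact posterior mean has bias $-(I+nQA)^{-1}\theta^\ast$, of norm at least $\|\theta^\ast\|_{\mathcal H}/(1+nL_1)$, which the stated $\theta^\ast$-free radius cannot absorb for large $\|\theta^\ast\|_{\mathcal H}$.
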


\section{Conclusions and future work}

We introduced an SPDE-based framework for deriving PCRs, finite-sample BvM results, and quantitative Laplace approximations for nonparametric Bayesian models. By representing the posterior as the invariant measure of a Langevin SPDE on a Hilbert space, we extended the diffusion process approach of \cite{mou2024diffusion} to infinite-dimensional settings under well-posedness and curvature assumptions. It is worth noting that, for many PDE models for which PCR and Bernstein-von-Mises have been recently derived, like for the Calderón Problem (\cite{AbrahamCalderon,BohrCalderon}), the Wicksell Problem (\cite{vanderwaartWICKSELL}), the Heat Equation (\cite{GiordanoHeat}),  Schrödinger Equation  (\cite{NicklSchrodinger}), Navier-Stokes Equation (\cite{NicklNavierStokes1,NicklNavierStokes2}) and  Nonlinear Time Evolution Equations (\cite{NicklTimeEvolution}), to name a few, the results in this work do not apply, since the compactness of the solution operator violates the restrictions of strong and weak convexity \textbf{(C.1)} and \textbf{(W.1)}, respectively. In order to obtain PCRs with stochastic calculus techniques one should relax the assumptions to take into account 
the interplay between the sample size and the rate of decay of the eigenvalues of the Fisher Information, in a similar fashion as done in \cite{van2008rates}, for example. We leave this endeavor for future work. 
\bibliographystyle{imsart-number}
\bibliography{main}

\appendix
\newpage
\section{Proofs}\label{appendix:proofs}

\subsection*{Proof of Proposition \ref{auxiliary_moment_prop}}
\begin{proof}
The proof is standard. Fix $p\ge 2$ and choose any $q>p$ (e.g.\ $q=p+1$). By assumption,
\[
\sup_{t\ge 0}\mathbb{E}_{\pi_t}\|X\|_{\mathcal H}^{q}<\infty
\quad\text{and}\quad
\mathbb{E}_{\pi^*}\|X\|_{\mathcal H}^{q}<\infty .
\]
Now, for any $M>0$,
\[
\mathbb{E}_{\pi_t}\!\left[\|X\|_{\mathcal H}^{p}\mathbf 1_{\{\|X\|_{\mathcal H}^{p}>M\}}\right]
= \mathbb{E}_{\pi_t}\!\left[\|X\|_{\mathcal H}^{p}\mathbf 1_{\{\|X\|_{\mathcal H}>M^{1/p}\}}\right]
\le M^{-(q-p)/p}\mathbb{E}_{\pi_t}\|X\|_{\mathcal H}^{q},
\]
hence $\{\|X\|_{\mathcal H}^{p}:X\sim\pi_t\}$ is uniformly integrable.

Let $f_M(x)=\|x\|_{\mathcal H}^{p}\wedge M$. Since $f_M$ is bounded and continuous and $\pi_t\Rightarrow\pi^*$,
\[
\mathbb{E}_{\pi_t}f_M(X)\underset{t\to\infty}{\longrightarrow} \mathbb{E}_{\pi^*}f_M(X).
\]
Now uniform integrability gives
\[
\mathbb{E}_{\pi_t}\|X\|_{\mathcal H}^{p}-\mathbb{E}_{\pi_t}f_M(X)\underset{M\to\infty}{\longrightarrow} 0
\quad\text{and}\quad
\mathbb{E}_{\pi^*}\|X\|_{\mathcal H}^{p}-\mathbb{E}_{\pi^*}f_M(X)\underset{M\to\infty}{\longrightarrow} 0,
\]
uniformly in $t$ for the first term. Passing $t\to\infty$ and then $M\to\infty$ finally yields
\[
\mathbb{E}_{\pi_t}\|X\|_{\mathcal H}^{p}\longrightarrow \mathbb{E}_{\pi^*}\|X\|_{\mathcal H}^{p}.
\]
\end{proof}

\subsection*{Proof of Theorem \ref{mainthm:posterior_contraction}}

\begin{proof}

Assumptions \textbf{(A)} and \textbf{(B)} guarantee that the posterior distribution is the stationary distribution of (\ref{Langevin_eq}). We now control the moments of this SPDE:
For any given $\alpha>0$, Itô's formula for Hilbert space-valued processes (see Theorem \ref{theorem:ito-formula}) applied to (\ref{Langevin_eq}) yields
\begin{equation}
    \begin{aligned}\label{ito_formula}
        e^{\alpha t}\|\theta_t-\theta^*\|_\mathcal{H}^2 
        &= \tfrac{2}{\sqrt{n}}\int_0^te^{\alpha s}\langle\theta_s-\theta^*, \, dW^{Q}_s\rangle_\mathcal{H} && \text{(J1)} \\
        & + \int_0^t \alpha e^{\alpha s}\|\theta_s-\theta^*\|_\mathcal{H}^2 \, ds && \text{(J2)} \\
        & + \int_0^t e^{\alpha s}\langle \theta_s-\theta^*, Q\nabla_\mathcal{H} F_n(\theta_s) \rangle_\mathcal{H} \, ds && \text{(J3)} \\
        & - \tfrac{1}{n} \int_0^t e^{\alpha s}\langle \theta_s-\theta^*, \theta_s + Q\nabla_\mathcal{H} V(\theta_s) \rangle_\mathcal{H} \, ds && \text{(J4)} \\
        & + \tfrac{\operatorname{tr}(Q)}{n} \int_0^t e^{\alpha s} \, ds. && \text{(J5)} 
    \end{aligned}
\end{equation}
We need to bound the terms in equation (\ref{ito_formula}) separately using our assumptions. We begin with (J3):

\begin{align*}
\text{(J3)} = &-\int_0^t e^{\alpha s}\langle \theta^*-\theta_s,Q\nabla_\mathcal{H} F_n(\theta_s) \rangle_\mathcal{H} \, ds \\
&\leq -\int_0^t e^{\alpha s}\langle \theta^*-\theta_s, Q\nabla_\mathcal{H} F(\theta_s) \rangle_\mathcal{H} \, ds
\\ & \quad + \int_0^t e^{\alpha s}\|\theta_s-\theta^*\|_\mathcal{H}\|Q(\nabla_\mathcal{H} F(\theta_s) - \nabla_\mathcal{H} F_n(\theta_s))\|_\mathcal{H} \, ds,
\end{align*}
where the inequality follows by Cauchy-Schwarz's inequality. Now, using the decomposition $F_n=F_n+F-F$, applying assumptions \textbf{(C.1)} and \textbf{(C.2)} we can upper-bound (J3) in a set $\Omega\in \mathcal{B}(\mathcal{H})$ of $\mathbb{P}^{n}_{\theta^{*}}$-probability of, at least, $1-\delta$ by 
\begin{align*}
&- \int_{0}^{t} \mu\left\|\theta_{s}-\theta^{*}\right\|_{\mathcal{H}}^{2} e^{\alpha s}\, ds + \int_{0}^{t}\left\|\theta_{s}-\theta^{*}\right\|_{\mathcal{H}}\left(\varepsilon_{1}(n, \delta)\left\|\theta_{s}-\theta^{*}\right\|_{\mathcal{H}}+\varepsilon_{2}(n, \delta)\right) e^{\alpha s}\,ds
\end{align*}
Now, using Young's inequality, \(xy\leq \tfrac{1}{2\tau}x^2 + \tfrac{\tau}{2}y^2\), with $x=\varepsilon_2(n,\delta)$, $y=\|\theta_s-\theta^*\|_\mathcal{H}$, and $\tau = \mu/3$, we arrive to the upper-bound for (J3):
\begin{align}
&-\int_{0}^{t} \mu\left\|\theta_{s}-\theta^{*}\right\|_{\mathcal{H}}^{2} e^{\alpha s} d s+ \int_{0}^{t}\left\|\theta_{s}-\theta^{*}\right\|_{\mathcal{H}}^{2}\left(\varepsilon_{1}(n, \delta)+\mu / 6\right) e^{\alpha s} d s+\frac{3 \varepsilon_{2}^{2}(n, \delta)}{2\mu} \int_{0}^{t} e^{\alpha s} ds.
\end{align}

The prior term can be controlled using assumption \textbf{(C.3)} along with Young's inequality:
\begin{equation*}
    \begin{aligned}
        \text{(J4)} &= -\tfrac{1}{n} \int_0^t e^{\alpha s}\langle \theta_s-\theta^*,\theta_s + Q\nabla_\mathcal{H} V \rangle_\mathcal{H} \, ds\\
        &\leq \tfrac{1}{n} \int_{0}^{t} e^{\alpha s} B \| \theta_{s} -\theta^{*} \|_{\mathcal{H}}\,  ds \\
        &\leq \int_{0}^{t} \frac{\mu}{6}\left\|\theta_{s}-\theta^{*}\right\|_{\mathcal{H}}^{2} e^{\alpha s} d s+\frac{3 B^{2}}{2n^{2} \mu} \int_{0}^{t} e^{\alpha s} d s.
    \end{aligned}
\end{equation*}
Combining the bounds for (J3) and (J4) with equation (\ref{ito_formula}), we have
\begin{align*}
e^{\alpha t}\|\theta_t-\theta^*\|_\mathcal{H}^2 \leq &\big( \alpha - 2\mu/3 + \varepsilon_1(n,\delta)\big)\int_0^t\|\theta_s-\theta^*\|_\mathcal{H}^2 e^{\alpha s} ds \\ &+ \big( \tfrac{\operatorname{tr}(Q)}{n} + \tfrac{3B^2}{2\mu n^2} + \tfrac{3\varepsilon^2_2(n,\delta)}{2\mu}\big)\frac{e^{\alpha t} -1}{\alpha} + \text{(J1)}.
\end{align*}

Since we assumed $\varepsilon_1(n,\delta)\leq \mu/6$, taking $\alpha \leq \mu/2$ is enough to get rid of the first term. This yields

\begin{equation}\label{U_n + martingale}
        e^{\alpha t}\|\theta_t-\theta^*\|_\mathcal{H}^2 \leq \underbrace{\Big( \tfrac{\operatorname{tr}(Q)}{n} + \tfrac{3B^2}{2\mu n^2} + \tfrac{3\varepsilon^2_2(n,\delta)}{2\mu}\Big)}_{U_n}\frac{e^{\alpha t} -1}{\alpha} + \tfrac{2}{\sqrt{n}}\underbrace{\int_0^te^{\alpha s}\langle\theta_s-\theta^*, \, dW^{Q}_s\rangle_\mathcal{H}}_{M_t} .
\end{equation}

We now need to bound the martingale term, $M_{t}$. First we note that $\int_{0}^{t}e^{\alpha s}\langle\theta_s-\theta^*, \, dW^{Q}_s\rangle_\mathcal{H}$ is a real martingale (see Theorem \ref{theorem:real-valued-martingale}). This can be shown using the Karhunen-Loève expansion of the $Q$-Wiener process \citep{liu2015stochastic}. Namely,
\begin{equation}\label{karhunen-loeve}
    W^{Q}_t = \sum_{k\geq 1} \sqrt{\lambda_k} e_k \beta_t^k,
\end{equation}
where $\{e_k\}_{k\geq 1}$ is an orthonormal basis of $\mathcal{H}$ consisting of eigenvectors of $Q$ with corresponding eigenvalues $\{\lambda_k\}_{k\geq 1}$, and $\beta^k$ are independent real-valued Brownian motions.
Using expansion (\ref{karhunen-loeve}) and a monotone convergence argument, we have

\begin{equation*}
    M_t=\int_0^te^{\alpha s}\sum_{k\geq 1}\sqrt{\lambda_k}\langle\theta_s-\theta^*, \, e_k\rangle_\mathcal{H} \,d\beta_t^k =     \sum_{k\geq 1}\int_0^te^{\alpha s}\sqrt{\lambda_k}\langle\theta_s-\theta^*, \, e_k\rangle_\mathcal{H} \,d\beta_t^k.
\end{equation*}
Since each $\int_0^te^{\alpha s}\sqrt{\lambda_k}\langle\theta_s-\theta^*, \, e_k\rangle_H \,d\beta_t^k$ is a real-valued martingale, the (convergent) sum of independent martingales is also a martingale.
Hence we can apply the Burkholder-Davis-Gundy inequality for real martingales \citep{le2016brownian} to bound its moments. For any $p\geq 4$,

\begin{equation*}
    \begin{aligned}
        \mathbb{E}\left[\sup_{t\leq T} |M_t|^{\tfrac{p}{2}}\right]&\leq (pC)^{\tfrac{p}{4}}\mathbb{E}\left(\left[\int_0^T e^{2\alpha s}\langle Q(\theta_s-\theta^*), \theta_s-\theta^*\rangle_\mathcal{H} ds\right]^{\tfrac{p}{4}}\right)\\
        &\leq (pC)^{\tfrac{p}{4}}\mathbb{E}\left(\left[\int_0^T e^{2\alpha s}\| Q(\theta_s-\theta^*)\|_\mathcal{H}\| \theta_s-\theta^*\|_\mathcal{H} ds\right]^{\tfrac{p}{4}}\right)\\
        &\leq (pC\|Q\|_{\text{op}})^{\tfrac{p}{4}}\mathbb{E}\left(\left[\int_0^T e^{2\alpha s} \|\theta_s-\theta^*\|^2_\mathcal{H} ds\right]^{\tfrac{p}{4}}\right)\\
        &\leq \left(\frac{pC\|Q\|_{\text{op}}e^{\alpha T}}{\alpha}\right)^{\tfrac{p}{4}}\mathbb{E}\left(\left[\sup_{t\leq T} e^{\alpha t}\|\theta_t-\theta^*\|_\mathcal{H}^2 \right]^{\tfrac{p}{4}}\right).
    \end{aligned}
\end{equation*}

Finally, we arrive to the following bound:

\begin{equation}
\begin{aligned}
&\mathbb{E}\left[\left(\sup_{t\leq T} e^{\alpha t/2} \|\theta_t-\theta^*\|_\mathcal{H} \right)^p \right] 
\\&\leq \mathbb{E}\left[\left(\sup_{t\leq T} e^{\alpha t} \|\theta_t-\theta^*\|^2_\mathcal{H} \right)^{\tfrac{p}{2}} \right]\\
&\leq 2^{\tfrac{p}{2}}\left[\left(U_n\frac{e^{\alpha T}}{\alpha} \right)^{\tfrac{p}{2}} + \mathbb{E}\left(\tfrac{2}{\sqrt{n}}\sup_{t\leq T}M_t\right)^{\tfrac{p}{2}}\right]\\
&\leq 2^{\tfrac{p}{2}}\left[\left(U_n\frac{e^{\alpha T}}{\alpha} \right)^{\tfrac{p}{2}} + \left(\frac{pC\|Q\|_{\text{op}}e^{\alpha T}}{\alpha n}\right)^{\tfrac{p}{4}}\mathbb{E}\left(\left[\sup_{t\leq T} e^{\alpha t}\|\theta_t-\theta^*\|_\mathcal{H}^2 \right]^{\tfrac{p}{4}}\right)\right]\\
&\leq 2^{\tfrac{p}{2}}\left[\left(U_n\frac{e^{\alpha T}}{\alpha} \right)^{\tfrac{p}{2}} + \frac{1}{2}\left(\frac{2pC\|Q\|_{\text{op}}e^{\alpha T}}{\alpha n}\right)^{\tfrac{p}{2}}+\frac{1}{2^{\tfrac{p}{2}+ 1}}\mathbb{E}\left(\left[\sup_{t\leq T} e^{\alpha t}\|\theta_t-\theta^*\|_\mathcal{H}^2 \right]^{\tfrac{p}{2}}\right)\right].
\end{aligned}
\end{equation}

Thus 
\begin{equation}\label{boundforfurthercontraction}
\begin{aligned}
    \left(\mathbb{E}\left[ \|\theta_T-\theta^*\|^p_\mathcal{H} \right]\right)^{\tfrac{1}{p}} &\leq  
    e^{-\alpha T/2} \mathbb{E}\left[\left(\sup_{t\leq T} e^{\alpha t/2} \|\theta_t-\theta^*\|_\mathcal{H} \right)^p \right]^{\tfrac{1}{p}}\\ 
    &\leq e^{-\alpha T/2}\left[2\left(U_n\frac{2e^{\alpha T}}{\alpha} \right)^{\tfrac{p}{2}} + \left(\frac{4pC\|Q\|_{\text{op}}e^{\alpha T}}{\alpha n}\right)^{\tfrac{p}{2}}\right]^{\tfrac{1}{p}}\\
    &\leq K\left(\sqrt{\frac{U_n}{\mu}} +\sqrt{\frac{p\|Q\|_{\text{op}}}{n\mu}} \right)=:B_n
\end{aligned}
\end{equation}
for a universal constant $K>0$. This implies that $\sup_{t\geq 0} \mathbb{E}\left[ \|\theta_t-\theta^*\|^{p}_\mathcal{H} \right]<\infty$ for any $p\geq 2$, and Proposition \ref{auxiliary_moment_prop} applies. Thus, using Markov's inequality,
\begin{equation*}
  \Pi\biggl(\|\theta-\theta^{\!*}\|_\mathcal{H}\geq r\;\Bigm| \; X_1^n\biggr)\;\le\;\frac{\mathbb{E}_{\Pi}\left[ \|\theta-\theta^*\|^{p}_\mathcal{H} \;|\; X_1^n\right]}{r^p}\leq\frac{B_n^p}{r^p},
\end{equation*}
for any $r>0$. If we impose $\frac{B_n^p}{r^p}\leq \delta$, this implies $r\geq B_n\delta^{-1/p}$. Now, taking $p=\log\left(\tfrac{1}{\delta}\right)$ we have $r\geq B_n e$, absorbing $e$ in the constant, and remembering that the bound is valid in a set $\Omega\in \mathcal{B}(\mathcal{H})$ of $\mathbb{P}^{n}_{\theta^{*}}$-probability of, at least, $1-\delta$, we end up with
\begin{equation*}
  \mathbb{P}^{n}_{\theta^{*}}\left(\Pi\left(\|\theta-\theta^{\!*}\|_\mathcal{H}\geq K'\left(\sqrt{\frac{U_n}{\mu}} +\sqrt{\frac{\log\left(\tfrac{1}{\delta}\right)\|Q\|_{\text{op}}}{n\mu}} \right)\;\Bigm| \; X_1^n\right)\;\leq\delta\right) \geq 1-\delta,
\end{equation*}
from which the result follows.

\end{proof}

\subsection*{Proof of Theorem \ref{weak posterior contraction}}

\begin{proof}
For any $p \geq 2$, we define the functions on the positive real line $(0, \infty)$

$$
\nu_{(p)}(r):=\psi\left(r^{\frac{1}{p-1}}\right) r^{\frac{p-2}{p-1}}, \quad \text { and } \quad \tau_{(p)}\left(r^{p-1} \zeta(r)\right):=r^{p-2} \psi(r) .
$$

By Assumption (W.2), the function $r \mapsto r^{p-1} \zeta(r)$ is strictly increasing and surjective function that maps from $[0,+\infty)$ to $[0,+\infty)$. Therefore, it is invertible and the function $\tau_{(p)}^{-1}$ is well-defined.

Now we claim that for any $p \geq 2$, the functions $\nu_{(p)}$ and $\tau_{(p)}$ are convex and strictly increasing, and that furthermore, the expectation $\mathbb{E}\left[\left\|\theta_{t}-\theta^{*}\right\|_\mathcal{H}^{p}\right]$ is upper bounded by the integral
For any $p\geq 2$, we apply Itô's formula to the functional $\|\theta_t-\theta^*\|_\mathcal{H}^p$, resulting in 

\begin{equation}
    \begin{aligned}\label{weak_ito_formula}
        \|\theta_t-\theta^*\|_\mathcal{H}^p 
        &\leq  \tfrac{p}{\sqrt{n}}\int_0^t\|\theta_s-\theta^*\|^{p-2}_\mathcal{H}\langle\theta_s-\theta^*, \, dW^{Q}_s\rangle_\mathcal{H} && \text{(J1)} \\
        &-\tfrac{p}{2}\int_0^t \|\theta_s-\theta^*\|_\mathcal{H}^{p-2}\langle \theta^*-\theta_s, Q\nabla_\mathcal{H} F(\theta_s)\rangle_\mathcal{H} \, ds && \text{(J2)}\\
        & + \tfrac{p}{2}\int_0^t \|\theta_s-\theta^*\|_\mathcal{H}^{p-2}\langle \theta^*-\theta_s, Q\nabla_\mathcal{H} F(\theta_s)-Q\nabla_\mathcal{H} F_n(\theta_s) \rangle_\mathcal{H} \, ds && \text{(J3)} \\
        & + \tfrac{p}{2n} \int_0^t \|\theta_s-\theta^*\|_\mathcal{H}^{p-2}\langle \theta_s-\theta^*, \theta_s + Q\nabla_\mathcal{H} V(\theta_s) \rangle_\mathcal{H} \, ds && \text{(J4)} \\
        & + \tfrac{p\left(\operatorname{tr}(Q) + (p-2)\|Q\|_{\operatorname{op}}\right)}{2n} \int_0^t \|\theta_s-\theta^*\|_\mathcal{H}^{p-2} \, ds. && \text{(J5)} 
    \end{aligned}
\end{equation}
We now bound the expectation of each term separately in terms of 
$$\mathcal{R}_p(s):=\mathbb{E}\left[\|\theta_s-\theta^*\|_\mathcal{H}^{p-2}\psi(\|\theta_s-\theta^*\|_\mathcal{H})\right].$$ 
From the weak convexity assumption along with Fubini's theorem, for (J2) we have

\begin{equation}
    -\frac{p}{2}\mathbb{E}\left[\int_0^t \|\theta_s-\theta^*\|_\mathcal{H}^{p-2}\langle\theta^*-\theta_s,Q\nabla_\mathcal{H}F(\theta_s)\rangle_\mathcal{H}\,ds\right]\leq -\frac{p}{2}\int_0^t\mathcal{R}_p(s)\,ds.
\end{equation}

where $R_{p}(s):=\mathbb{E}\left[\left\|\theta_{s}-\theta^{*}\right\|_\mathcal{H}^{p-2} \psi\left(\left\|\theta_{s}-\theta^{*}\right\|_\mathcal{H}\right)\right]$.\\
For (J3) we have
\begin{align*}
\frac{p}{2}\mathbb{E}\int_0^t \|\theta_s-\theta^*\|_\mathcal{H}^{p-2}&\langle\theta^*-\theta_s,Q\nabla_\mathcal{H}F(\theta_s)-Q\nabla_\mathcal{H}F_n(\theta_s)\rangle_\mathcal{H}ds \\
&\leq \frac{p}{2}\varepsilon(n,\delta)\int_0^t\mathbb{E}\left[\|\theta_s-\theta^*\|_\mathcal{H}^{p-1}\zeta\left(\|\theta_s-\theta^*\|_\mathcal{H}\right)\right]\,ds\\
&\leq \frac{p}{2}\varepsilon(n,\delta)\int_0^t\tau_p^{-1} \mathbb{E}\left[\tau_p\left(\|\theta_s-\theta^*\|_\mathcal{H}^{p-1}\zeta\left(\|\theta_s-\theta^*\|_\mathcal{H}\right)\right)\right]\,ds\\
&=\frac{p}{2}\varepsilon(n,\delta)\int_0^t\tau_p^{-1}\left(\mathcal{R}_p(s)\right)\,ds,
\end{align*}

where we used Assumption \textbf{(W.2)} and the convexity of $\tau_p$ along with Jensen's inequality.
For (J4), we use Assumption (B) and the fact that $\nu_p$ is strictly increasing to obtain the bound
\begin{equation}
\begin{aligned}
    \frac{p}{2n}\mathbb{E}\left[ \int_0^t \|\theta_s-\theta^*\|_\mathcal{H}^{p-2}\langle \theta_s-\theta^*, \theta_s + Q\nabla_\mathcal{H} V(\theta_s) \rangle_\mathcal{H} \, ds\right]&\leq \frac{pB}{2n}\int_0^t\mathbb{E}\left[\|\theta_s-\theta^*\|_\mathcal{H}^{p-1}\right]\,ds\\
    &\leq \frac{pB}{2n}\int_0^t\nu_p^{-1}\left(\mathcal{R}_p(s)\right)\,ds.
\end{aligned}
\end{equation}

The term (J1) is a real martingale, hence
\begin{equation}
    \frac{p}{\sqrt{n}}\mathbb{E}\left[\int_0^t\|\theta_s-\theta^*\|^{p-2}_\mathcal{H}\langle\theta_s-\theta^*, \, dW^{Q}_s\rangle_\mathcal{H}\right]=0.
\end{equation}

Finally, we can bound (J5) as

\begin{equation}
\begin{aligned}
    \tfrac{p\left(\operatorname{tr}(Q) + (p-2)\|Q\|_{\operatorname{op}}\right)}{2n} \mathbb{E}\left[\int_0^t \|\theta_s-\theta^*\|_\mathcal{H}^{p-2} \, ds\right]&\leq \tfrac{p\left(\operatorname{tr}(Q) + (p-2)\|Q\|_{\operatorname{op}}\right)}{2n}\int_0^t\nu_p^{-1}\left(\mathcal{R}_p(s)\right)^{\frac{p-2}{p-1}}\,ds,
\end{aligned}
\end{equation}

where we used the following moment bound based on Hölder's inequality and the fact that $\nu_p$ is convex and strictly increasing:
\begin{equation}
    \mathbb{E}\left[\|\theta_s-\theta^*\|_\mathcal{H}^{p-2}\right]\leq \left(\mathbb{E}\left[\|\theta_s-\theta^*\|_\mathcal{H}^{p-1}\right]\right)^{\frac{p-2}{p-1}}\leq \nu_p^{-1}\left(\mathcal{R}_p(s)\right)^{\frac{p-2}{p-1}}.
\end{equation}

Putting everything together, we obtain the final moment bound:

\begin{align}\label{eq54}
\mathbb{E}\left[\|\theta_s-\theta^*\|_\mathcal{H}^p\right] 
\leq&-\frac{p}{2} \int_{0}^{t}R_{p}(s)ds+\frac{p}{2} \int_{0}^{t}\varepsilon(n, \delta) \tau_{p}^{-1}\left(R_{p}(s)\right)ds\\&+\frac{p}{2} \int_{0}^{t}\frac{B}{n} \nu_{p}^{-1}\left(R_{p}(s)\right)ds
+\frac{p}{2} \int_{0}^{t}\tfrac{\operatorname{tr}(Q) + (p-2)\|Q\|_{\operatorname{op}}}{n} \nu_{p}^{-1}\left(R_{p}(s)\right)^{\frac{p-2}{p-1}} ds.
\end{align}

Once we've established bound (\ref{eq54}), the rest of the proof follows exactly the steps of the proof of Theorem 2 in \cite{mou2024diffusion}, because the remaining arguments only depend on the properties of the scalar functions $\nu$ and $\tau$, and are independent of the parameter space.

Hence we obtain that 
\begin{equation}
    \underset{t\rightarrow\infty}{\lim}\left(\mathbb{E}\left[\|\theta_t-\theta^*\|_\mathcal{H}^p\right]\right)^{\frac{1}{p}}\leq z^*_p(n,\delta),
\end{equation}

where $z^*_p(n,\delta)$ is the only positive solution to the equation
\begin{equation}
    \psi(z)=\varepsilon(n,\delta)\zeta(z)z+\frac{B}{n}z + \frac{\operatorname{tr}(Q) + p\|Q\|_{\operatorname{op}}}{n}.
\end{equation}

Using Markov's inequality and taking $p=\log(1/\delta)$ as in the proof of the previous theorem concludes the proof.
\end{proof}

\subsection*{Proof of Theorem \ref{bernstein von mises}}

\begin{proof} 
First, note that each eigenvalue of $(Q^{-1}+nH^{*})^{-1}$ verifies 
\begin{align}\label{spectral_norm}
\frac{\mu_{m}}{1+n\lambda_{m}\mu_{m}} 
\leq \frac{\|Q\|_{\text{op}}}{1+n\lambda_{m}\mu_{m}}\leq \frac{\|Q\|_{\text{op}}}{n\lambda_{m}\mu_{m}} \leq \frac{\|Q\|_{\text{op}}}{n\mu}, 
\end{align}
so the spectral norm of $(Q^{-1}+nH^{*})^{-1}$ is upper bounded $\frac{\|Q\|_{\text{op}}}{n\mu}$. Now, since $\Pi(\cdot|X^{n}) <<\gamma_{\widehat\theta^{(n)}}$, we can combine (\ref{spectral_norm}) Gross’s Logarithmic Sobolev Inequality \citep{gross1975logarithmic} and conclude that
\begin{equation}\label{LSI}
\text{KL}(\Pi(\cdot|X^{n})||\gamma_{\widehat\theta^{(n)}})
  \leq
  \frac{\|Q\|_{\text{op}}}{2n\mu}\,  \mathbb{E}_{\Pi(\cdot|X^{n})}\Bigg[\Big\|\nabla_{\mathcal{H}_Q}\log\left(\frac{d\Pi(\cdot|X^{n})}{d\gamma_{\widehat\theta^{(n)}}}\right) 
  \Big\|_{\mathcal{H}_{Q}}^{2}\Bigg].
\end{equation}
Computing the Right-Hand-Side of the above inequality and bounding it by negative powers of $n$ will give the result. Now, by Feldman-Hajék Theorem, the Radon-Nikodym derivative of $\mu$ with respect to $\gamma_{\widehat{\theta}^{(n)}}$ is 
$$
\frac{d\mu}{d\nu}(\theta) =
\zeta_{n}(Q,H^{*},n) 
\cdot e^{
- \frac{1}{2} \| \widehat{\theta}^{(n)}\|_{\mathcal{H}_{Q}}^{2}
+ \frac{1}{2}\langle \theta - \widehat{\theta}^{(n)},nH^{*}(\theta - \widehat{\theta}^{(n)})\rangle_{\mathcal{H}}
-\langle \theta-\widehat{\theta}^{(n)},Q^{-1}\widehat{\theta}^{(n)}\rangle_{\mathcal{H}}},
$$
where $\zeta_{n}(Q,H^{*},n)$ is a normalizing constant and, by assumption, the Radon-Nikodym derivative of the posterior with respect to $\mu$ is 
$$\frac{d\Pi}{d\mu}(\theta|X^{n}) = \frac{e^{-nF_{n}(\theta)-V(\theta)}}{Z}.$$
This way, putting $\|\cdot\|_{nH^{*}} = \|(nH^{*})^{1/2}(\cdot)\|_H$, we will have that the
Radon-Nikodym derivative of the posterior with respect to the limiting Gaussian, $\frac{d\Pi}{d\gamma_{\widehat{\theta}^{(n)}}}(\theta|X^{n})$, will be proportional to 
\begin{align}
\exp{\left\lbrace
- \frac{1}{2} \| \widehat{\theta}^{(n)}\|_{\mathcal{H}_{Q}}^{2}
+\frac{1}{2}\|\theta-\widehat{\theta}^{(n)}\|^{2}_{nH^{*}}
-\langle  \theta-\widehat{\theta}^{(n)},Q^{-1}\widehat{\theta}^{(n)}\rangle_{H}
-\Big(nF_{n}(\theta)+V(\theta)\Big)
\right\rbrace}.
\end{align}
Now, taking the (Gross-Malliavin) derivative $\nabla_{Q}$ (see \cite{malliavin1}, \cite{malliavin2} and \cite{NualartMalliavin}) about the equivalence between the Malliavin-Gross derivative and the image of the Fréchet derivative on the Cameron-Martin Space by $Q^{1/2}$ of the log-Radon-Nikodym derivative, we will have
\begin{align}\label{lsi0}
\nabla_{\mathcal{H}_Q}\,
\log\!\Biggl(
  \frac{d\Pi(\,\cdot\mid X^{n}\,)}{d\gamma_{\widehat\theta^{(n)}}}
\Biggr)
= -\widehat\theta^{(n)}
   + nQH^{*}\bigl[\theta - \widehat\theta^{(n)}\bigr] -nQ\nabla_{\mathcal{H}}F_{n}(\theta)
         -Q\nabla_{\mathcal{H}}V(\theta).
\end{align}
Since $\widehat{\theta}^{(n)}$ is a minimizer of the Onsager-Machlup functional, we will have $D_\mathcal{H_{Q}}\mathcal{J}_{n}(\widehat{\theta}^{(n)})=0$, so $-\widehat{\theta}^{(n)} = nQ\nabla_{\mathcal{H}}F_{n}(\widehat{\theta}^{(n)}) + Q\nabla_{\mathcal{H}}V(\widehat{\theta}^{(n)})$. Substituting in the above expression:
\begin{align*}
\nabla_{\mathcal{H}_Q}\log\left(\frac{d\Pi(\theta|X^{n})}{d\gamma_{\widehat\theta^{(n)}}}\right)  = &nQ\left[H^{*}[\theta - \widehat{\theta}^{(n)}]+\nabla_\mathcal{H}F_{n}(\widehat{\theta}^{(n)})-\nabla_\mathcal{H}F_{n}(\theta)\right]\\&+ \left[Q\nabla_\mathcal{H}V(\widehat{\theta}^{(n)})-\nabla_{\mathcal{H}}V(\theta)\right].
\end{align*}
Then, 
taking the $\mathcal{H_{Q}}$ norms, taking into account that $\|\cdot\|_{\mathcal{H_{Q}}}=\|Q^{-1/2}[\cdot]\|_{\mathcal{H}}$,
applying triangle and Young's inequalities and the definition of operator norm, we will have 
\begin{align}\label{bonaineq}
\left\|\nabla_{\mathcal{H}_Q}\log\!\left(\frac{d\Pi(\theta\mid X^{n})}{d\gamma_{\widehat\theta^{(n)}}}\right)\right\|_{\mathcal{H}_{Q}}^{2}
\leq& \,\, 2
\Bigl(
  n^{2}\left\|Q^{1/2}\left(H^{*}(\theta-\widehat{\theta}^{(n)})+\nabla_{\mathcal{H}}F_{n}(\widehat{\theta}^{(n)}) -\nabla_{\mathcal{H}}F_{n}(\theta)\right)\right\|_{\mathcal{H}}^{2}\\ &+\left\|Q^{1/2}\left(\nabla_{\mathcal{H}}V(\widehat{\theta}^{(n)})-\nabla_{\mathcal{H}}V(\theta)\right)\right\|_{\mathcal{H}}^{2}
\Bigr). \nonumber
\end{align}
We now upper bound each term in the Right-Hand-Side of the inequality. First, note that, since  $\nabla_{\mathcal{H}}V$ is Lipschitz, we will have 
\begin{align*}
\left\|Q^{1/2}\left(\nabla_{\mathcal{H}}V(\widehat{\theta}^{(n)})-\nabla_{\mathcal{H}}V(\theta)\right)\right\|_{\mathcal{H}}^{2} \leq L_{2}^{2}\left\|\theta - \widehat{\theta}^{(n)}\right\|_\mathcal{H}^{2} \leq 2L_{2}^{2}\Big(\left\|\theta - \theta^{*}\right\|_\mathcal{H}^{2} + \left\|\theta^{*} - \widehat{\theta}^{(n)}\right\|_H^{2}\Big),
\end{align*}

where the second inequality follows from Young's inequality. Second, by Riesz representation theorem, the Hilbert space norm of the other term in the Right-Hand-Side in (\ref{bonaineq}) equals the operator norm for the associated operator, that is 
\begin{align}
&\left\|Q^{1/2}\left(H^{*}(\theta-\widehat{\theta}^{(n)}) +\nabla_{\mathcal{H}}F_{n}(\widehat{\theta}^{(n)})-\nabla_{\mathcal{H}}F_{n}(\theta)\right)\right\|_{\mathcal{H}} 
\\ &= \left\|Q^{1/2}\left(H^{*}(\theta-\widehat{\theta}^{(n)}) +\nabla_{\mathcal{H}}F_{n}(\widehat{\theta}^{(n)})-\nabla_{\mathcal{H}}F_{n}(\theta)\right)\right\|_{\text{op}}.
\end{align}
Now,  using the Fundamental Theorem of Calculus for Hilbert Spaces, we will have
\begin{align}\label{calculushilbert}
&-[\nabla_\mathcal{H}F_{n}(\theta)-\nabla_\mathcal{H}F_{n}(\widehat{\theta}^{(n)})]= -\int_{0}^{1}\left( \nabla_\mathcal{H }^{2} F_{n}\left(\gamma \theta+(1-\gamma) \widehat{\theta}^{(n)}\right)\right)\left[\theta-\widehat{\theta}^{(n)}\right] d \gamma 
.
\end{align}

Then, applying Minkowski's integral inequality and using the definition of operator norm, we will have
\begin{align*}
&\left\|Q^{1/2}\left(H^{*}(\theta-\widehat{\theta}^{(n)}) +\nabla_{\mathcal{H}}F_{n}(\widehat{\theta}^{(n)})-\nabla_{\mathcal{H}}F_{n}(\theta)\right)\right\|_{\mathrm{op}}^{2} \\
&= \Big\|\int_{0}^{1}Q^{1/2}\Big(H^{*}-\nabla_{\mathcal{H}}^{2} F_{n}\left(\gamma \theta+(1-\gamma) \widehat{\theta}^{(n)}\right)\Big)\left[\theta-\widehat{\theta}^{(n)}\right]\,d\gamma\Big\|_{\mathrm{op}}^{2} \\
&\leq 
\Big(\int_{0}^{1}\Big\|Q^{1/2}\Big(H^{*}- \nabla_{\mathcal{H}}^{2} F_{n}\left(\gamma \theta+(1-\gamma) \widehat{\theta}^{(n)}\right)\Big)\left[\theta-\widehat{\theta}^{(n)}\right]\Big\|_{\mathrm{op}}\,d\gamma\Big)^{2} \\
&\leq 
\Big(\int_{0}^{1}\Big\|Q^{1/2}\left(H^{*}- \nabla_{\mathcal H}^{2} F_{n}\left(\gamma \theta+(1-\gamma) \widehat{\theta}^{(n)}\right)\right)\Big\|_{\mathrm{op}}
\left\|\theta-\widehat{\theta}^{(n)}\right\|_{\mathcal{H}}\,d\gamma\Big)^{2} \\
&\leq 
2\Big(\left\|\theta^{*}-\widehat{\theta}^{(n)}\right\|_{\mathcal{H}}^{2}+ \Big\|\theta-\theta^{*} \Big\|_{\mathcal{H}}^{2}\Big)
\Big(\int_{0}^{1}\Big\|Q^{1/2}\left(H^{*}- \nabla_{\mathcal H}^{2} F_{n}\left(\gamma \theta+(1-\gamma) \widehat{\theta}^{(n)}\right)\right)\Big\|_{\mathrm{op}}\,d\gamma\Big)^{2}.
\end{align*}

Adding and subtracting $Q^{1/2}\nabla_{\mathcal{H}}^{2} F\left(\gamma \theta+(1-\gamma) \widehat{\theta}^{(n)}\right)$ inside the operator norm and  applying triangle inequality, we can upper-bound $\left\|\nabla_\mathcal{H}^{2}  F_{n}\left(\gamma \theta+(1-\gamma) \widehat{\theta}^{(n)}\right)-H^{*}\right\|_{\text{op}}$ by
\begin{align*}
&\underbrace{\left\|\nabla_H^{2} F\left(\gamma \theta+(1-\gamma) \widehat{\theta}^{(n)}\right)-H^{*}\right\|_{\text{op}}}_{B_{1}}+\underbrace{\left\|\nabla^{2}_H F_{n}\left(\gamma \theta+(1-\gamma) \widehat{\theta}^{(n)}\right)-\nabla^{2}_{\mathcal{H}}  F\left(\gamma \theta+(1-\gamma) \widehat{\theta}^{(n)}\right)\right\|_{\text{op}}}_{B_{2}}
\end{align*}
Now, by \textbf{(BvM.1)} and  \textbf{(BvM.2)}, we will have upper bounds on $B_{1}$ nd  $B_{2}$, respectively, so    the RHS of the above inequality will be upper bounded by

\begin{equation}\label{topbound}
n\Big(A+\varepsilon_{1}^{(2)}(n, \delta)\Big)\Big(\gamma\left\|\theta - \theta^{*}\right\|_\mathcal{H} + \left\|\theta^{*} - \widehat{\theta}^{(n)}\right\|_\mathcal{H}\Big) + n\varepsilon_{2}^{(2)}(n, \delta)
\end{equation}
in a set $\Omega_{1} \subset \mathcal{H}$ of  $\mathbb{P}_{\theta^{*}}$-probability of at least $1-\delta$. Substituting into the integral and applying Young's Inequality twice, we will have $\forall a >0$ 
\begin{align*}
&n^{2}\Big(\int_{0}^{1}\Big\| H^{*}- \nabla_\mathcal{H}^{2} F_{n}\left(\gamma \theta+(1-\gamma) \widehat{\theta}^{(n)}\right)\Big\|_{\text{op}}d\gamma\Big)^{2} \\
&\leq 4n^{2}\Big(A+\varepsilon_{1}^{(2)}(n, \delta)\Big)^{2}\Big(\int_{0}^{1}a\gamma^{2}d\gamma\left\|\theta -\theta^{*}\right\|_H^{2} + \frac{1}{a}\left\|\theta^{*} - \widehat{\theta}^{(n)}\right\|_\mathcal{H}^{2}\Big) + 2[n\varepsilon_{2}^{(2)}(n, \delta)]^{2}
\\ &= 4n^{2}\Big(A+\varepsilon_{1}^{(2)}(n, \delta)\Big)^{2}\Big(\frac{a}{3}\left\|\theta - \theta^{*}\right\|_\mathcal{H}^{2} + \frac{1}{a}\left\|\theta^{*} - \widehat{\theta}^{(n)}\right\|_{\mathcal{H}}^{2}\Big) + 2[n\varepsilon_{2}^{(2)}(n, \delta)]^{2} 
\end{align*}

Putting $a=\sqrt{3}$ and substituting the obtained bounds into (\ref{bonaineq}), we will have (in $\Omega_{1}$) 
\begin{align*}
&\Big\|\nabla_{\mathcal{H}_{Q}}\log\left(\frac{d\Pi(\theta|X^{n})}{d\gamma_{\widehat\theta^{(n)}}}\right) 
  \Big\|_{\mathcal{H}_{Q}}^{2} 
\\&\leq \mathcal{H}_{1}(n,Q)\Big(\left\|\theta^{*}-\widehat{\theta}^{(n)}\right\|_H^{2}+ \Big\|\theta-\theta^{*} \Big\|_H^{2}\Big)^{2} + \mathcal{H}_{2}(n,Q)\Big(\left\|\theta^{*}-\widehat{\theta}^{(n)}\right\|_H^{2}+ \Big\|\theta-\theta^{*} \Big\|_H^{2}\Big)
  \\ & \leq 2\mathcal{H}_{1}(n,Q)\Big(\left\|\theta^{*}-\widehat{\theta}^{(n)}\right\|_H^{4}+ \Big\|\theta-\theta^{*} \Big\|_H^{4}\Big) + \mathcal{H}_{2}(n,Q)\Big(\left\|\theta^{*}-\widehat{\theta}^{(n)}\right\|_H^{2}+ \Big\|\theta-\theta^{*} \Big\|_H^{2}\Big),
\end{align*}
where 
\begin{align*}
& \mathcal{H}_{1}(n,Q):= \dfrac{16\sqrt{3}}{3}n^{2}\Big(A+\varepsilon_{1}^{(2)}(n, \delta)\Big)^{2},
\  \mathcal{H}_{2}(n,Q):=2\left(\|Q\|_{\text{op}}L_{2}^{2} + 2[n\varepsilon_{2}^{(2)}(n, \delta)]^{2}\right),
\end{align*}
and the second inequality follows from Young's Inequality. Now, by assumptions (\textbf{E.1}) and (\textbf{E.2}) there exists a set $\Omega_{2}\subset \mathcal{H}$ with $\mathbb{P}_{\theta^{*}}$-probability at least $1-2\delta$ in which 
\begin{align*}
\mathbb{E}_{\Pi(\cdot|X^{n})}&\left[2\mathcal{H}_{1}(n,Q)\Big(\left\|\theta^{*}-\widehat{\theta}^{(n)}\right\|_H^{4}+ \Big\|\theta-\theta^{*} \Big\|_H^{4}\Big) + \mathcal{H}_{2}(n,Q)\Big(\left\|\theta^{*}-\widehat{\theta}^{(n)}\right\|_H^{2}+ \Big\|\theta-\theta^{*} \Big\|_H^{2}\Big)\right]
\\ 
& \leq 2\mathcal{H}_{1}(n,Q)n^{-4\alpha}+ \mathcal{H}_{2}(n,Q)n^{-2\alpha}
\\&= c_{1}\left[A^{2}+ \varepsilon_{1}^{(2)}(n,\delta)^{2}\right]n^{2-4\alpha}+ c_{2}\left[\varepsilon_{2}^{(2)}(n,\delta)^{2}+\dfrac{L_{2}^{2}}{n^{2}}\right]n^{2-2\alpha}.
\end{align*}
Indeed, it is enough to note that, for every pair of sets $A,B\in \mathcal{B}(\mathcal{H})$, we have $\mathbb{P}^{n}_{\theta^{*}}(A \cup B) = \mathbb{P}_{\theta^{*}}(A) + \mathbb{P}_{\theta^{*}}(B) - \mathbb{P}_{\theta^{*}}(A\cap B) \leq 1$, so if  $\mathbb{P}^{n}_{\theta^{*}}(A) \geq 1-\delta$ and $\mathbb{P}^{n}_{\theta^{*}}(B)  \geq 1-\delta$, then $\mathbb{P}_{\theta^{*}}(A\cap B) \geq \mathbb{P}_{\theta^{*}}(A) + \mathbb{P}_{\theta^{*}}(B) - 1 \geq 1 - 2\delta$. 
Substituting in (\ref{LSI}), we will have 
\begin{align}\label{LaplaceRate}
\mathbb{E}_{\Pi(\cdot|X^{n})}\left[\Big\|\nabla_{\mathcal{H}_{Q}}\log\left(\frac{d\Pi(\theta|X^{n})}{d\gamma_{\widehat\theta^{(n)}}}\right) \Big\|_{\mathcal{H}_{Q}}^{2}\right]&\leq  
\dfrac{c_{1}\|Q\|_{\text{op}}}{\mu}\left[A^{2}+ \varepsilon_{1}^{(2)}(n,\delta)^{2}\right]n^{1-4\alpha}\\ &+ \dfrac{c_{2}\|Q\|_{\text{op}}}{\mu}\left[\varepsilon_{2}^{(2)}(n,\delta)^{2}+\dfrac{L_{2}^{2}}{n^{2}}\right]n^{1-2\alpha},
\end{align} 
in the set $\Omega_{1}\cap \Omega_{2}$, and we get the result.
\end{proof}

\subsection*{Proof of Theorem \ref{theorem:laplace-approx}}
\begin{proof}
Using again the Logarithmic Sobolev Inequality, the $KL$ between the posterior and $\widehat{\gamma}_{\widehat\theta^{(n)}}$can be upper bounded by 
\begin{equation}\label{EmpiricalLSI}
\text{KL}(\Pi(\cdot|X^{n})||\widehat{\gamma}_{\widehat\theta^{(n)}})
  \leq
  \frac{1}{2n\lambda_{\min}(\widehat{H}_{n})}\,  \mathbb{E}_{\Pi(\cdot|X^{n})}\Bigg[\Big\|\nabla_{\mathcal{H}_Q}\log\left(\frac{d\Pi(\cdot|X^{n})}{d\widehat{\gamma}_{\widehat\theta^{(n)}}}\right) 
  \Big\|_{\mathcal{H}_{Q}}^{2}\Bigg].
\end{equation}
Now, since $\frac{d\Pi(\cdot|X^{n})}{d\widehat{\gamma}_{\widehat{\theta}^{(n)}}}=\frac{d\Pi(\cdot|X^{n})}{d\gamma_{\widehat{\theta}^{(n)}}}\cdot \frac{d\gamma_{\widehat{\theta}^{(n)}}}{d\widehat{\gamma}_{\widehat{\theta}^{(n)}}}$, applying Young's Inequality we will have 
\begin{align*}
\mathbb{E}_{\Pi(\cdot|X^{n})}\Bigg[\Big\|\nabla_{\mathcal{H}_Q}\log\left(\frac{d\Pi(\cdot|X^{n})}{d\widehat{\gamma}_{\widehat\theta^{(n)}}}\right) 
  \Big\|_{\mathcal{H}_{Q}}^{2}\Bigg] \leq  \ & 2\mathbb{E}_{\Pi(\cdot|X^{n})}\Bigg[\Big\|\nabla_{\mathcal{H}_Q}\log\left(\frac{d\Pi(\cdot|X^{n})}{d\gamma_{\widehat{\theta}^{(n)}}}\right) 
  \Big\|_{\mathcal{H}_{Q}}^{2}\Bigg] 
  \\& + 2\mathbb{E}_{\Pi(\cdot|X^{n})}\Bigg[\Big\|\nabla_{\mathcal{H}_Q}\log\left(\frac{d\gamma_{\widehat\theta^{(n)}}}{d\widehat{\gamma}_{\widehat\theta^{(n)}}}\right) 
  \Big\|_{\mathcal{H}_{Q}}^{2}\Bigg].    
\end{align*}
Since the gradient $\nabla_{Q}$ of the log-Radon-Nikodym of $\gamma_{\widehat{\theta}^{(n)}}:=\mathcal{N}\left(\widehat{\theta}^{(n)},\left(Q^{-1} + n H^{*}\right)^{-1}\right)$ with respect to $\widehat{\gamma}_{\widehat{\theta}^{(n)}}:=\mathcal{N}\left(\widehat{\theta}^{(n)},\left(Q^{-1} + n \widehat{H}_{n}\right)^{-1}\right)$ is 
\begin{align}
\nabla_{\mathcal{H}_Q}\log\frac{d\gamma_{\widehat{\theta}^{(n)}}}{d\widehat{\gamma}_{\widehat{\theta}^{(n)}}}=nQ(\widehat{H}_{n}-H^{*})(\theta-\widehat{\theta}^{(n)}),
\end{align}
applying the same arguments used in Theorem \ref{bernstein von mises}, we can bound the $KL$ in (\ref{EmpiricalLSI}) by (\ref{LaplaceRate}) evaluated in $\widehat{H}^{(n)}$ (instead of in $H^{*}$) plus the term
\begin{align}
&\frac{1}{2n\lambda_{\min}(\widehat{H}_{n})}\mathbb{E}_{\Pi(\cdot|X^{n})}\Bigg[n^{2}\left\|Q(\widehat{H}_{n}-H^{*})(\theta-\widehat{\theta}^{(n)})\right\|_{\mathcal{H}_{Q}}^{2}\Bigg]
\\ & = \frac{1}{2n\lambda_{\min}(\widehat{H}_{n})}\mathbb{E}_{\Pi(\cdot|X^{n})}\Bigg[n^{2}\left\|Q^{1/2}(\widehat{H}_{n}-H^{*})(\theta-\widehat{\theta}^{(n)})\right\|_{\mathcal{H}}^{2}\Bigg].
\end{align}
Note that we  can bound the term inside the expectation by 
\begin{align}
    n^{2}\left\|Q^{1/2}(\widehat{H}_{n}-H^{*})\right\|_{\text{op}}^{2}\left\|\theta-\widehat{\theta}^{(n)}\right\|_{\mathcal{H}}^{2}.
\end{align}
Then, since $\widehat{H}_{n}-H^{*}={F}_{n}(\widehat{\theta}^{(n)})-{F}(\widehat{\theta}^{(n)})+{F}(\widehat{\theta}^{(n)})-F(\theta^{*})$, 
applying \textbf{(BvM.1)} and \textbf{(BvM.2)} and Young's inequality, there exists a set $\Omega_{1}\subset \mathcal{H}$ of measure at least $1-\delta$ in which 
\begin{align}
&n^{2}\left\|Q^{1/2}(\widehat{H}_{n}-H^{*})\right\|_{\text{op}}^{2}\left\|\theta-\widehat{\theta}^{(n)}\right\|_{\mathcal{H}}^{2} 
\\ &\leq 4n^{2}\left(\left\|\theta^{*}-\widehat{\theta}^{(n)}\right\|^{2}\left(A^{2}+\varepsilon^{(2)}_{1}(n,\delta)^{2}\right) + \varepsilon^{(2)}_{2}(n,\delta)^{2} \right)\left\|\theta-\widehat{\theta}^{(n)}\right\|_{\mathcal{H}}^{2}.
\end{align}
Note now that conditions \textbf{(E.1)} and \textbf{(E.2)} imply that 
\begin{align}\label{newE1}
\mathbb{P}^{n}_{\theta^{*}}\Bigg(\mathbb{E}_{\Pi(\cdot| X^{n})}\Big[\left\|\theta-\widehat{\theta}^{(n)}\right\|_{\mathcal{H}}\Big]\leq \frac{ 2\sigma}{n^{\alpha}}\Bigg) \geq 1-2\delta.
\end{align}
Taking the expectation of the RHS with respect to the posterior and applying (\ref{newE1}) and \textbf{(E.2)}, we will have 
\begin{align*}
 \frac{1}{2n\lambda_{\min}(\widehat{H}_{n})}\mathbb{E}_{\Pi(\cdot|X^{n})}&\left[4n^{2}\left(\left\|\theta^{*}-\widehat{\theta}^{(n)}\right\|^{2}\left(A^{2}+\varepsilon^{(2)}_{1}(n,\delta)^{2}\right) + \varepsilon^{(2)}_{2}(n,\delta)^{2} \right)\left\|\theta-\widehat{\theta}^{(n)}\right\|_{\mathcal{H}}^{2}\right] 
\\ &\leq 
\frac{4n}{\lambda_{\min}(\widehat{H}_{n})}\left(\frac{\sigma^{4}}{n^{4\alpha}}\left(A^{2}+\varepsilon^{(2)}_{1}(n,\delta)^{2}\right) + \varepsilon^{(2)}_{2}(n,\delta)^{2} \frac{\sigma^{2}}{n^{2\alpha}}\right)
\\ &= \frac{4\sigma^{2}}{\lambda_{\min}(\widehat{H}_{n})}\left[\sigma^{2}\left(A^{2}+\varepsilon^{(2)}_{1}(n,\delta)^{2}\right)n^{1-4\alpha}+\varepsilon^{(2)}_{2}(n,\delta)^{2}n^{1-2\alpha}\right].
\end{align*}
We put  
\begin{align*}
\frac{4\sigma^{4}\varepsilon^{(2)}_{1}(n,\delta)^{2}n^{1-2\alpha}}{\lambda_{\min}(\widehat{H}_{n})}
\end{align*}
in the generic term $\mathcal{O}(1/n)$, add the remaining terms to $\mathcal{H}(n,\alpha,\delta,\widehat{H}^{(n)})$, the term analogous to (\ref{LaplaceRate}),  and, denoting by $\mathcal{K}$ the joint term, we will have 
\begin{align}
\mathcal{K}(n,\alpha,\delta,\widehat{H}^{(n)})=&\dfrac{(c_{1}\operatorname{tr}(Q)^{2}+4\sigma^{4})A^{2}}{\lambda_{\min}(\widehat{H}^{(n)})}n^{1-4\alpha} +\dfrac{(c_{2}\operatorname{tr}(Q)+4\sigma^{2})}{\lambda_{\min}(\widehat{H}^{(n)})}\varepsilon_{2}^{(2)}(n,\delta)^{2}n^{1-2\alpha},
\end{align}
and the proof holds.
\end{proof}

\subsection*{Proof of Corollary \ref{example}}

\begin{proof}
Note that the preconditioned Fréchet gradients of $F$ and  $F_{n}$ have the form
\begin{align*}
Q\nabla_{\mathcal H}F(\theta)
&= -QA(\theta-\theta^\ast), \\
\nabla_{\mathcal H}F_n(\theta)
&= -\,QA\theta + QG^\ast\Gamma^{-1}\bar X_n
= -\,QA(\theta-\theta^\ast) + \frac{1}{\sqrt n}\,QG^\ast\Gamma^{-1}\xi, 
\end{align*}
while their preconditioned Hessian coincides and are equal to $-QA$ (as an endomorphism in $\mathcal{H}$). Now, the fluctuation is independent of $\theta$, since:
\begin{equation*}\label{eq:gradfluct}
\nabla_{\mathcal H}F_n(\theta)-\nabla_{\mathcal H}F(\theta)
= \frac{1}{\sqrt n}\,G^\ast\Gamma^{-1}\xi,
\qquad \forall \theta\in\mathcal H.
\end{equation*}
If we precondition the above expression by $Q$, we get 
\begin{equation*}\label{eq:Qgradfluct}
Q\bigl(\nabla_{\mathcal H}F_n(\theta)-\nabla_{\mathcal H}F(\theta)\bigr)
= \frac{1}{\sqrt n}\,QG^\ast\Gamma^{-1}\xi,
\end{equation*}
so
\begin{align*}
\|Q(\nabla_{\mathcal H}F(\theta_1)-\nabla_{\mathcal H}F(\theta_2))\|_{\mathcal H}
&= \|QA(\theta_1-\theta_2)\|_{\mathcal H}
\le \|QA\|_{\mathrm{op}}\,\|\theta_1-\theta_2\|_{\mathcal H}. 
\end{align*}
So $L_1 := \|QA\|_{\mathrm{op}} = \|QG^\ast\Gamma^{-1}G\|_{\mathrm{op}}.$
Now, since the operator $QA$ is coercive, putting $h:=\theta-\theta^\ast$, so
\begin{align*}
-\langle Q\nabla_{\mathcal H}F(\theta),\theta^\ast-\theta\rangle_{\mathcal H}
= \langle QA h, h\rangle_{\mathcal H}, 
\end{align*}
we will have that $\mu = c$. \\ 
\\
Now, from the explicit expression for $Q\nabla_{\mathcal H}F_n(\theta)-Q\nabla_{\mathcal H}F(\theta)$ above, we will have 

\begin{align*}\label{eq:sup-trivial}
\sup_{\theta\in\mathbb B_{\mathcal H}(\theta^\ast,r)}
\|Q\nabla_{\mathcal H}F_n(\theta)-Q\nabla_{\mathcal H}F(\theta)\|_{\mathcal H}
&= \frac{1}{\sqrt n}\,\|QG^\ast\Gamma^{-1}\xi\|_{\mathcal H}.
\end{align*}

Taking $\varepsilon_1(n,\delta)=0$, we can get a $(1-\delta)$-quantile upper bound for $\|QG^\ast\Gamma^{-1}\xi\|_{\mathcal H}$, denoted by $q(\delta)$, and choose $\varepsilon_2(n,\delta)=\frac{1}{\sqrt n}\,q(\delta)$. In order to obtain the bound, it is enough to see that 
$Z:=QG^\ast\Gamma^{-1}\xi$
is a centered Gaussian random variable in $\mathcal H$, whose covariance operator is
\begin{equation*}\label{eq:covZ}
\text{Cov}(Z)
= QG^\ast\Gamma^{-1}\text{Cov}(\xi)\,\Gamma^{-1}GQ
= QAQ.
\end{equation*}
Now, since $QAQ$ is trace-class, a standard Gaussian concentration bound gives us 
\begin{equation*}\label{eq:gauss-conc}
\mathbb P\!\left(
\|Z\|_{\mathcal H}
\le \sqrt{\mathrm{tr}(QAQ)} + \sqrt{2\|QAQ\|_{\mathrm{op}}\log(1/\delta)}
\right)\ \ge\ 1-\delta \quad \forall \delta\in(0,1),
\end{equation*}
and the result follows.

\end{proof}

\section{Background on Fréchet and Malliavin derivatives}
In this section we present the definitions of the  derivatives and gradients used in the paper. We follow \cite{malliavin2} and \cite{malliavin1}.
\\[4pt]
Let $(\mathcal H,\langle\cdot,\cdot\rangle_{\mathcal H})$ be a separable real Hilbert space and let
$\mathcal H_0\subset \mathcal H$ be a Hilbert subspace continuously embedded into $\mathcal H$, i.e.
there exists $C>0$ such that $\|h\|_{\mathcal H}\le C\|h\|_{\mathcal H_0}$ for all $h\in\mathcal H_0$.
Let $U\subset \mathcal H$ be open and let $f:U\to\mathbb R$.

\begin{definition}[Fr\'echet derivative on $\mathcal H$]
We say that $f$ is \emph{Fr\'echet differentiable at $\theta\in U$ (as a map on $\mathcal H$)} if there exists
a bounded linear functional $D_{\mathcal{H}}f(\theta)\in \mathcal H^\ast$ such that
\begin{align}
\lim_{\|h\|_{\mathcal H}\to 0}
\frac{\big|f(\theta+h)-f(\theta)-Df(\theta)[h]\big|}{\|h\|_{\mathcal H}}=0.
\end{align}
When this holds, by the Riesz representation theorem there exists a unique element
$\nabla_{\mathcal H}f(\theta)\in \mathcal H$ such that
\begin{align}
D_{\mathcal{H}}f(\theta)[h]=\langle \nabla_{\mathcal H}f(\theta),h\rangle_{\mathcal H},\quad \forall h\in\mathcal H.
\end{align}
We call $\nabla_{\mathcal H}f(\theta)$ the \emph{$\mathcal H$-gradient} of $f$ at $\theta$.
\end{definition}

\begin{definition}[Fréchet derivative along $\mathcal H_0$]
We say that $f$ is \emph{Fr\'echet differentiable at $\theta\in U$ along $\mathcal H_0$} if there exists
a bounded linear functional $D_{\mathcal H_0}f(\theta)\in \mathcal H_0^\ast$ such that
\begin{align}
\lim_{\|h\|_{\mathcal H_0}\to 0}
\frac{\big|f(\theta+h)-f(\theta)-D_{\mathcal H_0}f(\theta)[h]\big|}{\|h\|_{\mathcal H_0}}=0,
\qquad h\in \mathcal H_0,
\end{align}
whenever $\theta+h\in U$. In this case there exists a unique element
$\nabla_{\mathcal H_0}f(\theta)\in \mathcal H_0$ satisfying
\begin{align}
D_{\mathcal H_0}f(\theta)[h]=\langle \nabla_{\mathcal H_0}f(\theta),h\rangle_{\mathcal H_0},
\qquad \forall h\in \mathcal H_0,
\end{align}
and we call $\nabla_{\mathcal H_0}f(\theta)$ the \emph{$\mathcal H_0$-gradient} of $f$ at $\theta$.
\end{definition}

\begin{remark}[Relation between $Df$ and $D_{\mathcal H_0}f$]
If $f$ is Fr\'echet differentiable at $\theta$ on $\mathcal H$, then it is Fr\'echet differentiable at $\theta$
along $\mathcal H_0$ and
\[
D_{\mathcal H_0}f(\theta)=Df(\theta)\circ i,
\]
where $i:\mathcal H_0\hookrightarrow \mathcal H$ is the canonical embedding.
Let $i^\ast:\mathcal H\to \mathcal H_0$ be the adjoint operator characterized by
$\langle i^\ast x,h\rangle_{\mathcal H_0}=\langle x,ih\rangle_{\mathcal H}$ for all $x\in\mathcal H$, $h\in\mathcal H_0$.
Then, whenever $\nabla_{\mathcal H}f(\theta)$ exists, we have
\[
\nabla_{\mathcal H_0}f(\theta)=i^\ast \nabla_{\mathcal H}f(\theta).
\]
\end{remark}

Assume now that $\mu_0$ is a centered Gaussian measure on $\mathcal H$ with Cameron--Martin space $\mathcal H_0$.
Let $W=\{W(h):h\in\mathcal H_0\}$ be the associated isonormal Gaussian process, i.e.
$\mathbb E[W(h)W(g)]=\langle h,g\rangle_{\mathcal H_0}$.
A \emph{smooth cylindrical functional} is a random variable of the form
\[
F=\varphi\big(W(h_1),\dots,W(h_m)\big),
\qquad h_1,\dots,h_m\in\mathcal H_0,\ \ \varphi\in C_b^\infty(\mathbb R^m).
\]
The \emph{Malliavin derivative} (or Gross--Sobolev derivative) of such $F$ is the $\mathcal H_0$-valued random variable
\[
DF=\sum_{j=1}^m \partial_j\varphi\big(W(h_1),\dots,W(h_m)\big)\,h_j \in \mathcal H_0.
\]
Equivalently, for any direction $k\in\mathcal H_0$,
\begin{equation}\label{eq:dir-mall}
D_kF:=\langle DF,k\rangle_{\mathcal H_0}
=\left.\frac{d}{d\varepsilon}\right|_{\varepsilon=0}
\varphi\big(W(h_1)+\varepsilon\langle h_1,k\rangle_{\mathcal H_0},\dots,W(h_m)+\varepsilon\langle h_m,k\rangle_{\mathcal H_0}\big).
\end{equation}
Thus $DF$ is the Riesz representer (in $\mathcal H_0$) of the directional derivative $k\mapsto D_{k}F$. More generally, if $G:\mathcal H\to\mathbb R$ is Fr\'echet differentiable along $\mathcal H_0$ and $X\sim \mu_0$,
then, under the integrability condition $\nabla_{\mathcal H_0}G(X)\in L^2(\mu_0;\mathcal H_0)$, the chain rule yields
\begin{equation}\label{eq:chain}
D\big(G(X)\big)=\nabla_{\mathcal H_0}G(X)\qquad\text{in }L^2(\Omega;\mathcal H_0).
\end{equation}
In particular, if $\mathcal H_0=\mathcal H_Q$ is the Cameron--Martin space of $\mu_0=\mathcal N(0,Q)$, and if
$G$ is Fr\'echet differentiable on $\mathcal H$ with gradient $\nabla_{\mathcal H}G$, then
$\nabla_{\mathcal H_Q}G(x)=Q\,\nabla_{\mathcal H}G(x)$ (on the natural domain), and hence
\[
D\big(G(X)\big)=\nabla_{\mathcal H_Q}G(X)=Q\,\nabla_{\mathcal H}G(X).
\]

Finally, we recall that the operator $D$ defined on smooth cylindrical functionals is closable in $L^2(\mu_0)$.
Its closure defines the Sobolev space $\mathbb D^{1,2}$, equipped with the norm
\[
\|F\|_{\mathbb D^{1,2}}^2 := \mathbb E_{\mu_0}\big[F^2\big]+\mathbb E_{\mu_0}\big[\|DF\|_{\mathcal H_0}^2\big].
\]
When $V:\mathcal H\to\mathbb R$ satisfies $V\in \mathbb D^{1,2}$, we write $DV$ for its Malliavin derivative,
and the condition $DV\in L^2(\mu_0;\mathcal H_0)$ is equivalent to $\nabla_{\mathcal H_0}V\in L^2(\mu_0;\mathcal H_0)$.

\section{Background on stochastic integration in Hilbert spaces}
In this section we introduce the necessary background about stochastic calculus in infinite dimensional Hilbert Spaces. We follow mainly the exposition in \cite{liu2015stochastic}.
\begin{definition}\textbf{(Hilbert Space-valued Martingale, \cite{liu2015stochastic})}
    \item Let $\{M_t\}_{t\ge 0}$, where 
$$
    M_t:(\Omega,F,P)\longrightarrow \mathcal{H},\quad \text{with } \mathcal{H} \text{ a separable Hilbert Space}.
$$
  It is required that $\{\mathcal{F}_t\}_{t\ge 0}$ be a filtration on $(\Omega,F,P)$
  and that the following conditions hold:
\end{definition}

\begin{itemize}
  \item $M_t$ is $\mathcal{F}_t$-measurable for every~$t$.
  \item $M_t$ is Bochner-integrable; that is, for every $t\ge 0$ we have
$$\mathbb E\left[\|M_t\|_H\right]<\infty,$$
  which is equivalent to
$$\int_{\Omega}\|M_t(\omega)\|_H\,d\mu_t(\omega)<\infty.$$
\end{itemize}

Furthermore, we assume that
$$
  \mathbb E\left[M_t \mid \mathcal{F}_s\right]=M_s
  \quad\text{almost surely,}\quad \forall\,s\in[0,t],\;t\ge 0.
$$

\begin{theorem}
    \textbf{(Doob’s maximal inequality,  \cite{liu2015stochastic})}:  
Let $\{M_t\}_{t\in[0,T]}$ be a right-continuous $\mathcal{H}$-martingale. Then
$$
\Bigl(\E\Bigl[\sup_{t\in[0,T]}\|M_t\|_H^{p}\Bigr]\Bigr)^{\!1/p}
\le \frac{p}{p-1}\,
      \sup_{t\in[0,T]}\Bigl(\E\!\left[\|M_t\|^p\right]\Bigr)^{\!1/p}
  = \frac{p}{p-1}\,
      \E\!\left[\|M_{T}\|^p\right]^{1/p}.$$
\end{theorem}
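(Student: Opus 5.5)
We prove Doob's maximal inequality by reducing it to the real-valued case, applied to the nonnegative submartingale $\|M_t\|_{\mathcal H}$. The real-valued version (discrete-time Doob $L^p$ inequality, then passage to continuous time) is standard.

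\textbf{Step 1: $\|M_t\|_{\mathcal H}$ is a submartingale.} For $s\le t$, the conditional Jensen inequality for Bochner conditional expectations gives
\[
\|M_s\|_{\mathcal H}=\|\E[M_t\mid\mathcal F_s]\|_{\mathcal H}\le \E\bigl[\|M_t\|_{\mathcal H}\mid\mathcal F_s\bigr]\quad\text{a.s.}
\]
The cleanest route is to write $\|x\|_{\mathcal H}=\sup_k\langle x,h_k\rangle_{\mathcal H}$ over a countable dense subset $\{h_k\}$ of the unit sphere, which exists by separability. For each $k$,
\[
\langle M_s,h_k\rangle_{\mathcal H}=\E\bigl[\langle M_t,h_k\rangle_{\mathcal H}\mid\mathcal F_s\bigr]\le \E\bigl[\|M_t\|_{\mathcal H}\mid\mathcal F_s\bigr],
\]
and taking the supremum over the countably many $k$ preserves the a.s.\ inequality. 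Consequently $t\mapsto \E\|M_t\|_{\mathcal H}^p$ is nondecreasing, by Jensen with the convex nondecreasing function $x\mapsto x^p$. This gives the final equality $\sup_{t\le T}\E\|M_t\|^p=\E\|M_T\|^p$.

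\textbf{Step 2: discrete-time inequality.} Fix a finite set $D=\{t_0<\dots<t_N\}\subset[0,T]$ containing $T$, and set $X_j=\|M_{t_j}\|_{\mathcal H}$ and $X^*=\max_j X_j$. Doob's weak-type bound, obtained by stopping at the first index $j$ with $X_j\ge\lambda$ and using the submartingale property, gives
\[
\lambda\,P(X^*\ge\lambda)\le \E\bigl[X_N\mathbf 1_{\{X^*\ge\lambda\}}\bigr].
\]
Multiply by $p\lambda^{p-2}$ and integrate over $\lambda\in(0,\infty)$ using Fubini. This yields
\[
\E (X^*)^p\le \frac{p}{p-1}\,\E\bigl[X_N (X^*)^{p-1}\bigr]\le \frac{p}{p-1}\,\|X_N\|_p\,\|X^*\|_p^{p-1},
\]
where the second inequality is Hölder. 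Since $X^*\le\sum_j X_j\in L^p$, the quantity $\|X^*\|_p$ is finite, and dividing gives $\|X^*\|_p\le \frac{p}{p-1}\|X_N\|_p$. We may assume $\E\|M_T\|^p<\infty$, since otherwise the claim is trivial.

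\textbf{Step 3: continuous time.} Take increasing finite sets $D_m$ whose union is $D=([0,T]\cap\mathbb Q)\cup\{T\}$. By right-continuity of paths, $\sup_{t\in[0,T]}\|M_t\|_{\mathcal H}=\sup_{t\in D}\|M_t\|_{\mathcal H}=\lim_m \max_{t\in D_m}\|M_t\|_{\mathcal H}$, and this limit is monotone. Monotone convergence then passes the uniform bound from Step 2 to the limit, and measurability of the supremum follows at the same time.

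The only delicate point is Step 1: justifying the conditional Jensen inequality for the Hilbert norm. The countable-supremum argument above settles it, so no step is a real obstacle.
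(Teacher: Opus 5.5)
Your proof is correct, and it is essentially the standard argument: the paper gives no proof of its own and simply quotes the result from \cite{liu2015stochastic}, where it is derived exactly as you do, by showing that $\|M_t\|_{\mathcal H}$ is a real submartingale (conditional Jensen for the Bochner conditional expectation) and then applying Doob's $L^p$ inequality for nonnegative real submartingales. Your Steps 2--3 just write out that real-valued inequality and the passage from finite index sets to $[0,T]$ via right-continuity, both of which the reference takes as known.
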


\begin{theorem}
 \textbf{(Integrable martingale spaces, \cite{liu2015stochastic})}: Define

$$\|M\|_{M_T}^2=\sup_{t\in[0,T]}
    \Bigl(\E\!\left[\|M_t\|^2\right]\Bigr)^{\!1/2}.$$
Then, we have the equivalency
$$\E\!\left[\|M_T\|^2\right]^{1/2}
  \le\E\!\left[\sup_{t\in[0,T]}\|M_t\|^2\right]^{1/2}
  \le 2\,\E\!\left[\|M_{T}\|^2\right]^{1/2}.$$
\end{theorem}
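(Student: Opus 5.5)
The plan is to obtain both inequalities from elementary monotonicity together with the Doob maximal inequality stated just above, taken with $p=2$. As in that result, I assume $\{M_t\}_{t\in[0,T]}$ is a right-continuous, square-integrable $\mathcal H$-martingale. Right-continuity is what makes $\sup_{t\in[0,T]}\|M_t\|_{\mathcal H}$ measurable, since it then equals the supremum over the countable set $([0,T]\cap\mathbb Q)\cup\{T\}$. Without it, the middle quantity is not well defined.

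\emph{Lower bound.} Pointwise in $\omega$ we have $\|M_T\|_{\mathcal H}^2\le\sup_{t\in[0,T]}\|M_t\|_{\mathcal H}^2$. Taking expectations and then square roots gives $\E[\|M_T\|^2]^{1/2}\le\E[\sup_{t\le T}\|M_t\|^2]^{1/2}$, with nothing further to check.

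\emph{Upper bound.} First I would justify the identity $\sup_{t\in[0,T]}\E\|M_t\|^2=\E\|M_T\|^2$, which the paper uses implicitly in the last equality of Doob's inequality. To do so, fix $s\le t$. By the martingale property and the contractivity of conditional expectation on Bochner-integrable random variables (conditional Jensen for the convex map $x\mapsto\|x\|_{\mathcal H}^2$), we get $\|M_s\|^2=\|\E[M_t\mid\mathcal F_s]\|^2\le\E[\|M_t\|^2\mid\mathcal F_s]$. Thus $\|M_t\|^2$ is a real submartingale, so $t\mapsto\E\|M_t\|^2$ is nondecreasing and its supremum is attained at $t=T$. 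Alternatively, one can expand $\|M_t\|^2=\|M_s\|^2+\|M_t-M_s\|^2+2\langle M_s,M_t-M_s\rangle$ and note that the cross term has zero expectation after conditioning on $\mathcal F_s$; this gives monotonicity directly.

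With that identity in hand, I apply Doob's maximal inequality with $p=2$, whose constant is $p/(p-1)=2$, to get $\E[\sup_{t\le T}\|M_t\|^2]^{1/2}\le 2\,\E[\|M_T\|^2]^{1/2}$. This finishes the chain and also shows that the three quantities, including $\|M\|_{M_T}$ as defined, are equivalent norms on the space of such martingales.

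The only step requiring care is the conditional Jensen inequality for Hilbert-valued conditional expectations. I would handle it by expanding in an orthonormal basis $\{e_k\}$ of $\mathcal H$. For each $k$, the scalar conditional Jensen inequality gives $\langle\E[M_t\mid\mathcal F_s],e_k\rangle^2\le\E[\langle M_t,e_k\rangle^2\mid\mathcal F_s]$. Summing over $k$ and using monotone convergence for conditional expectations then yields the Hilbert-valued bound. Everything else is a direct substitution into the results already quoted.
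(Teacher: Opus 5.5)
Your proposal is correct. It is the standard argument for this result. The paper gives no proof of its own and simply quotes \cite{liu2015stochastic}, where the equivalence follows from two facts: the pointwise bound $\|M_T\|_{\mathcal H}\le\sup_{t\le T}\|M_t\|_{\mathcal H}$, and Doob's maximal inequality at $p=2$ (constant $p/(p-1)=2$), combined with the monotonicity of $t\mapsto\E\|M_t\|^2$.

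One cosmetic point concerns the displayed definition $\|M\|_{M_T}^2=\sup_t(\E\|M_t\|^2)^{1/2}$. It is a typo for $\|M\|_{\mathcal M_T^2}:=\sup_t(\E\|M_t\|^2)^{1/2}=(\E\|M_T\|^2)^{1/2}$. Read literally, $\|M\|_{M_T}$ would equal $(\E\|M_T\|^2)^{1/4}$, which is not a norm. Your closing remark about equivalent norms should therefore refer to $\|M\|_{\mathcal M_T^2}$.
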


\begin{theorem}\label{theorem:real-valued-martingale}\textbf{(Real-valued martingales as $1$-dimensional projections of $\mathcal{H}$-valued martingales, \cite{liu2015stochastic})} 
\\ \\
Given $\Phi\in N_W^2(0,T)$, the process
$$\int_0^t\langle\Phi(s),dW^{Q}(s)\rangle_H
$$
is a real-valued martingale for every $t\in[0,T]$, and
\[
  \sup_{t\in[0,T]}
      \mathbb{E}\left[\left\|\int_{0}^{t}\langle\Phi(s),dW^{Q}(s)\rangle_H\right\|^2\right]
  =\mathbb{E}\left[\left\|\int_{0}^{T}\langle\Phi(s),dW^{Q}(s)\rangle_H\right\|^2\right].
\]    
\end{theorem}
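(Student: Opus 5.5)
The plan is to build the real-valued integral from the Hilbert-space stochastic integral by testing against $\Phi$. Along the way we get the martingale property and the isometry for simple integrands, extend both by density to all of $N_W^2(0,T)$, and finally read off the identity for the second moments from monotonicity of $t\mapsto\mathbb E[|M_t|^2]$.

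\emph{Step 1 (simple integrands).} For an elementary $\mathcal F_t$-adapted process $\Phi(s)=\sum_{j}\Phi_j\mathbf 1_{(t_j,t_{j+1}]}(s)$, with $\Phi_j$ bounded, $\mathcal F_{t_j}$-measurable and $\mathcal H$-valued, define
\[
M_t:=\sum_j\bigl\langle \Phi_j,\;W^Q_{t_{j+1}\wedge t}-W^Q_{t_j\wedge t}\bigr\rangle_{\mathcal H}.
\]
Each increment $W^Q_{t_{j+1}\wedge t}-W^Q_{t_j\wedge t}$ is centered and independent of $\mathcal F_{t_j}$. Conditioning on $\mathcal F_s$ therefore gives $\mathbb E[M_t\mid\mathcal F_s]=M_s$. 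Using the covariance structure $\mathbb E\langle a,W^Q_u-W^Q_v\rangle\langle b,W^Q_u-W^Q_v\rangle=(u-v)\langle Qa,b\rangle$ and the vanishing of cross terms, we obtain the isometry
\[
\mathbb E|M_t|^2=\mathbb E\int_0^t\|Q^{1/2}\Phi(s)\|_{\mathcal H}^2\,ds .
\]
An equivalent route is to write $\langle \Phi, dW^Q\rangle=\sum_k\sqrt{\lambda_k}\langle\Phi,e_k\rangle\,d\beta^k$ through the Karhunen--Lo\`eve expansion (\ref{karhunen-loeve}). Then one checks that the partial sums are square-integrable martingales whose limit is $M$.

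\emph{Step 2 (extension).} For $\Phi\in N_W^2(0,T)$, take elementary $\Phi^{(m)}\to\Phi$ in the norm $\bigl(\mathbb E\int_0^T\|Q^{1/2}\Phi\|^2ds\bigr)^{1/2}$. By the isometry, $M^{(m)}_t$ is Cauchy in $L^2(\Omega)$ uniformly in $t\in[0,T]$; indeed the Doob maximal inequality stated above gives uniform convergence in $L^2(\Omega;C[0,T])$. Conditional expectation is $L^2$-continuous, so $\mathbb E[M_t\mid\mathcal F_s]=\lim_m\mathbb E[M^{(m)}_t\mid\mathcal F_s]=\lim_m M^{(m)}_s=M_s$. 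Adaptedness and integrability also pass to the limit, so $M$ is a real square-integrable martingale.

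\emph{Step 3 (second moments).} Since $M$ is a square-integrable martingale, $|M_t|^2$ is a submartingale by conditional Jensen. Hence $t\mapsto\mathbb E|M_t|^2$ is nondecreasing, which also follows directly from the isometry since the integrand is nonnegative. Therefore $\sup_{t\le T}\mathbb E|M_t|^2=\mathbb E|M_T|^2$, which is the claimed identity.

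The main obstacle is Step 2. We need the density of elementary processes in $N_W^2(0,T)$ with respect to the $Q^{1/2}$-weighted norm, and we need the limit to define a continuous (or at least right-continuous) version. For density I would first truncate $\Phi$ and project onto finitely many eigen-directions $e_1,\dots,e_K$ of $Q$. This reduces the problem to the finite-dimensional case, where the standard one-dimensional approximation by step processes applies componentwise. Continuity then comes from the uniform-in-$t$ $L^2$ convergence given by Doob's inequality.
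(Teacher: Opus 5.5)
Your proposal is correct and follows essentially the same approach the paper relies on. The paper gives no proof of this statement but imports it from \cite{liu2015stochastic}, where $\int_0^t\langle\Phi(s),dW^Q(s)\rangle_{\mathcal H}$ is the real-valued special case of the Hilbert-space It\^o integral, built exactly as you do: elementary integrands, the It\^o isometry in the $\|Q^{1/2}\Phi\|_{\mathcal H}$-weighted norm, density in $N_W^2(0,T)$, completeness of continuous square-integrable martingales via Doob, and monotonicity of $t\mapsto\mathbb{E}|M_t|^2$. Your alternative Karhunen--Lo\`eve route matches the only related argument in the paper (in the proof of Theorem~\ref{mainthm:posterior_contraction}), and your phrasing there is the more accurate one: the scalar summands are orthogonal but not independent, so the martingale property of the sum comes from $L^2$-convergence of the partial-sum martingales rather than from independence.
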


\begin{theorem}\label{theorem:ito-formula}
\textbf{(Itô’s Formula in Hilbert Space)}
Let $F(t,X(t)):[0,T]\times \mathcal{H} \rightarrow \mathbb{R}$ be a sufficiently differentiable function.  
There exists a null set $N\in F$ such that, on its complement $N^{c}$ and for all $t\in[0,T]$, we have
\begin{align*}
& F(t,X(t)) = F(0,X(0))
  + \int_0^t\langle \nabla_{\mathcal{H}}F(s,X(s)),\,\sigma(s)\,dW(s)\rangle_{\mathcal{H}}
  \\
&  +\int_0^t\Bigl(\tfrac{\partial F}{\partial t}(s,X(s)) + \langle \nabla_{\mathcal{H}} 
F(s,X(s)),\,\mu(s)\rangle_{\mathcal{H}}ds \\ &+ \frac{1}{2}\operatorname{tr}\Bigl[(\sigma(s)Q^{1/2})\,D^{2}_{\mathcal{H}}F(s,X(s))\,(\sigma(s)Q^{1/2})\Bigr]\Bigr)\,ds.
\end{align*}

\end{theorem}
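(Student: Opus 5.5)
The plan is to reduce the Hilbert-space statement to the classical finite-dimensional Itô formula via Galerkin projections, and then pass to the limit term by term. Throughout I read $X$ as an Itô process
$$X(t)=X(0)+\int_0^t\mu(s)\,ds+\int_0^t\sigma(s)\,dW(s),$$
where $W$ is a $Q$-Wiener process, $\mu$ is progressively measurable and a.s. Bochner integrable, and $\sigma$ is an $L_2^0$-valued integrand, i.e.\ $\sigma Q^{1/2}$ is Hilbert--Schmidt and square integrable. I take ``sufficiently differentiable'' to mean that $F$, $\partial_t F$, $\nabla_{\mathcal H}F$ and $D^2_{\mathcal H}F$ are continuous and bounded on bounded sets.

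First, I fix the eigenbasis $\{e_k\}$ of $Q$ and let $P_N$ denote the orthogonal projection onto $\mathrm{span}\{e_1,\dots,e_N\}$. Using the Karhunen--Loève expansion (\ref{karhunen-loeve}), the process $X_N:=P_NX$ is a genuine $\mathbb R^N$-valued Itô process. Its drift is $P_N\mu$, and its diffusion is driven by the finitely many Brownian motions $\beta^1,\dots,\beta^N$ together with the projected noise $P_N\sigma(s)\,dW^{(N)}$, where $W^{(N)}$ is the truncation of $W$ to its first $N$ modes. The classical multidimensional Itô formula applied to $F(t,X_N(t))$ then gives exactly the stated identity, with $X$ replaced by $X_N$, $\mu$ by $P_N\mu$, and $\sigma$ by $P_N\sigma$ restricted to the first $N$ noise modes. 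In particular, the trace term becomes $\tfrac12\mathrm{tr}[(P_N\sigma Q_N^{1/2})^*D^2F(P_N\sigma Q_N^{1/2})]$, a finite sum.

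Second, I pass to the limit $N\to\infty$. I first localize with the stopping times $\tau_R=\inf\{t:\|X(t)\|_{\mathcal H}>R\}\wedge T$, so that on $[0,\tau_R]$ all derivatives of $F$ are bounded and uniformly continuous along the relevant trajectories. The individual terms are then handled as follows.
\begin{itemize}
\item Since $P_NX(t)\to X(t)$ for every $t$ and $F$ is continuous, the left-hand side converges pathwise.
\item The drift term converges by dominated convergence, because $\langle\nabla F(s,X_N),P_N\mu\rangle\to\langle\nabla F(s,X),\mu\rangle$ pointwise and is dominated by $C_R\|\mu(s)\|$.
\item For the stochastic integral, I use the Itô isometry: $\mathbb E\int_0^{\tau_R}\|(\sigma^*\nabla F)(X)-(\sigma_N^*\nabla F)(X_N)\|^2_{Q}\,ds\to0$, again by dominated convergence using the Hilbert--Schmidt bound on $\sigma Q^{1/2}$. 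This gives convergence in $L^2$, and hence a.s.\ along a subsequence.
\end{itemize}

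The main obstacle is the trace term, because it is an infinite sum and convergence must be uniform enough to pass inside the time integral. I would write
$$\mathrm{tr}[(\sigma Q^{1/2})^*D^2F(\sigma Q^{1/2})]=\sum_k\langle D^2F\,\sigma Q^{1/2}e_k,\sigma Q^{1/2}e_k\rangle.$$
Each summand is bounded by $\|D^2F\|_{\mathrm{op}}\|\sigma Q^{1/2}e_k\|^2$, and summing over $k$ gives at most $\|D^2F\|_{\mathrm{op}}\|\sigma Q^{1/2}\|_{HS}^2$. This is an integrable dominating sequence, so dominated convergence in both $k$ and $s$ applies, using continuity of $D^2F$ in operator norm to handle the replacement of $X_N$ by $X$.

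Finally, I pass to a subsequence along which all terms converge a.s.\ simultaneously. Letting $R\to\infty$, for which $\tau_R\uparrow T$ a.s.\ by path continuity, gives the identity for all $t\in[0,T]$ outside a single null set $N$. The identity holds simultaneously for all $t$ because both sides are continuous in $t$, so it suffices to obtain it for rational $t$.
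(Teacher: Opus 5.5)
The paper gives no proof of this statement. It is quoted as background, and the standard proof it relies on (e.g.\ \cite{da2014stochastic}) runs as follows: reduce to elementary integrands, Taylor-expand $F$ to second order along a partition of $[0,t]$, and identify the limit of $\sum_j\langle D^2_{\mathcal H}F(t_j,X(t_j))\,\sigma\Delta W_j,\sigma\Delta W_j\rangle_{\mathcal H}$ with the trace integral via a quadratic-variation lemma for $Q$-Wiener increments.

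Your Galerkin reduction to the classical formula is a genuinely different and viable route, but Step~1 misdescribes the projected process. Since $P_N$ acts on the state space, $X_N=P_NX$ solves $X_N(t)=P_NX(0)+\int_0^tP_N\mu\,ds+\int_0^tP_N\sigma\,dW$ with the \emph{full} noise. Unless $P_N\sigma(s)(I-P_N)=0$ (true for the scalar coefficient $\sqrt{2/n}$ of the paper's Langevin equation, false in general), its martingale part $\sum_k\int_0^t P_N\sigma(s)Q^{1/2}e_k\,d\beta^k_s$ involves modes $k>N$. So $X_N$ is not driven by $\beta^1,\dots,\beta^N$ or by $W^{(N)}$, and its trace term contains $Q$, not $Q_N$.

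If you instead truncate the noise, the approximant is no longer $P_NX$. Step~2 then loses both the pathwise convergence $P_NX(t)\to X(t)$ and the bound $\|X_N\|_{\mathcal H}\le\|X\|_{\mathcal H}$ that lets the single stopping time $\tau_R$ localize every approximant.

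The repair is easy. Keep $X_N=P_NX$ and apply the It\^o formula for $\mathbb R^N$-valued continuous semimartingales (finitely many driving Brownian motions are not needed) to $f_N(t,y)=F(t,\sum_{i\le N}y_ie_i)$. Use $d\langle X_N^i,X_N^j\rangle_s=\langle\sigma Q\sigma^*e_i,e_j\rangle_{\mathcal H}\,ds$. With $M_N=\int_0^\cdot P_N\sigma\,dW$, identify $\sum_i\int_0^t\partial_if_N\,dM_N^i$ with $\int_0^t\langle P_N\nabla_{\mathcal H}F(s,X_N(s)),\sigma(s)\,dW(s)\rangle_{\mathcal H}$. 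The resulting trace term is $\tfrac12\operatorname{tr}[(P_N\sigma Q^{1/2})^*D^2_{\mathcal H}F(s,X_N)(P_N\sigma Q^{1/2})]$. It is finite in state coordinates but an infinite series over noise modes. Your dominated-convergence argument in $k$ is exactly what passes it to the limit, since $P_N\sigma Q^{1/2}e_k\to\sigma Q^{1/2}e_k$ and each summand is bounded by $C_R\|\sigma Q^{1/2}e_k\|_{\mathcal H}^2$.

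With that correction your argument is complete, and the two routes trade off as follows. Yours pushes all genuinely stochastic content into the finite-dimensional formula. The infinite-dimensional work is then only dominated convergence, driven by $\|P_N\|_{\mathrm{op}}\le1$ and by the Hilbert--Schmidt property of $\sigma Q^{1/2}$, which dominates both the trace series and the stochastic integrands. It needs only continuity of $\partial_tF$, $\nabla_{\mathcal H}F$, $D^2_{\mathcal H}F$ and their boundedness on bounded sets. That is weaker than the uniform continuity on bounded sets assumed in the Taylor-expansion proof. The textbook proof, in turn, is intrinsic: it uses no basis and no semimartingale calculus in $\mathbb R^N$. Its quadratic-variation lemma also shows directly why the correction term is $\tfrac12\operatorname{tr}[(\sigma Q^{1/2})^*D^2_{\mathcal H}F(\sigma Q^{1/2})]$.
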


\section{Technical results about Gaussian Measures}
We present here some now classical results on the construction of infinite-dimensional probability measures from the centered Gaussian reference measure with covariance operator $Q$, $\mu:=\mathcal{N}_{0,Q}$. We denote by $L_{1}^{+}(\mathcal{H})$ the space of linear positive definite trace-class operators over $\mathcal{H}$. The results are taken from \cite{daprato}.  

\begin{theorem}[\textbf{Fernique, \cite{daprato}}]\label{thm:fernique}
There exists $\alpha>0$ such that
\[
  \int_H \exp\bigl[\alpha\,\|x\|_\mathcal{H}^2\bigr]\,d\mu(x)\;<\;\infty.
\]
\end{theorem}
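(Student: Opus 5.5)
The plan is to exploit the Hilbert-space structure directly through the Karhunen--Loève expansion, rather than to reproduce Fernique's general rotation-invariance argument for Banach spaces. Let $\{e_k\}_{k\ge1}$ be an orthonormal eigenbasis of $Q$ with eigenvalues $\mu_1\ge\mu_2\ge\dots\ge0$, so that $\sum_k\mu_k=\operatorname{tr}(Q)<\infty$ and $\mu_1=\|Q\|_{\text{op}}$. Under $\mu=\mathcal N(0,Q)$, the coordinates $\langle x,e_k\rangle_{\mathcal H}$ are independent centered real Gaussians with variances $\mu_k$. Hence, $\mu$-almost surely,
\[
\|x\|_{\mathcal H}^2=\sum_{k\ge1}\mu_k\,\xi_k^2,\qquad \xi_k\ \text{i.i.d. }\mathcal N(0,1).
\]
The problem thus reduces to computing an exponential moment of a weighted sum of independent $\chi^2_1$ variables.

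Fix $\alpha>0$ with $2\alpha\mu_1<1$; to get a clean bound below I will actually take $\alpha\le 1/(4\|Q\|_{\text{op}})$. For the partial sums $S_N:=\sum_{k\le N}\mu_k\xi_k^2$, independence and the one-dimensional identity $\mathbb E[e^{s\xi^2}]=(1-2s)^{-1/2}$ for $s<1/2$ give
\[
\mathbb E\bigl[e^{\alpha S_N}\bigr]=\prod_{k\le N}(1-2\alpha\mu_k)^{-1/2}.
\]
Since $S_N\uparrow\|x\|^2_{\mathcal H}$, the monotone convergence theorem yields
\[
\int_{\mathcal H}e^{\alpha\|x\|_{\mathcal H}^2}\,d\mu(x)=\prod_{k\ge1}(1-2\alpha\mu_k)^{-1/2}.
\]

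It remains to show that this infinite product is finite. Taking logarithms, it equals $\exp\bigl(-\tfrac12\sum_k\log(1-2\alpha\mu_k)\bigr)$. By the elementary inequality $-\log(1-u)\le 2u$ for $u\in[0,1/2]$, and because $2\alpha\mu_k\le 2\alpha\mu_1\le1/2$ for every $k$, the exponent is at most $2\alpha\sum_k\mu_k=2\alpha\operatorname{tr}(Q)<\infty$. This gives the explicit bound
\[
\int_{\mathcal H} e^{\alpha\|x\|^2_{\mathcal H}}\,d\mu(x)\le e^{2\alpha\operatorname{tr}(Q)}\qquad\text{for all }\alpha\le\frac{1}{4\|Q\|_{\text{op}}}.
\]

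The only step needing care is the first one: justifying that the coordinates are independent $\mathcal N(0,\mu_k)$ and that the expansion represents $\|x\|^2_{\mathcal H}$ almost surely. This follows from the definition of a Gaussian measure through its characteristic functional $e^{-\frac12\langle Qh,h\rangle}$, since jointly Gaussian coordinates with diagonal covariance are independent, together with Parseval's identity. It is the same Karhunen--Loève representation already used for $W^Q_t$ in the proof of Theorem~\ref{mainthm:posterior_contraction}. Once this is in place, the rest is the routine computation above, and the trace-class property of $Q$ is exactly what makes the product converge.
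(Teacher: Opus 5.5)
Your argument is correct. The paper has no proof of its own to compare against: the statement is simply quoted from \cite{daprato} in the appendix on Gaussian measures.

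What you give is the standard Hilbert-space proof. You diagonalize $Q$ and use independence of the coordinates $\langle x,e_k\rangle_{\mathcal H}\sim\mathcal N(0,\mu_k)$. You then compute $\mathbb E[e^{\alpha S_N}]$ exactly and pass to the limit by monotone convergence. This replaces Fernique's original rotation-invariance and tail-doubling argument.

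Compared with Fernique's argument, your route gives more information:
\begin{itemize}
\item the exact value $\int_{\mathcal H}e^{\alpha\|x\|_{\mathcal H}^2}\,d\mu(x)=\prod_k(1-2\alpha\mu_k)^{-1/2}$;
\item the sharp range $\alpha<1/(2\|Q\|_{\text{op}})$, since for $2\alpha\mu_1\ge 1$ the bound $e^{\alpha\|x\|^2_{\mathcal H}}\ge e^{\alpha\mu_1\xi_1^2}$ already forces divergence;
\item an explicit constant such as $e^{2\alpha\operatorname{tr}(Q)}$.
\end{itemize}
It also makes transparent that the trace-class property is exactly what is needed. What it gives up is generality: it relies on the spectral decomposition of the compact operator $Q$ and on Parseval. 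Fernique's theorem, by contrast, holds for centered Gaussian measures on arbitrary separable Banach spaces, where no such diagonalization is available.

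One detail should be made explicit. If $\ker Q\neq\{0\}$, you must complete the eigenvectors by an orthonormal basis of $\ker Q$, whose coordinates are a.s.\ zero, so that Parseval really gives $S_N\uparrow\|x\|^2_{\mathcal H}$ for every $x$. Under the paper's standing assumption that $Q$ is positive this is moot.
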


\begin{theorem}(\textbf{Cameron-Martin, \cite{daprato}})
Let $\mu=N_{0,Q}$ and $\nu=N_{a, Q}$ on $(\mathcal{H}, \mathscr{B}(\mathcal{H}))$, where $a \in \mathcal{H}$ and $Q \in L_{1}^{+}(\mathcal{H})$.
Then, 
\begin{description}
    \item If $a \notin \mathcal{H}_{Q}$ then $\mu$ and $\nu$ are singular.\\
 \item If $a \in \mathcal{H}_{Q}$ then $\mu$ and $\nu$ are equivalent.\\
 \item If $\mu$ and $\nu$ are equivalent the density $\frac{d\nu}{d \mu}$ is given by
\end{description}
\begin{equation*}
\frac{d\nu}{d \mu}(\theta)=\exp \left\{-\frac{1}{2}\|a\|^{2}_{\mathcal{H}_{Q}}+\langle a,\theta \rangle_{\mathcal{H}_{Q}}\right\}, \quad \theta \in \mathcal{H}.
\end{equation*}
\end{theorem}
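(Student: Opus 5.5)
The plan is to reduce everything to coordinates in the eigenbasis of $Q$ and then invoke Kakutani's dichotomy for infinite products of one-dimensional Gaussians. Let $\{e_k\}_{k\ge1}$ be an orthonormal eigenbasis of $Q$ with eigenvalues $\lambda_k>0$, and write $x_k:=\langle x,e_k\rangle_{\mathcal H}$ and $a_k:=\langle a,e_k\rangle_{\mathcal H}$. Under $\mu=N_{0,Q}$ the coordinates $(x_k)$ are independent with $x_k\sim N(0,\lambda_k)$. Under $\nu=N_{a,Q}$ they are independent with $x_k\sim N(a_k,\lambda_k)$. Since $x\mapsto(x_k)_k$ is a measurable isomorphism onto its image (separability of $\mathcal H$), $\mu$ and $\nu$ are identified with the product measures $\bigotimes_k N(0,\lambda_k)$ and $\bigotimes_k N(a_k,\lambda_k)$ on $\mathbb R^{\mathbb N}$. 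Note that $\|a\|_{\mathcal H_Q}^2=\sum_k a_k^2/\lambda_k$, with the convention that this sum is infinite when $a\notin\mathcal H_Q$.

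\textbf{Dichotomy.} The one-dimensional factors are mutually equivalent, with Hellinger affinity
\[
\rho_k=\int\sqrt{dN(0,\lambda_k)\,dN(a_k,\lambda_k)}=\exp\!\bigl(-a_k^2/(8\lambda_k)\bigr).
\]
Kakutani's theorem states that two such product measures are either equivalent or mutually singular, and that they are equivalent iff $\prod_k\rho_k>0$. Here $\prod_k\rho_k>0$ iff $\sum_k a_k^2/\lambda_k<\infty$, that is, iff $a\in\mathcal H_Q$. This gives the first two items at once.

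\textbf{Density.} For the explicit density, take $a\in\mathcal H_Q$ and compute the density of the first $N$ marginals:
\[
\frac{d\nu_N}{d\mu_N}(x)=\exp\Bigl\{\sum_{k\le N}\frac{a_kx_k}{\lambda_k}-\frac12\sum_{k\le N}\frac{a_k^2}{\lambda_k}\Bigr\}=:\rho_N(x).
\]
Under $\mu$, $(\rho_N)_N$ is a positive martingale with respect to the filtration generated by $x_1,\dots,x_N$. The Gaussian sum $S_N:=\sum_{k\le N}a_kx_k/\lambda_k$ has variance $\sum_{k\le N}a_k^2/\lambda_k$, which is bounded because $a\in\mathcal H_Q$. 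Hence $S_N$ converges in $L^2(\mu)$ and $\mu$-a.s. to a centered Gaussian variable $W_a$ with variance $\|a\|_{\mathcal H_Q}^2$. This $W_a$ is precisely the meaning of $\langle a,\theta\rangle_{\mathcal H_Q}$ for $\mu$-a.e.\ $\theta$ (the white-noise map). Next, $\mathbb E_\mu[\rho_N^2]=\exp\{\sum_{k\le N}a_k^2/\lambda_k\}$ is bounded in $N$, so the martingale is $L^2$-bounded and hence uniformly integrable. It therefore converges in $L^1(\mu)$ to $\rho:=\exp\{-\tfrac12\|a\|^2_{\mathcal H_Q}+W_a\}$, and $\int_A\rho\,d\mu=\nu(A)$ holds for every cylinder set $A$. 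A $\pi$-$\lambda$ argument then extends this identity to all of $\mathcal B(\mathcal H)$, which gives $d\nu/d\mu=\rho$.

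\textbf{Main obstacle.} The main subtlety is interpretive: $\langle a,\theta\rangle_{\mathcal H_Q}$ is not a continuous functional of $\theta\in\mathcal H$ unless $a\in Q\mathcal H$. It must therefore be defined as the $\mu$-a.s.\ and $L^2(\mu)$ limit $W_a$ above, and one has to check that this limit does not depend on the chosen basis; this follows from the isometry $\mathcal H_Q\ni a\mapsto W_a\in L^2(\mu)$. If one prefers to avoid citing Kakutani in the singular case, I would argue directly. When $\sum_k a_k^2/\lambda_k=\infty$, choose finite blocks of indices on which the normalized statistic $T_N:=S_N/\bigl(\sum_{k\le N}a_k^2/\lambda_k\bigr)$ tends to $0$ in $\mu$-probability and to $1$ in $\nu$-probability; both follow from Chebyshev's inequality, since $\mathrm{Var}(T_N)\to0$ under either measure. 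Passing to a subsequence along which the convergence is almost sure under both measures, the set $\{T_N\to0\}$ has full $\mu$-measure and $\nu$-measure zero, so $\mu\perp\nu$.
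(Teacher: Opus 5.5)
Your proposal is correct and takes essentially the same route as the standard proof in \cite{daprato}, which the paper cites without proving the result itself. That proof applies Kakutani's dichotomy to $\bigotimes_k N(0,\lambda_k)$ and $\bigotimes_k N(a_k,\lambda_k)$ with Hellinger affinities $e^{-a_k^2/(8\lambda_k)}$, identifies $d\nu/d\mu$ as the $L^1(\mu)$-limit of the finite-dimensional densities, and reads $\langle a,\theta\rangle_{\mathcal H_Q}$ as the white-noise functional $W_a$, exactly as you do; your $L^2$-bounded martingale argument and the separating statistic $T_N$ are valid self-contained substitutes for the corresponding parts of Kakutani's theorem.
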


\begin{theorem}(\textbf{Feldman-Hajék, \cite{daprato}})
Let $Q, R \in L_{1}^{+}(\mathcal{H})$ be such that $[Q, R]:=Q R-R Q=0$. Let $\mu=N_{Q}$ and $\nu=N_{R}$. Then $\mu$ and $\nu$ are equivalent if and only if
\begin{align}
\sum_{k=1}^{\infty} \frac{\left(\lambda_{k}-r_{k}\right)^{2}}{\left(\lambda_{k}+r_{k}\right)^{2}}<\infty   
\end{align}

If $\mu$ and $\nu$ are not equivalent they are singular.
\end{theorem}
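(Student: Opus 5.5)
The plan is to reduce the statement to Kakutani's dichotomy for infinite product measures. Since $Q$ and $R$ are self-adjoint, positive, trace-class and commute, they can be simultaneously diagonalized. So there is an orthonormal basis $\{e_k\}_{k\ge1}$ of $\mathcal H$ with $Qe_k=\lambda_k e_k$ and $Re_k=r_k e_k$, where $\lambda_k,r_k>0$ by positive definiteness.

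\textbf{Step 1: reduction to product measures.} Consider the coordinate map $\Phi:\mathcal H\to\mathbb R^{\mathbb N}$, $x\mapsto(\langle x,e_k\rangle_{\mathcal H})_k$. It is a measurable bijection onto $\ell^2$, with measurable inverse on $\ell^2$, and $\ell^2$ is a Borel subset of $\mathbb R^{\mathbb N}$. Under $\Phi$, $\mu=\mathcal N(0,Q)$ is pushed to $\bigotimes_k \mathcal N(0,\lambda_k)$ and $\nu$ to $\bigotimes_k\mathcal N(0,r_k)$. This holds because the coordinates are independent centered Gaussians with the stated variances, which one checks on cylinder sets via characteristic functions. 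Both product measures give full mass to $\ell^2$, since $\sum\lambda_k,\sum r_k<\infty$. Hence equivalence and singularity of $\mu,\nu$ on $\mathcal H$ coincide with those of the product measures on $\mathbb R^{\mathbb N}$.

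\textbf{Step 2: Kakutani's theorem.} The one-dimensional factors $\mathcal N(0,\lambda_k)$ and $\mathcal N(0,r_k)$ are mutually equivalent. Kakutani's theorem therefore gives the following dichotomy. The products are equivalent if $\prod_k H_k>0$, and mutually singular if $\prod_k H_k=0$, where $H_k=\int\sqrt{\frac{d\mathcal N(0,r_k)}{d\mathcal N(0,\lambda_k)}}\,d\mathcal N(0,\lambda_k)$ is the Hellinger affinity. A direct Gaussian integral gives
\[
H_k^2=\frac{2\sqrt{\lambda_k r_k}}{\lambda_k+r_k}\in(0,1].
\]

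\textbf{Step 3: comparing the series.} Set $a=\lambda_k$ and $b=r_k$. Then $1-H_k^2=\frac{(\sqrt a-\sqrt b)^2}{a+b}$, and
\[
\frac{(a-b)^2}{(a+b)^2}=\frac{(\sqrt a-\sqrt b)^2}{a+b}\cdot\frac{(\sqrt a+\sqrt b)^2}{a+b}.
\]
The second factor lies in $[1,2]$. Hence $\sum_k (\lambda_k-r_k)^2/(\lambda_k+r_k)^2<\infty$ if and only if $\sum_k(1-H_k^2)<\infty$.

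It remains to show that $\sum_k(1-H_k^2)<\infty$ holds if and only if $\prod_k H_k>0$, which follows from elementary facts about products of numbers in $(0,1]$. First, $1-H_k\le 1-H_k^2\le 2(1-H_k)$, so the two series $\sum_k(1-H_k)$ and $\sum_k(1-H_k^2)$ converge together. Next, $-\log x\ge 1-x$ on $(0,1]$, so a finite $\sum_k(-\log H_k)$ forces a finite $\sum_k(1-H_k)$. Conversely, if $\sum_k(1-H_k)<\infty$ then $H_k\to1$, so eventually $-\log H_k\le 2(1-H_k)$, and $\sum_k(-\log H_k)<\infty$.

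Combining the steps gives the equivalence criterion. The "otherwise singular" clause is exactly the other half of Kakutani's dichotomy.

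\textbf{Main obstacle.} The only delicate point is Step 1: justifying rigorously that the Gaussian measures on $\mathcal H$ correspond to product measures on $\mathbb R^{\mathbb N}$, and that absolute continuity transfers through $\Phi$. This uses that $\Phi(\mathcal H)=\ell^2$ is Borel with full measure under both products. Kakutani's theorem itself will be quoted rather than reproved.
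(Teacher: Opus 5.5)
The paper gives no proof of its own; it quotes this result from Da Prato. Your argument is correct and is essentially the standard proof given there: a common eigenbasis, transfer to the product measures $\bigotimes_k\mathcal N(0,\lambda_k)$ and $\bigotimes_k\mathcal N(0,r_k)$ on $\mathbb R^{\mathbb N}$, and Kakutani's dichotomy with Hellinger affinities $H_k^2=2\sqrt{\lambda_k r_k}/(\lambda_k+r_k)$. Your factor $(\sqrt{\lambda_k}+\sqrt{r_k})^2/(\lambda_k+r_k)$ is exactly $1+H_k^2$, so in fact $(\lambda_k-r_k)^2/(\lambda_k+r_k)^2=1-H_k^4$; you are also right to rely on positive definiteness of $Q$ and $R$, since without it the stated criterion fails (e.g.\ $\lambda_1=0<r_1$ with all other eigenvalues equal gives a finite series but singular measures).
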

Note that this condition is equivalent to to having 
\begin{align*}
\sum_{k=1}^{\infty} \left(\frac{\lambda_{k}}{r_{k}}-1\right)^{2}<\infty, \qquad \lambda_{k}= 0 \iff r_{k}= 0.
\end{align*}

\end{document}